\documentclass[compsoc,conference,a4paper,10pt]{IEEEtran}
\IEEEoverridecommandlockouts
\usepackage[utf8]{inputenc} 
\usepackage{amsmath}
\usepackage{hyperref}    
\usepackage{url}            
\usepackage{nicefrac}
\usepackage{booktabs}     
\usepackage{nicefrac}       
\usepackage{xcolor}   
\usepackage[T1]{fontenc}
\usepackage{graphicx}

\usepackage{subcaption}

\usepackage{algorithm}
\usepackage{algpseudocode}
\usepackage{stmaryrd}
\usepackage{enumitem}
\usepackage{url}
\usepackage{multirow}
\usepackage{amsmath,amssymb,amsthm}
\usepackage{mathrsfs}
\usepackage{natbib}
\usepackage{caption}
\usepackage{tabularx} 
\usepackage{color}
\usepackage[para]{threeparttable}
\usepackage{soul}
\usepackage{pifont} 
\usepackage{xspace}
\usepackage{epstopdf}
\usepackage{amssymb}
\usepackage{tikz}
\usepackage{arydshln}
\usepackage[dvipsnames]{xcolor}
\usepackage[svgnames]{xcolor}

\newcolumntype{L}{>{\raggedright\arraybackslash}X}

\newtheorem{theorem}{Theorem}
\newtheorem{lemma}[theorem]{Lemma}
\newtheorem{corollary}[theorem]{Corollary}
\newtheorem{definition}{Definition}

\usepackage{array}    

\def\eg{\emph{e.g.}}
\def\ie{\emph{i.e.}}
\def\cf{\emph{c.f.}}
\newcommand{\etal}{\textit{et al.}\xspace}

\newcommand{\folktables}{$\mathsf{FolkTables}$}
\newcommand{\femnist}{$\mathsf{FEMNIST}$}
\newcommand{\cifar}{$\mathsf{CIFAR10}$}
\newcommand{\GTSRB}{$\mathsf{GTSRB}$}

\newcommand{\scheme}{\mathcal{E}}
\newcommand{\keygen}{\textsf{KeyGen}}
\newcommand{\enc}{\textsf{Enc}}
\newcommand{\dec}{\textsf{Dec}}
\newcommand{\eval}{\textsf{Eval}}

\newcommand{\pk}{\textsf{pk}}
\newcommand{\sk}{\textsf{sk}}
\newcommand{\evk}{\textsf{evk}}

\newcommand{\redcross}{\textcolor{red}{\ding{53}}}

\newcommand{\thetaJt}{\theta_j^t}
\newcommand{\thetaTplusOne}{\theta^{t+1}}

\newcommand{\DeltaJt}{\Delta_j^t}
\newcommand{\PJt}{P_j^t}
\newcommand{\thetaPrimeJ}{\theta'_j}

\newcommand{\thetaTwoPrimeTplusOne}{\theta^{t+1}}
\newcommand{\STplusOne}{S^{t+1}}
\newcommand{\SinvTplusOne}{S_{\text{inv}}^{t+1}}
\newcommand{\thetaNewJ}{\theta_j^{t+1}}
\newcommand{\FilterP}{\mathsf{Filter}_P}
\newcommand{\EncInv}{\mathsf{FHEInverse}} 
\newcommand{\Boot}{\mathsf{Boot}}     
\newcommand{\LocalUpdate}{\mathsf{LocalUpdate}}
\newcommand{\ThKeyGen}{\mathsf{Th.KeyGen}}
\newcommand{\cD}{\mathcal{D}}

\DeclareMathOperator{\E}{\mathbb{E}}
\newcommand{\norm}[1]{\left\lVert#1\right\rVert}

\newcommand{\fedavg}{$\mathsf{FedAvg}$}

\newcommand{\acsincome}{$\mathsf{ACSIncome}$}

\begin{document}

\title{Robust Federated Learning via Byzantine Filtering over Encrypted Updates}

\author{
    \IEEEauthorblockN{
        Adda Akram Bendoukha\IEEEauthorrefmark{1},
        Aymen Boudguiga\IEEEauthorrefmark{2},
        Nesrine Kaaniche\IEEEauthorrefmark{1}, \\
        Renaud Sirdey\IEEEauthorrefmark{2},
        Didem Demirag\IEEEauthorrefmark{3} and
        Sébastien Gambs\IEEEauthorrefmark{3}
    }
    \IEEEauthorblockA{
        \IEEEauthorrefmark{1}Samovar, Télécom SudParis, Institut Polytechnique de Paris, France\\
        Email: \{adda-akram.bendoukha, kaaniche.nesrine\}@telecom-sudparis.eu
    }
    \IEEEauthorblockA{
        \IEEEauthorrefmark{2}CEA-List, Université Paris-Saclay, France\\
        Email: \{aymen.boudguiga, renaud.sirdey\}@cea.fr
    }
    \IEEEauthorblockA{
        \IEEEauthorrefmark{3}Université du Québec à Montréal, Canada\\
        Email: \{demirag.didem, gambs.sebastien\}@uqam.ca
    }
}

\maketitle

\begin{abstract}
Federated Learning (FL) aims to train a collaborative model while preserving data privacy. 
However, the distributed nature of this approach still raises privacy and security issues, such as the exposure of sensitive data due to inference attacks and the influence of Byzantine behaviors on the trained model. 
In particular, achieving both secure aggregation and Byzantine resilience remains challenging, as existing solutions often address these aspects independently. 
In this work, we propose to address these challenges through a novel approach that combines homomorphic encryption for privacy-preserving aggregation with property-inference-inspired meta-classifiers for Byzantine filtering.
First, following the property-inference attacks blueprint, we train a set of filtering meta-classifiers on labeled shadow updates, reproducing a diverse ensemble of Byzantine misbehaviors in FL, including backdoor, gradient-inversion, label-flipping and shuffling attacks. 
The outputs of these meta-classifiers are then used to cancel the Byzantine encrypted updates by reweighting.
Second, we propose an automated method for selecting the optimal kernel and the dimensionality hyperparameters with respect to homomorphic inference, aggregation constraints and efficiency over the CKKS cryptosystem.
Finally, we demonstrate through extensive experiments the effectiveness of our approach against Byzantine participants on the \femnist{}, \cifar{}, \GTSRB{}, and \acsincome{} benchmarks. 
More precisely, our SVM filtering achieves accuracies between $90$\% and $94$\% for identifying Byzantine updates at the cost of marginal losses in model utility and encrypted inference runtimes ranging from $6$ to $24$ seconds and from $9$ to $26$ seconds for an overall aggregation.
\end{abstract}

\begin{IEEEkeywords}
Federated Learning, Fully Homomorphic Encryption, Byzantine Robustness
\end{IEEEkeywords}

\section{Introduction}  
\label{sec:introduction} 
 
Federated Learning (FL)~\cite{FedAvg2017} is a well-established framework for collaboratively training machine learning models. 
In FL, individual workers, termed clients, train a shared model locally on their private datasets and then delegate the model aggregation to a central server. 
In a nutshell, starting from an initial common model, the workers use their local training data to update the model locally at each training round. 
Then, these updates are transmitted and aggregated by the server, resulting in a new model that forms the input for the next round. 
At the end of the training, the resulting model is expected to display a predictive performance comparable to that of an ideal model trained on the union of the workers' datasets. 
The distributed nature of FL, while providing some form of privacy protection and communication efficiency, induces significant trust challenges. 
Indeed, transmitted updates can be exploited to perform gradient inversion attacks~\cite{Nasr_2019, lyu2022privacy, Rodr_guez_Barroso_2023} or reconstruct sensitive information about local datasets~\cite{parisot2021propertyinferenceattacksconvolutional, kerkouche2023clientspecificpropertyinferencesecure, bendoukha2025}. 
Two main encryption techniques have been proposed to protect workers' local updates, namely Secure Multiparty Computation (MPC)~\cite{li2020privacy, byrd2020differentially} and Fully Homomorphic Encryption (FHE)~\cite{9682053, sébert2022protecting}. 
MPC ensures the collaborative computation of a shared function over local updates, whereas FHE enables the processing of encrypted updates without decryption.  
Recent studies have shown that FHE offers promising performance for privacy-preserving aggregation while supporting complex aggregation rules~\cite{stan2022, EURECOM+6974, bendoukha2025}.
Although encryption techniques protect workers' updates during the training phase, they do not mitigate post-deployment privacy attacks, such as membership inference, attribute reconstruction, and model inversion~\cite{shokri2017membership, rigaki2020survey,lyu2022privacy}. 
Nonetheless, Differential Privacy (DP) is a well-established countermeasure against these attacks and is usually co-implemented with encryption techniques in FL.

However, implementing secure aggregation in FL gives rise to a critical integrity-privacy trade-off. 
On the one hand, FL must protect the workers' updates from honest-but-curious entities (\emph{i.e.}, the server and other workers). 
On the other hand, FL must ensure the integrity of the models' updates, especially against Byzantine workers, which aim to disrupt the model convergence by providing faulty updates~\cite{blanchard2017machine} or introducing backdoors~\cite{li2023learningbackdoorfederatedlearning, nguyen2023backdoorattacksdefensesfederated}.
Secure aggregation techniques often overlook Byzantine resilience, generally considering it as a separate constraint~\cite{EURECOM+6974}. 
Conversely, many Byzantine-resilient aggregation schemes \cite{blanchard2017machine, lin2019freeridersfederatedlearningattacks, fraboni2021freeriderattacksmodelaggregation} do not integrate privacy-preserving mechanisms. 
These siloed approaches, which address either secure aggregation or Byzantine resilience, result in non-optimal trade-offs. 
In the best case, it leads to a significant computational overhead or reduced accuracy~\cite{choffrut2023sablesecurebyzantinerobust}, while in the worst case, it results in fundamental incompatibilities between privacy and resilience \cite{guerraoui2021differentialprivacybyzantineresilience, survey_byzantine_resilient_aggreg}. 

\textbf{Contributions.} To address these challenges, we propose a Byzantine-resilient FL approach that also ensures the privacy of the workers' updates through homomorphic encryption. 
To achieve this, we design a property inference meta-classifier capable of detecting Byzantine behavior directly from encrypted updates. 
More precisely, we consider a \fedavg{}-based FL with homomorphic encryption for securing workers' updates. 
Our main contributions can be summarized as follows. 
\begin{enumerate}[leftmargin=*]
    \item We propose a property-inference methodology to detect Byzantine workers from encrypted FL updates.  
    For this purpose, we design a set of SVM meta-classifiers to infer the statistical properties of different malicious behaviors, including backdoor, gradient-inversion and label-flipping attacks. 
    Our meta-classifier filtering approach can accurately identify several Byzantine behaviors, achieving up to 94\% on $\mathsf{F1}$-score. 
    \item The SVM meta-classifiers are optimized using a grid search across various model architectures and parameter embeddings, leading to efficient inference on homomorphically encrypted updates.
    To achieve this, we provide a complete set of homomorphic encryption building blocks necessary for evaluating our filtering approach on encrypted updates (\emph{cf.} Section~\ref{sec:fhe_efficient_filters}).
    \item We conduct extensive experiments to demonstrate the performance of our approach against Byzantines on the \femnist{}, \cifar{}, \GTSRB{} and \acsincome{} datasets, along with ResNet \cite{resnet}, VGG \cite{vgg}, custom CNN and MLP model architectures \footnote{Our implementations are available at \url{https://github.com/Akram275/FL_with_FHE_filtering}}. 
    For instance, our filtering approach achieves F1-scores between $90$\% and $94$\%, at the cost of marginal losses in model utility and encrypted inference runtime ranging from $3$ to $7$ seconds for a global aggregation (\emph{cf.} Section~\ref{sec:experimental}).
    \item 
    Additionally, we investigate the performance of our approach in a complex attack setting, in which two groups of adversaries employ distinct Byzantine strategies, \ie{}, data poisoning via backdoor injection and model poisoning via gradient ascent. 
    Our approach filters both attack updates with 96\% $\mathsf{F1}$-score while ensuring convergence (Appendix \ref{sec:appendix_single_filter_multiple_attacks}).
    Second, we demonstrate that our filtering approach remains effective even when updates are protected with DP, to thwart post-deployment risks like membership inference \cite{rigaki2020survey, shokri2017membership}.  
    This is not surprising as DP has been proven to provide limited protection against property inference attacks~\cite{naseri2020local}.
    We precisely leverage this to detect and handle malicious behaviors, thus increasing the FL robustness (Appendix \ref{sec:appendix_filter_perf_with_dp}).
\end{enumerate}

\textbf{Outline \& Notations.} 
First, Section~\ref{sec:related} provides an overview of prior work addressing malicious behavior in FL and Section~\ref{sec:background} introduces the key concepts underlying our proposed system.
Afterwards, Section~\ref{sec:relevant_properties_for_fl} analyzes various workers' misbehaviors in FL, and introduces our considered filtering properties, before presenting our encrypted property filtering framework in Section \ref{sec:our_approach}.
Then, Section~\ref{sec:filtering_data_gen} discusses the filtering data generation methodology, Section \ref{sec:fhe_efficient_filters} presents the FHE-friendly design of property filters and Section \ref{sec:ckks_building_blocks_for_filtering} describes the property-filtering framework based on CKKS. 
Finally, Section~\ref{sec:experimental} reports our experimental results, before concluding in Section~\ref{sec:conclusion}. Table \ref{tab:notations} provides the main notations.

\begin{table}[ht]
\centering
\caption{Notation summary}
\label{tab:notations}
\footnotesize
\renewcommand{\arraystretch}{1.2}
\begin{tabular}{ll}
\toprule
Symbol & Description \\
\midrule
$n = n_h + n_b$ & Total number of workers, with $n_h$ honest and $n_b$ Byzantine \\
$H, B$ & Sets of honest and Byzantine worker indices \\
$q = |B|/n$ & Fraction of Byzantine workers \\
$\theta^t$ & Global model parameters at round $t$ \\
$\theta^t_i$ & Local model of worker $i$ at round $t$ \\
$\Delta^t_i$ & Update difference: $\theta^{t+1}_i - \theta^t$ \\
$P^i_t \in [0,1]$ & Filter weight assigned to worker $i$ at round $t$ \\
$\nabla F(\theta)$ & Gradient of $F$ at $\theta$ \\
$T$ & Number of FL communication rounds \\
\bottomrule
\end{tabular}
\end{table}

\section{Related work}
\label{sec:related}
Several works have investigated the workers' misbehaviors in FL, which we categorize into two classes: outlier exclusion and proof-based verification. 
In this section, we first review these works and discuss their pros and cons when co-implemented with secure aggregation techniques such as FHE and DP. 
Then, we review Property Inference Attacks (PIA) commonly used in privacy auditing. 

\subsection{Outlier exclusion} 

Outlier exclusion methods identify Byzantine updates by detecting deviations based on predefined criteria, such as gradient norm or direction, while assuming that the majority of participating workers are honest. 
For example, \emph{Krum}~\cite{blanchard2017machine} computes the Euclidean distance between workers' gradients and discards the most distant ones before aggregation. 
Similarly, \emph{Trimmed-Mean} and \emph{Coordinate-Wise Trimmed-Mean (CWTM)}~\cite{yin2021byzantinerobustdistributedlearningoptimal} sort respectively gradients norm-wise and coordinate-wise, before eliminating the $f$ most deviant ones, in which $f$ is an estimate of the number of Byzantine workers. 
\emph{Clipped clustering}~\cite{karimireddy2021learninghistorybyzantinerobust} combines centered clipping and iterative clustering to group updates around a central point, to mitigate the influence of adversarial updates. \emph{FLTrust}~\cite{cao2022fltrustbyzantinerobustfederatedlearning} enhances Byzantine resilience by assigning trust scores to workers based on the directional similarity between their submitted gradients and a reference gradient computed by the server using a trusted root dataset. 
These scores are then used to re-weight workers' updates, thus reducing the impact of Byzantine contributions.

\paragraph{\textbf{Limitations}}
The accuracy of outlier exclusion methods decreases with non-Independently and Identically Distributed (non- IID) workers' data \cite{survey_byzantine_resilient_aggreg, kairouz2021advances}.
In such cases, outlier updates may stem from skewed local data, instead of malicious behaviors, and discarding them impacts the global model's generalization. 
The reliance on an honest majority, common in many outlier exclusion techniques, also makes them particularly vulnerable against Sybil attacks in FL~\cite{fung2020mitigatingsybilsfederatedlearning}. 
While \emph{FLTrust} avoids the honest-majority assumption, it involves the availability of a statistically representative dataset on the server side. 
Possessing such central data potentially negates the main purpose of using FL in the first place.  
Finally, these methods are primarily designed for gradient aggregation (\emph{e.g.}, $\mathsf{FedSGD}$), and have limited applicability to model aggregation (\textsf{FedAvg}) as they rely on vector-based geometric principles. 
\fedavg{}-based FL is more efficient because it operates epoch-wise.
Therefore, it requires fewer communication rounds (hence less secure aggregations) compared to $\mathsf{FedSGD}$-based FL, which operates step-wise. 

\paragraph{\textbf{Privacy-preserving outliers exclusion}}
Several works have explored privacy-preserving outlier exclusion methods~\cite{choffrut2023sablesecurebyzantinerobust, 9849010, MA2022103561}. 
Choffrut \etal~\cite{choffrut2023sablesecurebyzantinerobust} evaluated median-based aggregation circuits with BFV \cite{BFV2012}, BGV \cite{BGV2011} and CKKS \cite{CKKS_2017} encrypted updates. 
Xu \etal~\cite{MA2022103561} implemented Multi-Krum over the additive Paillier cryptosystem (AHE)~\cite{Pailler_cryptosystem}. 
However, due to the limited homomorphic properties of the Paillier cryptosystem, their approach requires interactivity with workers. 
Wang \etal~\cite{9685821} achieved Byzantine resilience via a blockchain-based distributed learning with CKKS for secure-aggregation and a cosine-based similarity for Byzantine resilience. 
These works highlight the substantial computational overhead associated with Byzantine-resilient aggregation in the homomorphic domain. 
This overhead stems primarily from the need to implement vector sorting circuits in FHE—necessary for median-based robust aggregation rules, which ultimately results in poor scalability to the number of workers.
In \cite{guerraoui2021differentialprivacybyzantineresilience}, Guerraoui \etal investigated the application of DP to these aggregation strategies using the variance-to-norm (VN) ratio~\cite{blanchard2017machine}. 
Their analysis demonstrates that enforcing strong DP ($\epsilon \leq 0.5$) at the update level requires stringent conditions, such as excessively large batch sizes to satisfy the VN ratio required for convergence.

\subsection{Proof-based verification} 

Proof-based verification systems aim to validate the integrity of ML tasks. 
Namely, training \cite{jia2021proof, shamsabadi2024confidential, abbaszadeh2024zero} and inference \cite{yadav2024fairproof}. 
In the context of training,  Jia \etal~\cite{jia2021proof} introduced the \emph{Proof-of-Learning} (PoL) framework, attesting that the model's parameters result from a legitimate optimization process.
However, their initial approach compromises the model's privacy by revealing intermediate weights, which are used during verification. 
In addition, the proof size significantly exceeds that of the model's parameters, potentially involving billions of values. 
Subsequent research explored the application of \emph{Zero-Knowledge Proof} (ZKP) systems to address these limitations. 
In particular, Shamsabadi \etal~\cite{shamsabadi2024confidential} introduced \emph{Confidential-DPproof}, which evaluates the correct execution of \emph{Differentially Private Stochastic Gradient Descent} (DP-SGD)~\cite{Abadi_2016}. 
This method enables the verification of a privacy level derived from training hyperparameters and offers an alternative to traditional black-box privacy audits. 
Recently, Abbaszadeh \etal~\cite{abbaszadeh2024zero} proposed \emph{Kaizen}, a ZKP training system relying on recursive composition of sumcheck-based proofs from each iteration to prove the entire training. 
For a VGG network, \emph{Kaizen} incurs a proof generation time of 10 minutes per epoch.

\paragraph{\textbf{Limitations}} 
The practical deployment of proof-based verifications within FL is significantly hindered by the computational cost of the proofs' generation. 
For instance, \emph{DP-Proof}~\cite{shamsabadi2024confidential} costs up to $100$ hours for a complete proof generation for a neural network trained on the \textsf{CIFAR-10} dataset.
In addition, ZKP-based PoL systems do not defend against data poisoning attacks~\cite{abbaszadeh2024zero} as an execution of the optimization circuit using poisoned data still yields a valid proof. 
Hence, the defensive range of these approaches against Byzantine behaviors is restricted to model poisoning attacks such as gradient inversion.

\paragraph{\textbf{Privacy-preserving proof-based verification}}

Recent works have adopted ZKP mechanisms to add privacy guarantees to the training process~\cite{shamsabadi2024confidential, abbaszadeh2024zero}. 
These systems ensure that the verifier, \ie, the FL server, learns only the validity of a worker's encrypted update, enabling informed decisions on whether to retain or discard it. 
In addition, by adapting the verified training circuit from SGD to DP-SGD~\cite{Abadi_2016}, Shamsabadi \etal \cite{shamsabadi2024confidential} demonstrated that these methods can provide a cryptographically-proven privacy budget, complementing secure aggregation encryption.  

\subsection{Property Inference attacks~(PIA)}
Property inference attacks~\cite{melis2018exploiting, wang2018inferringclassrepresentativesuserlevel, parisot2021propertyinferenceattacksconvolutional, mo2021layerwisecharacterizationlatentinformation, kerkouche2023clientspecificpropertyinferencesecure} are ML privacy breaches in which an adversary seeks to infer properties of the training data that are not necessarily part of the primary learning objective~\cite{parisot2021propertyinferenceattacksconvolutional, mo2021layerwisecharacterizationlatentinformation}.
For instance, Parisot \etal~\cite{parisot2021propertyinferenceattacksconvolutional} infer the proportion of training images containing people with an open mouth from a facial recognition model. 
Unlike membership or attribute inference attacks, which aim to extract specific information about individual data points, property inference attacks focus on deducing global properties from the training data.
PIAs often employ a white-box shadow model framework~\cite{parisot2021propertyinferenceattacksconvolutional}, in which the adversary has access to labeled datasets $\{\mathcal{D}_{p}\}$ (\ie, containing the property) and $\{\mathcal{D}_{\neg p}\}$ (\ie, without the property), to train shadow models. 
These models generate a meta-dataset in which shadow model parameters are paired with binary property labels indicating the presence or absence of the target property in the associated training data. 
Then, an inference model is trained on this meta-dataset, to infer the target property.
    
\paragraph{\textbf{PIA for ML auditing}} 
Duddu \etal~\cite{duddu2024attesting} have proposed an ML property attestation method to verify the distribution of training data, which combines PIAs and secure two-party computation to ensure privacy during evaluation.
Juarez \etal~\cite{juarez2022blackboxauditsgroupdistribution} relied on a black-box approach based on a combination of PIA and membership inference to detect the under-representativeness of demographic groups in the training set of the model and achieve up to 100\% AUC. 
Other works \cite{song2019auditingdataprovenancetextgeneration,liu2020forgottenmethodassessmachine, miao2021audioauditoruserlevelmembership} use membership inference principles to infer if a model's training set contains a specific user's data. 
These approaches relate to efforts to audit compliance with data privacy principles, such as the \emph{Right of Erasure} under GDPR.
\textbf{Building upon PIAs for auditing, our work, to the best of our knowledge, is the first application of these principles to Byzantine-resilient privacy-preserving FL.}  
 
\paragraph{\textbf{PIA in FL}}
Melis \etal~\cite{melis2018exploiting} have shown that property inference is feasible through gradient observation during FL training. 
More precisely, by analyzing per-iteration updates, they inferred properties such as gender or political views from user data. 
Mo \etal~\cite{mo2021layerwisecharacterizationlatentinformation} analyzed FL updates and showed different degrees of vulnerability to PIA in specific layers of the trained neural network.  
Finally, Kerkouche \etal~\cite{kerkouche2023clientspecificpropertyinferencesecure} have demonstrated that while secure aggregation protocols are designed to protect worker-level privacy, they are not immune to PIAs targeting the aggregated model to infer worker-specific sensitive properties.
Unlike standard privacy inference applications, \textbf{our approach uses the general PIA methodology to assess the reliability of worker updates in FL rather than conducting privacy attacks}. 
In our work, the \emph{``property''} reflects a specific misbehavior, which conceptually aligns with PIA, in which a property is an identifiable pattern embedded in the model's parameters that stems either from the data or the optimization algorithm.

\begin{table*}[t]
\centering
\begin{threeparttable}
    \caption{Defensive capabilities and cost of Byzantine resilience approaches in FL}
    \label{tab:related_work_detailed_summary_v3}
    \scriptsize
    \setlength{\tabcolsep}{3pt} 
    
    \begin{tabularx}{\textwidth}{@{} c l c c c c L l l @{}}
        \toprule
         & Method & Dishonnest Maj. & Non-IID & Data Poison & Model Poison & Privacy Aspect & Server Cost & Worker Cost \\
        \midrule
        
        \multirow{9}{*}{\rotatebox{90}{Outlier Exclusion}} 
        & Krum \cite{blanchard2017machine} & \redcross & \redcross & \textcolor{ForestGreen}{\checkmark} & \textcolor{ForestGreen}{\checkmark} & None & Median+Agg & $\mathsf{LocalUpdate}$ \\
        & Trimmed-Mean/CWTM \cite{yin2021byzantinerobustdistributedlearningoptimal} & \redcross & \redcross & \textcolor{ForestGreen}{\checkmark} & \textcolor{ForestGreen}{\checkmark} & None & Median+Agg & $\mathsf{LocalUpdate}$ \\
        & ClipCluster \cite{karimireddy2021learninghistorybyzantinerobust} & \redcross & \redcross & \textcolor{ForestGreen}{\checkmark} & \textcolor{ForestGreen}{\checkmark} & None & Median+Agg& $\mathsf{LocalUpdate}$ \\
        & FLTrust \cite{cao2022fltrustbyzantinerobustfederatedlearning} & \textcolor{ForestGreen}{\checkmark} & \redcross & \textcolor{ForestGreen}{\checkmark} & \textcolor{ForestGreen}{\checkmark} & None & Similarity+$\mathsf{Agg}$ & $\mathsf{LocalUpdate}$\\
        \cmidrule(lr){2-9}
                
        & Choffrut \cite{choffrut2023sablesecurebyzantinerobust} & \redcross & \redcross & \textcolor{ForestGreen}{\checkmark} & \textcolor{ForestGreen}{\checkmark} & FHE & FHE (Median+$\mathsf{Agg}$)  & $\mathsf{LocalUpdate}$+\enc \\
        & Xu \cite{MA2022103561} & \redcross & \redcross & \textcolor{ForestGreen}{\checkmark} & \textcolor{ForestGreen}{\checkmark} & AHE & AHE(Median+$\mathsf{Agg}$) & $\mathsf{LocalUpdate}$+\enc \\
        & Wang \cite{9685821} & \redcross & \redcross & \textcolor{ForestGreen}{\checkmark} & \textcolor{ForestGreen}{\checkmark} & FHE & FHE (Median+$\mathsf{Agg}$) & $\mathsf{LocalUpdate}$+\enc \\
        & Guerraoui \cite{guerraoui2021differentialprivacybyzantineresilience} & \redcross & \redcross & \textcolor{ForestGreen}{\checkmark} & \textcolor{ForestGreen}{\checkmark} & DP updates & Median+$\mathsf{Agg}$ & $\mathsf{LocalUpdate}_{\mathsf{DPSGD}}$ \\

        \midrule

        \multirow{3}{*}{\rotatebox{90}{Proof-based}} 
        & Jia PoL \cite{jia2021proof} & \textcolor{ForestGreen}{\checkmark} & \textcolor{ForestGreen}{\checkmark} & \redcross & \textcolor{ForestGreen}{\checkmark} & None & $\mathsf{Verif}$+$\mathsf{Agg}$ &  $\mathsf{LocalUpdate}+\mathsf{ProofGen}$\\
        & Shamsabadi \cite{shamsabadi2024confidential} & \textcolor{ForestGreen}{\checkmark} & \textcolor{ForestGreen}{\checkmark} & \redcross & \textcolor{ForestGreen}{\checkmark} & ZKP + Verif. DP $\epsilon$ & $\mathsf{Verif}$+$\mathsf{Agg}$ & $\mathsf{LocalUpdate}+\mathsf{ProofGen}$  \\
        & Abbaszadeh \cite{abbaszadeh2024zero} & \textcolor{ForestGreen}{\checkmark} & \textcolor{ForestGreen}{\checkmark} & \redcross & \textcolor{ForestGreen}{\checkmark} & ZKP & $\mathsf{Verif}$+$\mathsf{Agg}$ & $\mathsf{LocalUpdate}+\mathsf{ProofGen}$  \\
        & Garg \cite{garg_zkPoT} & \textcolor{ForestGreen}{\checkmark} & \textcolor{ForestGreen}{\checkmark} & \redcross & \textcolor{ForestGreen}{\checkmark} & ZKP via recursive GKR & $\mathsf{Verif}$+$\mathsf{Agg}$ & $\mathsf{LocalUpdate}+\mathsf{ProofGen}$  \\

        \midrule 
    
         & \textbf{Our approach} & \textcolor{ForestGreen}{\checkmark} \tnote{†} & \textcolor{ForestGreen}{\checkmark} \tnote{‡} & \textcolor{ForestGreen}{\checkmark} & \textcolor{ForestGreen}{\checkmark} & Threshold FHE+DP & FHE (Filter+$\mathsf{Agg}$.) & $\mathsf{LocalUpdate}$+\enc \\

        \bottomrule
    \end{tabularx}
    
    \begin{tablenotes}

        \footnotesize

        \item[†] Relies on the predictive performance of the meta-classifier, which can be effectively optimized (\cf{} Section \ref{sec:conv_analysis}).

        \item[‡] Builds on training the meta-classifier to handle distribution shifts, a capability that can be systematically incorporated into the training pipeline (\cf{} Section \ref{sec:filtering_data_gen}).

    \end{tablenotes}
    
\label{tab:comparison}
\end{threeparttable}
\end{table*}

\paragraph{\textbf{Overall positioning}}
Our approach improves on the first class of solutions by removing the honest-majority requirement and relying on a meta-classifier whose performance does not depend on the number of Byzantine workers. 
The maximum tolerable Byzantine ratio thus directly follows from the meta-classifier’s predictive accuracy (\cf{} Section~\ref{sec:conv_analysis}).
To remain effective under non-IID data, we train the meta-classifier on a wide range of shadow updates stemming from diverse local distributions (\cf{} Section~\ref{sec:filtering_data_gen}).
Our approach eliminates the need for a computationally heavy proof-generation operation while providing a broader defensive range, including data poisoning threats, which create detectable anomalies in the updates. 
Table \ref{tab:comparison} summarizes this position.

\section{Background}
\label{sec:background}
\paragraph{\textbf{Federated learning with model averaging}}
FL is a distributed learning framework in which different entities, including a server and multiple workers, collaborate to train a global shared model. 
Starting from an initial common model, each worker locally performs several SGD steps using its data samples and offloads the updated model's parameters to the aggregation server. 
This server generates a unified model by averaging the weights (\fedavg{}) of the workers' updates~\cite{mcmahan2023communicationefficientlearningdeepnetworks}. 
This process requires several rounds to achieve high predictive performance across all workers' data.
As shown in previous studies, several parameters impact the FL convergence speed, such as the number of workers involved~\cite{zhou2023parameter}, the heterogeneity of the workers' data~\cite{zhao_2018}, and the' dropouts of the workers~\cite{wen2022federated}.
Algorithm~\ref{alg:fedavg} describes this process.

\begin{algorithm}[ht!]
\caption{Federated Averaging (FedAvg) \cite{mcmahan2023communicationefficientlearningdeepnetworks}}
\label{alg:fedavg}
\begin{algorithmic}[1]
\State \textbf{Input:} Number of communication rounds $T$, number of workers $n$, local epochs $E$ and learning rate $\eta$. $\{\mathcal{D}_i\}_{i\in[n]}$ the set of workers datasets.
\State \textbf{Server initializes:} Global model $\theta^{\text{init}}$
\For{$t = 1, 2, \cdots, T$} 
    \State Broadcast global model $\theta_t$ to workers
    \For{each worker $k \in [n]$ \textbf{in parallel}}
        \State $\theta^{t+1}_k \gets \textsf{LocalUpdate}(\theta_t, \mathcal{D}_k, E, \eta)$
    \EndFor
    \State Aggregate: $\theta^{t+1} \gets \sum_{k \in [n]} \frac{n_k}{\sum_{i \in [n]}n_i} \theta^{t+1}_k$ 
\EndFor
\State \textbf{Output:} Final global model $\theta^T$
\end{algorithmic}
\end{algorithm}

\paragraph{\textbf{Fully Homomorphic Encryption (FHE)}}
FHE is a well-established privacy-preserving technique, with at least four main schemes being used in practice, namely BGV, BFV, CKKS and FHEW/TFHE~\cite{BFV2012,BGV2011,CKKS_2017,fhew_2014,tfhe_2018}. 
From a security perspective, they all rely on the Learning With Errors problem (LWE)~\cite{regev2024lattices} or its ring variant \cite{rlwe}. 
These schemes are framed as follows.
Let $\scheme=(\keygen,\enc,\dec,\eval)$ denote a public key homomorphic encryption scheme defined by the following set of probabilistic polynomial time (PPT) algorithms, given security parameter $\mathsf{\lambda}$:
\begin{itemize}[leftmargin=*]
\item[$\bullet$] $(\pk, \evk, \sk) \leftarrow \keygen(1^{\lambda})$: outputs an encryption key $\pk$, a public evaluation key $\evk$ and a secret decryption key $\sk$. 
The evaluation key is used during homomorphic operations while $\evk$ corresponds to a relinearization key for a leveled or somewhat scheme (such as BFV~\cite{BFV2012}) or to a bootstrapping key when a scheme such as TFHE or CKKS~\cite{tfhe_2018, CKKS_2017} is used.
\item[$\bullet$] $[m]_{\pk} \leftarrow \enc{(m,\pk)}$: turns a message $m$ into a ciphertext $c$ using the public key $\pk$.
\item[$\bullet$] $m \leftarrow \dec{([m]_{\pk},\sk)}$: transforms a message $c$ into a plaintext $m$ using the secret key $\sk$.
\item[$\bullet$] $c_f \leftarrow \eval{(f, c_1,\cdots, c_K, \evk)}$: evaluates the function $f$ on the encrypted inputs $(c_1, \dots, c_k)$ using the evaluation key $\evk$. 
In the general case in which non-exact FHE schemes are allowed, $\eval$ is such that with high enough ($\lambda$-independent) probability:
\begin{small}
\begin{equation}
\label{eq:approx:cond}
|\dec(\eval(f, [m_1]_{\pk} ,\cdots, [m_K]_{\pk})- f(m_1,\cdots,m_{K}))|\leq\varepsilon
\end{equation}
\end{small}
\end{itemize}
Throughout this paper, we rely on the CKKS scheme \cite{CKKS_2017}, which is described in detail in Appendix~\ref{sec:appendix_ckks}.
 
\section{Workers' Misbehaviors in FL}
\label{sec:relevant_properties_for_fl}
Distributing training across multiple workers leads to convergence challenges, which requires making the training process resilient to adversarial behaviors (\emph{i.e.}, Byzantine resilience). 
This shifts the trust model from assuming honest-but-curious participants to accounting for potentially malicious ones, as explored in the FL literature.
This broadly involves attacks aiming to disrupt training (\eg{}, preventing global gradient descent) or drive the model towards a poisoned path.  
Addressing these threats when worker updates are encrypted motivates our work.

\subsection{\textbf{Byzantine workers}}
\label{sec:Byzantine_behaviours_discussion}
Byzantine attacks can be categorized into two groups:  
\begin{enumerate}[leftmargin=*]
    \item{\textbf{Preventing convergence.}} Attacks designed to prevent convergence do so by crafting updates that decrease, rather than improve, the aggregated model's utility on the collective validation data. 
    This can be achieved by training the global model on a deliberately manipulated version of local data, \eg, through label flipping~\cite{jebreel2022defendinglabelflippingattackfederated}, or altering the training algorithm, \eg, performing gradient ascent instead of descent \cite{xu2022agicapproximategradientinversion, leite2024federatedlearningattackimproving}.
    \item{\textbf{Poisoned convergence.}} The deliberate introduction of targeted predictive errors is known as \emph{poisoning attacks} \cite{jebreel2022defendinglabelflippingattackfederated, 
    li2023learningbackdoorfederatedlearning, nguyen2023backdoorattacksdefensesfederated, leite2024federatedlearningattackimproving}, in which the Byzantine worker embeds a backdoor task within its training data. 
    For instance, in image data, this backdoor is often a subtle visual feature intended to trigger a predefined, incorrect prediction whenever it appears, hence dictating the model's behavior. 
    This is accomplished by crafting data samples that contain the backdoor and associating these samples with class labels that result in the intended misclassification~\cite{li2023learningbackdoorfederatedlearning, nguyen2023backdoorattacksdefensesfederated}. 
    Thus, unlike the previous category, these attacks do not seek to disrupt convergence but instead achieve convergence on both the main task and the backdoor one.
\end{enumerate}

\subsection{Considered Filtering Properties}

We investigate the following workers' misbehaviors:
\begin{itemize}[leftmargin=*]
    \item{\textbf{Poisoning via backdoor insertion.}}
    We investigate the ability of our approach to filter out updates trained on datasets containing triggers associated with a misclassification. 
    To do so, we embed a 4-pixel square on \GTSRB{} and deceitfully associate these samples with the sign \emph{``End of speed limit''}. 
    Figure \ref{fig:backdoor_illustration} illustrates examples of these samples.
    \item{\textbf{Poisoning via label-flipping and shuffling.}}
    We evaluate the effectiveness of our approach in detecting updates trained on datasets affected by binary label-flipping and, more generally, label-shuffling attacks.
    To simulate such threats, we flip the binary labels in the \acsincome{} datasets.
    This manipulation degrades the global model’s accuracy and can bias its predictions toward the adversary’s objectives.
    \item{\textbf{Gradient manipulation attacks.}}
    In this setting, Byzantine workers manipulate the gradient vector by setting it as the opposite of the genuine one, which contributes to maximizing the global objective.
\end{itemize}

\begin{figure}[ht!]
  \centering
  \begin{minipage}{0.32\columnwidth}
    \centering
    \includegraphics[width=\linewidth]{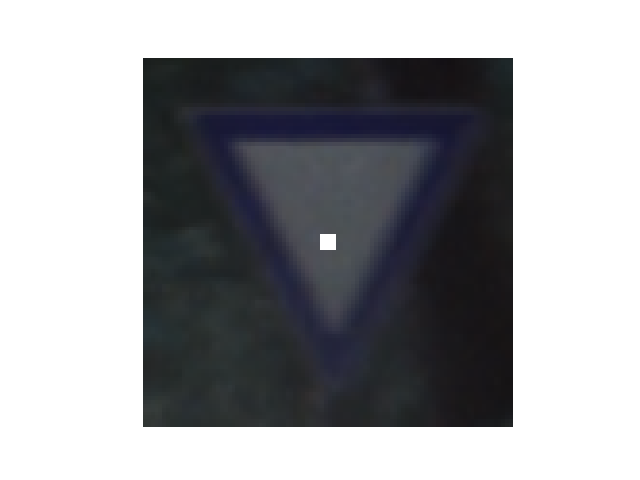}
  \end{minipage}\hfill
  \begin{minipage}{0.32\columnwidth}
    \centering
    \includegraphics[width=\linewidth]{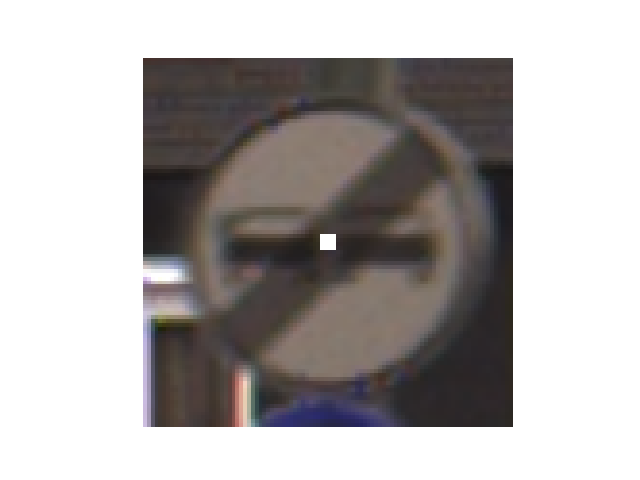}
  \end{minipage}\hfill
  \begin{minipage}{0.32\columnwidth}
    \centering
    \includegraphics[width=\linewidth]{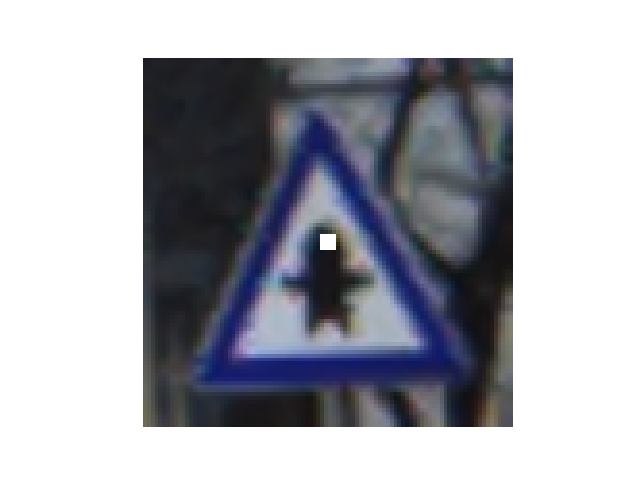}
  \end{minipage}

  \caption{Examples of \GTSRB{} samples embedded with the trigger and labeled as \emph{``End of speed limit''}.}
  \label{fig:backdoor_illustration}
\end{figure}

\section{Encrypted Property Filtering Framework}
\label{sec:our_approach}
This section presents a high-level description of our approach, without focusing on a particular misbehavior and without detailing the building blocks of the filtering process $\mathsf{Filter}_P$. 
The details of the considered attacks, the filters, and the homomorphic property inference are respectively provided in Sections \ref{sec:relevant_properties_for_fl}, \ref{sec:filtering_data_gen}, \ref{sec:fhe_efficient_filters} and \ref{sec:ckks_building_blocks_for_filtering}.
Hereafter, we denote the encrypted model of a worker $j$ at FL training iteration $t$ by $\llbracket \theta_j^t \rrbracket $\footnote{In practice, $\llbracket \theta_j^t \rrbracket$ corresponds to many ciphertexts. 
Indeed, given the FHE scheme parameters, deep learning models' updates $\theta_j^t$ usually require several ciphertexts.}. 
Similarly, $\llbracket \theta^{t,s}_j \rrbracket$ denotes the encryption of the weights of layer $s$ in worker $j$’s model at iteration $t$. 
In addition, we denote by $\llbracket \theta_j^t \rrbracket \cdot \llbracket x \rrbracket$ the homomorphic multiplication of all the ciphertexts of an update $\theta_j^t$ by the ciphertext $\llbracket x \rrbracket$.

\textbf{Threat model.}  
We consider an FL setting in which $n$ workers collaborate to train a global model with an honest-but-curious server that may collude with a subset of workers for privacy inference purposes only (\ie{}, malicious behavior is not considered for the server). 
We also require a bound $B_{\text{coll}}$ on the number of colluding workers, which is used to set up the collaborative decryption threshold $t$.
We also assume at least one honest worker $(n_h \geq 1)$, thus limiting the number of Byzantine attackers to $n_b \leq n-1$.

\textbf{FL with FHE property filtering.}
Before training, workers initialize a threshold CKKS scheme with parameter $\tau > B_{\text{coll}}$, each holding a secret key share $\mathsf{sk}_i$ and the public key $\pk$. 
The evaluation and public keys $(\evk, \pk)$ are sent to the server with the collaborative decryption requiring at least $\tau$ shares.
The threshold CKKS setup is detailed in Appendix~\ref{sec:appendix_thCKKS}.

When the FL training begins ($t=0$), the server broadcasts the initial plaintext model. 
Afterwards, workers compute local updates $\mathsf{LocalUpdate(\cdot)}$, encrypt them under $\pk$ and transmit the ciphertexts to the server.
Then, the server computes the encrypted difference $\llbracket \Delta^t_i \rrbracket = \llbracket \theta^{t, s} \rrbracket - \llbracket \theta^{t, s}_i \rrbracket$ in which $s$ is the identifier of the layer with the highest separability (\cf{}, Section \ref{sec:filtering_data_gen}).  
An FHE property inference process $\mathsf{Filter}_P(\cdot)$ on $\llbracket \Delta^t_i \rrbracket$ results in an encrypted filter value $\llbracket P_i^t \rrbracket$ in which $P_i^t \in [0, 1]$, is the filter value for worker $i$. 
Each update is homomorphically scaled by its corresponding filter value, and the scaled updates are aggregated (via averaging) to form the encrypted non-normalized model $\llbracket\theta_{\text{scaled}}^{t+1}\rrbracket$. 
The server homomorphically computes $\llbracket S^t \rrbracket = \sum_{i=1}^{n} \frac{n_i}{\sum_{i=1}^{n} n_i} \llbracket P^t_i \rrbracket$ and its inverse $\llbracket S_{\text{inv}}^t \rrbracket$ to normalize the global model. 
Hence, producing  $\llbracket\theta^{t+1}\rrbracket$, as outlined in Algorithm \ref{alg:newton_raphson_fhe}. 
Workers collaboratively decrypt the aggregated model using their private keys shares $\mathsf{sk}_i$. 
Concurrently, the server bootstraps the aggregated model's ciphertext $s$ to compute the next round's $\Delta^{t+1}_i$. 

\begin{algorithm}[ht!]
\caption{FL with encrypted filtering}
\label{alg:approach1}
\begin{algorithmic}[1]
\State \textbf{Setup:} $(\pk, \{\sk_i\}^n_{i=1}, \evk)\leftarrow \ThKeyGen(1^{\lambda}, n, \tau)$ with $\tau > B_{\text{coll}}$. Server gets $\evk$. 

\For{each round $t=1 \dots T$}
    \Statex \textbf{Server:}
    \State Receive $\llbracket \thetaJt \rrbracket$ from workers $j\in \{1, \dots, n\}$
    \State Compute difference $\llbracket \DeltaJt \rrbracket \leftarrow \llbracket \theta^{t, s}_{\text{fresh}} \rrbracket - \llbracket \theta^{t, s}_j \rrbracket$
    \State Apply filter $\llbracket \PJt \rrbracket \leftarrow \FilterP(\llbracket \DeltaJt \rrbracket)$
    \State Scale all ciphertexts  $\llbracket \thetaPrimeJ \rrbracket \leftarrow \llbracket \PJt \rrbracket \cdot \llbracket \thetaJt \rrbracket$
    \State \fedavg{} $\llbracket \thetaTwoPrimeTplusOne_{\text{scaled}} \rrbracket \leftarrow \sum_{j=1}^n \left(\frac{n_j}{ \sum^{n}_{k=1}n_k}\right) \cdot \llbracket \thetaPrimeJ \rrbracket$
    \State Compute sum $\llbracket \STplusOne \rrbracket \leftarrow \sum_{j=1}^n \left(\frac{n_j}{ \sum^{n}_{k=1}n_k}\right) \cdot \llbracket \PJt \rrbracket$
    \State Compute inverse $\llbracket \SinvTplusOne \rrbracket \leftarrow \EncInv(\llbracket \STplusOne \rrbracket)$
    \State Normalize $\llbracket \thetaTplusOne \rrbracket \leftarrow \llbracket \thetaTwoPrimeTplusOne_{\text{scaled}} \rrbracket \cdot \llbracket \SinvTplusOne \rrbracket $
    \State (\textit{Offline}) Bootstrap $\llbracket \theta_{\text{fresh}}^{t+1, s} \rrbracket \leftarrow \Boot( \llbracket \theta^{t+1, s})$
    \Statex
    \Statex \textbf{Workers ($j$):}
    \State Receive $\llbracket \thetaTplusOne \rrbracket$
    \State Decrypt collaboratively $\rightarrow \thetaTplusOne$
    \State Local update $\thetaNewJ \leftarrow \LocalUpdate(\cD_j, \thetaTplusOne)$
    \State Encrypt $\rightarrow \llbracket \thetaNewJ \rrbracket$
    \State Send $\llbracket \thetaNewJ \rrbracket$ to Server (for round $t+1$)
\EndFor
\Statex
\State \textbf{Output:} Final model $\theta^T$
\end{algorithmic}
\end{algorithm}

\subsection{Convergence Analysis}
\label{sec:conv_analysis}

\subsubsection*{Model and setup.}
We consider a federated learning system with $n$ workers, where $n_b$ is the number of Byzantine workers.  

Let $q = n_b/n$.  We define The global objective over the honest workers as follows:
\[
F(\theta) = \sum_{i \in H} \left( \frac{n_i}{\sum_{k \in H} n_k} \right) \cdot F_i(\theta),
\]
in which $H$ is the set of honest workers.

\begin{definition}[Ideal filtering]
We define the ideal filter as the one that assigns:
\[
P_i = 1 \ \forall i \in H, \quad P_j = 0 \ \forall j \in B.
\]
Where $\{ P_i \}$ are the filter values associated with each worker's update.
\end{definition}

\begin{definition}[Real filtering] 
We refer to the real filtering as the one obtained from a non-ideal setting, \ie{} in practice, with a filter defined as:
\begin{align*}
\mu_h &= \mathbb{E}[P_i \mid i \in H], \quad \sigma_h^2 = \mathbb{V}[P_i \mid i \in H]. \\
\mu_b &= \mathbb{E}[P_j \mid j \in B], \quad \sigma_b^2 = \mathbb{V}[P_j \mid j \in B]
\end{align*}
\end{definition}

\textbf{Geometric intuition.}
Figure~\ref{fig:geometric_intuition} provides a geometric illustration of the filtering error at communication round~$t$.
In the ideal scenario, either no Byzantine workers are present ($q=0$) or the meta-classifier matches the ideal filter performance, which results in the aggregate $\theta^{t+1}_{\text{ideal}}$ following the standard \fedavg{} trajectory~\cite{li2020convergence}, converging towards a neighborhood of the global optimum.
Under non-ideal conditions ($q>0$ and imperfect filtering), the server obtains a perturbed aggregate $\theta^{t+1}_{\mathrm{real}}$, which deviates from the ideal path by a filtering-induced displacement.
This displacement, represented by the vector $|\theta^{t+1}_{\mathrm{real}}-\theta^{t+1}_{\mathrm{ideal}}|$, captures the geometric effect of stochastic filtering errors.
Our convergence analysis formalizes this intuition by bounding the magnitude of this shift, thereby quantifying how imperfect filtering distorts the descent trajectory of standard \fedavg{}.
\begin{figure}[ht!]
    \centering
    \includegraphics[width=0.9\linewidth]{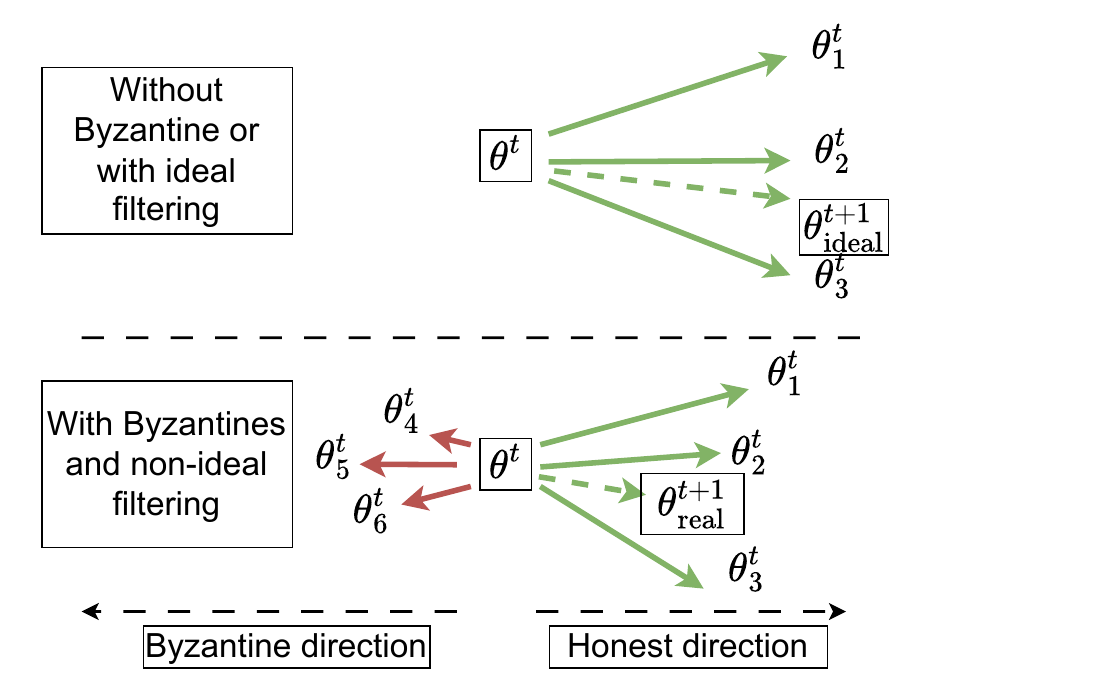}
    \caption{Illustration of the effect our filtering approach in a 2-dimensional parameter space with 6 workers and 3 Byzantines.}
    \label{fig:geometric_intuition}
\end{figure}

\subsubsection*{Assumptions.}
We adopt the following standard assumptions:

\begin{enumerate}[leftmargin=*]
    \item $F$ is $L$-smooth and $\mu$-strongly convex.
    
    \item \textbf{Bounded honest gradients:} Honest worker updates are uniformly bounded by a constant $G$:
    \[
    \|\Delta^t_i\| \leq G \quad \text{for } i \in H.
    \]
    \item \textbf{Bounded Byzantine gradients:} Byzantine worker updates are uniformly bounded by a constant $D_{\text{byz}}$:
    \[
    \|\Delta_j^t\| \leq D_{\text{byz}} \quad \text{for } j \in B.
    \]
    This assumption can be achieved via server-side clipping \cite{zhang2021understandingclippingfederatedlearning}.
    \item \textbf{Bounded global gradient:} The global gradient is bounded by a constant $C_g$:
    \[
    \norm{\nabla F(\theta^t)} \le C_g
    \]
    
    \item \textbf{Base error:} The term $\mathbf{B}_t$ captures all non-IID worker drift and local SGD variance (\ie, all non-Byzantine error):
    \[
    \mathbf{B}_t := (\theta^{t+1}_{\mathrm{ideal}} - \theta_{\mathrm{ideal}}^t) - \eta\nabla F(\theta^t)
    \]
    
    \item \textbf{Bounded base error:} The base error term $\mathbf{B}_t$  is bounded. These bounds are discussed in detail in \cite{li2020convergence}:
    \begin{itemize}
        \item $\E[\norm{\mathbf{B}_t} \mid \theta^t] \le E_{\mathrm{base}}$
        \item $\E[\norm{\mathbf{B}_t}^2 \mid \theta^t] \le E_{\mathrm{base}}^{(2)}$
    \end{itemize}
\end{enumerate}

\subsubsection*{Main results}
We start by characterizing the expected drift between the ideal aggregate and the real one via upper bounds on its expectation (Lemma \ref{lemma:expected_error}) and variance (Lemma \ref{lemma:variance_error}). 
In the following, we present our main convergence theorem, from which we derive a practical corollary that determines the minimal filter quality parameters $(\mu_h, \mu_b, \sigma^2_h, \sigma^2_b)$ required for a given Byzantine fraction $q$.
We provide the proofs of the following lemmas and the theorem in Appendix \ref{sec:appendix_conv_analysis}.

\begin{lemma}[Expected Filtering Error]
\label{lemma:expected_error}
At round $t$, let $\theta^{t+1}_{\mathrm{ideal}}$ denote the ideal aggregated update, and $\theta^{t+1}_{\mathrm{real}}$ the actual update. 
The filtering noise
$\delta_t := \mid \theta^{t+1}_{\mathrm{real}} - \theta^{t+1}_{\mathrm{ideal}} \mid$
satisfies:
\[
\mathbb{E}[\|\delta_t\| \mid \theta^t] \le E_{\mathrm{filter}},
\]
in which:
\begin{align*}
E_{\text{filter}} = & n_h G \left( |1 - \mu_h| + \sigma_h \right)  + n_b D_{\text{byz}} \left( \mu_b + \sigma_b \right)
\end{align*}
\end{lemma}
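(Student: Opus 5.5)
We write both aggregates in update form relative to the current global model $\theta^t$, so that the common term cancels, and then bound the remaining difference by the triangle inequality followed by a first/second-moment bound on each filter weight. Concretely, take $\theta^{t+1}_{\mathrm{ideal}} = \theta^t + \sum_{i\in H} w_i \Delta^t_i$ with $P_i=1$ on $H$ and $P_j=0$ on $B$. The real aggregate uses the same aggregation but with the actual weights $P_i$ on every worker, $\theta^{t+1}_{\mathrm{real}} = \theta^t + \sum_{i\in H} w_i P_i\Delta^t_i + \sum_{j\in B} w_j P_j \Delta^t_j$. Here the $w_i\le 1$ are the \fedavg{} weights. To match the stated constant we treat the normalization by $S^{t+1}$ as already absorbed, or simply drop it by using the crude bound $w_i\le 1$. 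Then
\[
\delta_t = \sum_{i\in H} w_i (P_i - 1)\Delta^t_i + \sum_{j\in B} w_j P_j \Delta^t_j .
\]

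The triangle inequality together with Assumptions 2 and 3 gives
\[
\|\delta_t\| \le G\sum_{i\in H}|P_i-1| + D_{\text{byz}}\sum_{j\in B}|P_j| .
\]
Taking conditional expectation given $\theta^t$ and using linearity reduces the claim to two per-worker bounds: $\mathbb{E}|P_i-1|\le |1-\mu_h|+\sigma_h$ for honest workers and $\mathbb{E}|P_j|\le \mu_b+\sigma_b$ for Byzantine workers.

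For the honest bound, write $|P_i-1|\le |P_i-\mu_h|+|\mu_h-1|$ and apply Jensen (or Cauchy--Schwarz) to get $\mathbb{E}|P_i-\mu_h|\le\sqrt{\mathbb{V}[P_i]}=\sigma_h$. For the Byzantine bound, $P_j\in[0,1]$ gives $|P_j|=P_j$, so $\mathbb{E}|P_j|=\mu_b\le\mu_b+\sigma_b$ directly; the paper's form is the same triangle-plus-Jensen step applied around $\mu_b$. Summing $n_h$ honest terms and $n_b$ Byzantine terms yields $E_{\text{filter}}$.

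The main obstacle is the normalization by $S_{\mathrm{inv}}^{t+1}$. If the real aggregate is $\sum_i w_iP_i\theta_i/\sum_i w_iP_i$, the random denominator couples all the weights, and the update difference no longer splits linearly. My first approach is to write the normalized aggregate as $\theta^t + \sum_i \tilde w_i\Delta_i$ with $\tilde w_i = w_iP_i/S$, and compare it to the ideal version. Failing that, I would state explicitly that the lemma refers to the unnormalized filtered update, whose deviation is exactly the linear expression above. A second subtlety is conditioning: the $\Delta^t_i$ are determined given $\theta^t$ and the local data. I rely only on the deterministic norm bounds for the $\Delta^t_i$, so no independence between the $P_i$ and the $\Delta^t_i$ is needed.
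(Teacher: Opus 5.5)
Your proposal is correct and follows the paper's proof essentially line for line: the paper also writes $\delta_t = \sum_{i\in H}(P^t_i-1)\Delta^t_i + \sum_{j\in B}P^t_j\Delta^t_j$ with unit weights and no normalization by $S^{t+1}$ (i.e.\ exactly your fallback reading), applies the triangle inequality with the bounds $G$ and $D_{\text{byz}}$, and then uses $\mathbb{E}|P^t_i-1|\le\sigma_h+|1-\mu_h|$ and $\mathbb{E}|P^t_j|\le\sigma_b+\mu_b$ summed over $n_h$ and $n_b$ workers. Your extra FedAvg weights $w_i\le 1$ and the sharper observation $\mathbb{E}|P_j|=\mu_b$ are harmless refinements of the same argument.
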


\begin{lemma}[Variance of the filtering error]
\label{lemma:variance_error}
Under assumptions (1-5), the second moment of the filtering noise satisfies
\[
\mathbb{E}[\|\delta_t\|^2 \mid \theta^t] \le E_{\mathrm{filter}}^{(2)},
\]
in which:
\[
E_{\mathrm{filter}}^{(2)} =
2n_h^2\bigl[(1-\mu_h)^2 + \sigma_h^2\bigr]G^2
+ 2n_b^2(\mu_b^2 + \sigma_b^2)D_{\text{byz}}^2
\]
\end{lemma}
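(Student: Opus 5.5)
We will prove Lemma~\ref{lemma:variance_error} the same way we would prove Lemma~\ref{lemma:expected_error}. The idea is to write the filtering noise as a sum of per-worker contributions and then bound the second moment term by term. Up to the common normalization, which we treat exactly as in Lemma~\ref{lemma:expected_error} and absorb into the bounded updates, the ideal and real aggregates differ only in the weights: the ideal filter uses $P_i=1$ on $H$ and $P_j=0$ on $B$, while the real one uses the random $P_i$. This gives the decomposition
\[
\delta_t = \sum_{i\in H} (P_i - 1)\,\Delta_i^t + \sum_{j\in B} P_j\,\Delta_j^t =: \mathbf{h}_t + \mathbf{b}_t .
\]

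\textbf{Step 1 (split the two groups).} We apply $\|a+b\|^2\le 2\|a\|^2+2\|b\|^2$ to get
\[
\E[\|\delta_t\|^2\mid\theta^t]\le 2\E[\|\mathbf{h}_t\|^2\mid\theta^t]+2\E[\|\mathbf{b}_t\|^2\mid\theta^t].
\]
This produces the factor $2$ in $E^{(2)}_{\mathrm{filter}}$.

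\textbf{Step 2 (honest term).} By the triangle inequality and Assumption 2,
\[
\|\mathbf{h}_t\|\le G\sum_{i\in H}|1-P_i| .
\]
Cauchy--Schwarz on the sum of $n_h$ terms then gives
\[
\Bigl(\sum_{i\in H}|1-P_i|\Bigr)^2\le n_h\sum_{i\in H}(1-P_i)^2 .
\]
Taking conditional expectations and using the bias--variance identity $\E[(1-P_i)^2]=(1-\mu_h)^2+\sigma_h^2$, we obtain
\[
\E\|\mathbf{h}_t\|^2\le n_h^2\bigl[(1-\mu_h)^2+\sigma_h^2\bigr]G^2 .
\]

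\textbf{Step 3 (Byzantine term).} Using Assumption 3 in the same way, $\|\mathbf{b}_t\|\le D_{\text{byz}}\sum_{j\in B}P_j$. Cauchy--Schwarz and $\E[P_j^2]=\mu_b^2+\sigma_b^2$ then give
\[
\E\|\mathbf{b}_t\|^2\le n_b^2(\mu_b^2+\sigma_b^2)D_{\text{byz}}^2 .
\]
Combining Steps 1--3 yields exactly $E^{(2)}_{\mathrm{filter}}$.

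\textbf{Main obstacle.} The delicate point is conditioning and independence. Assumptions 2--3 hold deterministically for every update, so the norms of $\Delta_i^t$ can be pulled out before taking expectations over the filter values. The mean/variance identities, however, require that the moments $\mu_h,\sigma_h^2,\mu_b,\sigma_b^2$ remain valid conditionally on $\theta^t$. We will state this as the interpretation of the real-filtering definition, since the filter statistics are taken per round.

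A second subtlety is the renormalization by $S^{-1}$ in Algorithm~\ref{alg:approach1}, which makes $\delta_t$ nonlinear in the $P_i$. We handle it exactly as in Lemma~\ref{lemma:expected_error}: the $\Delta$'s are measured relative to $\theta^t$, and the weights $n_i/\sum n_k\le 1$ are absorbed, so that $\delta_t$ is bounded by the unnormalized weighted sum above. Using Cauchy--Schwarz rather than independence across workers avoids any need for a cross-worker independence assumption, at the cost of the $n_h^2, n_b^2$ factors that appear in the statement.
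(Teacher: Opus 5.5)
Your proposal is correct and is essentially the paper's proof: the same decomposition of $\delta_t$, the same split via $\|a+b\|^2\le 2\|a\|^2+2\|b\|^2$, and a Cauchy--Schwarz step giving the $n_h^2$ and $n_b^2$ factors. The paper applies $\|\sum_{i\in H}v_i\|^2\le n_h\sum_{i\in H}\|v_i\|^2$ to the vectors directly, which is equivalent to your triangle-inequality-then-scalar version, and then uses the same identities $\mathbb{E}[(1-P_i^t)^2]=(1-\mu_h)^2+\sigma_h^2$ and $\mathbb{E}[(P_j^t)^2]=\mu_b^2+\sigma_b^2$.

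One remark should be dropped. The paper simply \emph{defines} $\theta^{t+1}_{\mathrm{real}}=\theta^t+\sum_{i}P_i^t\Delta_i^t$ (unnormalized), and your claim that the $S^{-1}$ renormalization could otherwise be ``absorbed'' is not valid. Dividing by a small $S^t$ can inflate the Byzantine contribution far beyond $\sum_{j\in B}P_j^t\|\Delta_j^t\|$; for example, taking all $P_i^t=\varepsilon$ restores the Byzantine updates to full weight.
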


\begin{theorem}[Convergence under filtering and base errors]
Let the assumptions above hold. 
The iterate $\theta^t$ of the filtered FedAvg satisfies:
\begin{align*}
\E[F(\theta^{T})] - F(\theta^{*}) &\le (1-\eta\mu)^{T}(F(\theta^{\text{init}})-F(\theta^{*})) \\
& \quad + \frac{1}{\eta\mu}\Big( C_{g}(E_{\mathrm{base}} + E_{filter}) \\
& \qquad \qquad + 2L(E_{\mathrm{base}}^{(2)} + E_{filter}^{(2)}) \Big)
\end{align*}
\end{theorem}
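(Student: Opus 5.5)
The proof is a one-step descent inequality obtained from $L$-smoothness, followed by unrolling the recursion with strong convexity. I write the real iterate as the ideal step plus the filtering displacement. By the definition of the base error $\mathbf{B}_t$ (assumption 5) and of $\delta_t$ in Lemma~\ref{lemma:expected_error},
\[
\theta^{t+1}_{\mathrm{real}} - \theta^t = \eta\nabla F(\theta^t) + \mathbf{B}_t + \delta_t ,
\]
taking $\theta^t_{\mathrm{ideal}}=\theta^t$ as the current (real) iterate. This identity is read with the sign convention of a descent step, i.e.\ the displacement equals $-\eta\nabla F(\theta^t)$ plus error terms. I will state that convention explicitly, since the definition of $\mathbf{B}_t$ in the paper is written with a $+$ sign.

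Let $e_t := \mathbf{B}_t + \delta_t$. $L$-smoothness gives
\[
F(\theta^{t+1}) \le F(\theta^t) + \langle \nabla F(\theta^t), \theta^{t+1}-\theta^t\rangle + \tfrac{L}{2}\|\theta^{t+1}-\theta^t\|^2 .
\]
I bound the three contributions as follows.
\begin{itemize}
\item The gradient term gives $-\eta\|\nabla F(\theta^t)\|^2$.
\item The cross term is at most $\|\nabla F(\theta^t)\|\,\|e_t\| \le C_g(\|\mathbf{B}_t\|+\|\delta_t\|)$ by assumption 4 and the triangle inequality. Its conditional expectation is then at most $C_g(E_{\mathrm{base}}+E_{\mathrm{filter}})$ by assumption 6 and Lemma~\ref{lemma:expected_error}.
\item For the quadratic term I use $\|a+b+c\|^2\le 2\|a\|^2+4\|b\|^2+4\|c\|^2$, or a similar split. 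This yields $L\eta^2\|\nabla F\|^2$ plus a constant multiple of $L(\|\mathbf{B}_t\|^2+\|\delta_t\|^2)$. The latter is controlled in expectation by assumption 6 and Lemma~\ref{lemma:variance_error}, giving at most $2L(E^{(2)}_{\mathrm{base}}+E^{(2)}_{\mathrm{filter}})$ once constants are matched.
\end{itemize}
For $\eta\le 1/(2L)$, the gradient terms combine to at most $-\tfrac{\eta}{2}\|\nabla F(\theta^t)\|^2$. The constant factors need care to land exactly on the stated $1/(\eta\mu)$ and $2L$; I would choose the splitting constants (or a step-size restriction such as $\eta\le 1/(4L)$) so that the $\|\nabla F\|^2$ coefficient is at most $-\eta$ up to an absorbed factor of $2$. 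The fallback is to prove the bound with slightly different absolute constants.

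Next I apply strong convexity through the Polyak--Łojasiewicz consequence $\|\nabla F(\theta)\|^2\ge 2\mu(F(\theta)-F(\theta^*))$. Writing $r_t:=\E[F(\theta^t)]-F(\theta^*)$ and taking total expectation by the tower rule over the conditional bounds gives
\[
r_{t+1} \le (1-\eta\mu)\,r_t + K,\qquad K := C_g(E_{\mathrm{base}}+E_{\mathrm{filter}}) + 2L(E^{(2)}_{\mathrm{base}}+E^{(2)}_{\mathrm{filter}}).
\]
Unrolling from $\theta^{\mathrm{init}}$ and summing the geometric series $\sum_{k<T}(1-\eta\mu)^k\le 1/(\eta\mu)$, valid for $\eta\mu<1$, yields exactly the claimed bound.

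The main obstacle is making the decomposition rigorous rather than the algebra. The quantities $E_{\mathrm{filter}}$ and $E^{(2)}_{\mathrm{filter}}$ in the lemmas are stated conditionally on $\theta^t$. They must hold uniformly, which relies on assumptions 2--3 holding for all rounds, so that the tower rule applies. I also need $\mathbf{B}_t$ to be defined from the same $\theta^t$ as the real iterate, and not from a separate ideal trajectory. I will fix this by defining the ideal aggregate at round $t$ as the ideal-filter aggregation of updates computed from the current real model $\theta^t$, which is the reading consistent with Figure~\ref{fig:geometric_intuition}.
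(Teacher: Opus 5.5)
Your proposal is correct and follows the paper's proof essentially step for step. The shared steps are the descent lemma, the decomposition $\theta^{t+1}_{\mathrm{real}}-\theta^t=-\eta\nabla F(\theta^t)+\mathbf{B}_t+\delta_t$, Cauchy--Schwarz with $C_g$ on the cross term, the $2/4/4$ split of the quadratic term, $\eta\le 1/(2L)$, the inequality $\|\nabla F\|^2\ge 2\mu(F-F^*)$, and unrolling the recursion. Your hedging about constants is unnecessary: that split gives exactly $\frac{L}{2}\|\theta^{t+1}_{\mathrm{real}}-\theta^t\|^2\le L\eta^2\|\nabla F(\theta^t)\|^2+2L(\|\mathbf{B}_t\|^2+\|\delta_t\|^2)$, and $-\frac{\eta}{2}\|\nabla F(\theta^t)\|^2\le-\eta\mu(F(\theta^t)-F(\theta^*))$ already yields the factor $1-\eta\mu$, so no further step-size restriction or fallback constants are needed. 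Your explicit fixes of the sign in $\mathbf{B}_t$ and of building the ideal aggregate from the current real iterate address points the paper's proof silently assumes.
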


\begin{corollary}[Byzantine proportion and filter quality]
\label{cor:tolerable}
Let $\epsilon$ be the target suboptimality level, and $\epsilon_{\text{base}}$ be the limiting error induced by honest imperfections.
Assume that the filtering mechanism yields Byzantine filters with
$(\mu_b, \sigma_b)$ and let $q$ denote the proportion of Byzantine workers.
Define the final Byzantine cost
\begin{align*}
C_{\text{Byz}}(q,\mu_b,\sigma_b)
\;=\;
C_g\!\left(q\,\mu_b D_{\text{byz}}
+\sqrt{q}\,\sigma_b D_{\text{byz}}\right)
\\+4L\,n^2 q^2 (\mu_b^2+\sigma_b^2)D_{\text{byz}}^2
\end{align*}

Then, to guarantee convergence within $\epsilon$ of the optimum, it suffices that:
\begin{equation}
\label{eq:conv_requirement}
\frac{C_{\text{Byz}}(q,\mu_b,\sigma_b)}{\eta \mu}
\le
\epsilon - \epsilon_{\text{base}}
\end{equation}

Equivalently, for a fixed $(\mu_b,\sigma_b)$,
the maximal tolerable attacker fraction $q_{\max}$ satisfies
\begin{equation}
\label{eq:qmax}
q_{\max}
\;\approx\;
\frac{
\eta \mu (\epsilon - \epsilon_{\text{base}})
}{
C_g n D_{\text{byz}}(\mu_b + \sigma_b)
}
\end{equation}
\end{corollary}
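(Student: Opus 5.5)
The corollary follows from the convergence theorem. The plan is to split the asymptotic error into an honest part and a Byzantine part, and then to rescale the Byzantine part in terms of $q$.

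\emph{Step 1: isolate the floor.} Let $T\to\infty$ in the theorem, so that the term $(1-\eta\mu)^T(F(\theta^{\text{init}})-F(\theta^*))$ vanishes. The suboptimality is then bounded by
\[
\frac{1}{\eta\mu}\Big(C_g(E_{\mathrm{base}}+E_{filter}) + 2L(E^{(2)}_{\mathrm{base}}+E^{(2)}_{filter})\Big).
\]
Lemmas~\ref{lemma:expected_error} and~\ref{lemma:variance_error} let us split $E_{filter}$ and $E^{(2)}_{filter}$ into honest and Byzantine contributions. The honest ones are the terms in $(1-\mu_h)$ and $\sigma_h$. We group them with $E_{\mathrm{base}}$ and $E^{(2)}_{\mathrm{base}}$ into $\epsilon_{\text{base}}$, defined as $\frac{1}{\eta\mu}$ times these honest-only quantities. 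This is how we read ``the limiting error induced by honest imperfections.''

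\emph{Step 2: express the Byzantine residue through $q$.} The Byzantine first-moment term is $C_g n_b D_{\text{byz}}(\mu_b+\sigma_b)$. The second-moment term is $2L\cdot 2n_b^2(\mu_b^2+\sigma_b^2)D_{\text{byz}}^2 = 4Ln^2q^2(\mu_b^2+\sigma_b^2)D_{\text{byz}}^2$, which already matches the quadratic part of $C_{\text{Byz}}$.

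For the first-moment term, the corollary uses a normalized form, $q\mu_b D_{\text{byz}}$ for the mean and $\sqrt q\,\sigma_b D_{\text{byz}}$ for the fluctuation. To obtain this, I would revisit the derivation in Lemma~\ref{lemma:expected_error} and keep the aggregation weights $n_j/\sum_k n_k\approx 1/n$. The mean displacement contributed by the Byzantine workers is then $\frac{1}{n}\sum_{j\in B}\E[P_j]\,\|\Delta_j\|\le q\mu_b D_{\text{byz}}$. For the centered part $\frac1n\sum_{j\in B}(P_j-\mu_b)\Delta_j$, I would use Jensen together with independence of the filter errors across workers, which gives $\sqrt{n_b\sigma_b^2}\,D_{\text{byz}}/n=\sqrt{q/n}\,\sigma_bD_{\text{byz}}\le\sqrt q\,\sigma_bD_{\text{byz}}$.

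This step is the main obstacle. The lemmas as stated use unnormalized sums and the crude bound $\sigma$ rather than $\sqrt{\cdot}$ concentration. I expect to have to argue that the sharper normalized bound holds under an independence assumption on the $P_j$, or else accept $C_{\text{Byz}}$ as a valid upper bound after normalization. Either way, the quadratic term is kept in the paper's unnormalized form, which only makes it larger and therefore still sufficient.

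\emph{Step 3: conclude.} The floor is at most $\epsilon_{\text{base}} + C_{\text{Byz}}/(\eta\mu)$, so requiring \eqref{eq:conv_requirement} gives suboptimality at most $\epsilon$. For \eqref{eq:qmax}, the regime of interest is small $q$, where the linear term dominates the $q^2$ term and $\sqrt q\,\sigma_b$ is bounded by the looser $nq\sigma_b$-type term of Lemma~\ref{lemma:expected_error}. Using the unnormalized linear term $C_g n q D_{\text{byz}}(\mu_b+\sigma_b)$, we set it equal to $\eta\mu(\epsilon-\epsilon_{\text{base}})$ and solve for $q$, which yields $q_{\max}$. The ``$\approx$'' records that the quadratic term has been dropped.
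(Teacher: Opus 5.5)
You have the intended route. The paper gives no proof of this corollary; its appendix proves only the two lemmas and the theorem. The natural derivation is yours: let $T\to\infty$, fold the honest parts of $E_{filter},E^{(2)}_{filter}$ and the base terms into $\epsilon_{\text{base}}$, and keep the Byzantine residue.

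You are also right that the statement does not follow from the theorem as written. The residue is $C_g\,nq\,D_{\text{byz}}(\mu_b+\sigma_b)+4Ln^2q^2(\mu_b^2+\sigma_b^2)D_{\text{byz}}^2$. Its linear part strictly exceeds $C_g(q\mu_b+\sqrt{q}\,\sigma_b)D_{\text{byz}}$ once $n\ge 2$, $n_b\ge 1$ and $(\mu_b,\sigma_b)\neq(0,0)$, because $q<nq$ and $\sqrt{q}<nq$ for $q\ge 1/n$. So \eqref{eq:conv_requirement} is not sufficient under Lemma~\ref{lemma:expected_error}.

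Renormalizing to weights $1/n$ is a viable repair, but your independence argument is wrong. $P_j=\mathsf{Filter}_P(\Delta_j)$ is a function of $\Delta_j$, and Byzantine updates may be coordinated. Independence across workers only reduces the cross terms to $\langle \E[(P_j-\mu_b)\Delta_j],\E[(P_k-\mu_b)\Delta_k]\rangle$, which does not vanish for a filter that reads the update. You do not need it anyway. The triangle-inequality proof of Lemma~\ref{lemma:expected_error} with weights $1/n$ gives $q(\mu_b+\sigma_b)D_{\text{byz}}\le q\mu_b D_{\text{byz}}+\sqrt{q}\,\sigma_b D_{\text{byz}}$, since $q\le\sqrt{q}$ on $[0,1]$.

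You must use the same weights in $\theta^{t+1}_{\mathrm{ideal}}$, $\delta_t$ and $\mathbf{B}_t$. The theorem's proof uses only the decomposition $-\eta\nabla F+\mathbf{B}_t+\delta_t$ and the two moment bounds, so it carries over. Lemma~\ref{lemma:variance_error} then gives $2q^2$ instead of $2n^2q^2$, so keeping $n^2$ is a harmless over-bound. Present this as a change of model: neither unit weights nor $1/n$ weights reflect the division by $S^{t+1}$ in Algorithm~\ref{alg:approach1}.

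The remaining gap is Step~3, which switches cost functions. Under the normalized cost you just adopted, drop the quadratic term and solve $C_g D_{\text{byz}}(q\mu_b+\sqrt{q}\,\sigma_b)=\eta\mu(\epsilon-\epsilon_{\text{base}})$. This is a quadratic in $\sqrt{q}$, and its root has no factor $n$ in the denominator. Equation \eqref{eq:qmax} is instead the first-order threshold of the theorem's unnormalized residue $C_g nqD_{\text{byz}}(\mu_b+\sigma_b)$. Plugging that term back in therefore gives a fraction that is tolerable but not maximal for $C_{\text{Byz}}$, so ``equivalently'' is not justified: the two displays come from two different costs.

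Dropping the positive $q^2$ term also makes \eqref{eq:qmax} an optimistic estimate even of the threshold it does correspond to, so it is not a sufficient condition by itself. A correct write-up should say which model each display is derived from, rather than present them as one consistent derivation.
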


\subsubsection*{Practical considerations}
Equations \eqref{eq:conv_requirement} and \eqref{eq:qmax} quantify how the filter's Byzantine mean $\mu_b$ and variance $\sigma_b$ trade-off against the tolerated attacker ratio $q$. 
As $q$ increases, the region $(\mu_b,\sigma_b)$ that provides convergence guarantees given system parameters, linearly shrinks toward the ideal filter $(0,0)$. 
Figure \ref{fig:q_max_feasability} illustrates this effect.

\begin{figure}[ht!]
    \centering
    \includegraphics[width=1.0\linewidth]{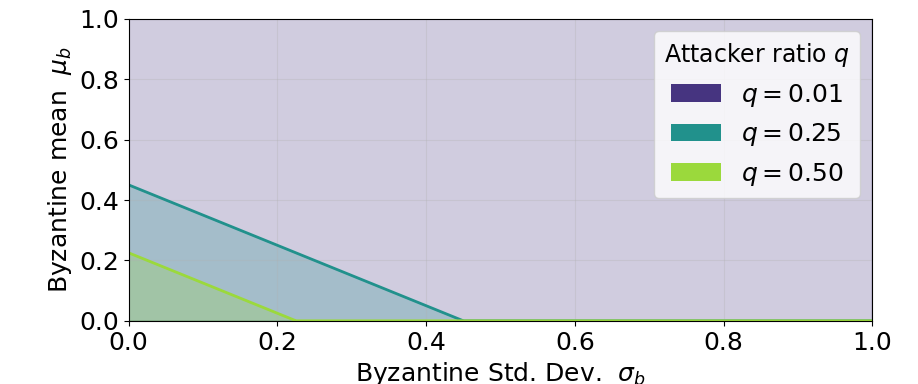}
    \caption{Feasibility regions for different Byzantine proportions according to $(\mu_b, \sigma_b)$ and fixed system parameters ($\epsilon, \epsilon_{base}, \eta, \mu, L, C_g, D_{byz}$). As $q$ grows, the space of filters that provide convergence guarantees to $\epsilon$ linearly shrinks towards the ($\mu_b = 0, \sigma_b=0 $) filter.}
    \label{fig:q_max_feasability}
\end{figure}

\section{Filtering Data Generation}
\label{sec:filtering_data_gen}
Our approach relies on an ML-based filtering mechanism on the server side to reweigh honest and adversarial updates. 
This filter operates on the update's differences with the previously aggregated model, denoted $\Delta_i^t$. 
These deltas carry significant inference potential and have been shown to enable up to reconstruction attacks in \fedavg{}-based federated learning~\cite{dimitrov2022dataleakagefederatedaveraging}.
In our case, we leverage them in a positive manner to detect Byzantine behaviors.

To collect statistically representative updates that capture both honest worker behavior and Byzantine manipulations, we perform several SGD training for $T$ epochs with different random initializations and data partitions. 
At each epoch $t \in [T]$, we introduce a forked Byzantine update $\bar{\theta}^{t+1}$ alongside the honest one $\theta^{t+1}$. 
The corresponding update differences are then computed as:
$\Delta_t = \theta^{t+1} - \theta^t$ and $\bar{\Delta}_t = \bar{\theta}^{t+1} - \theta^t$.
These update differences are labeled accordingly and stored in a dataset as shown in Algorithm~\ref{alg:filtering_data_generation}.

\begin{algorithm}[ht!]
\caption{Filtering data generation for a property $P$}
\label{alg:filtering_data_generation}
\begin{algorithmic}[1]
\Require A base dataset $D_{\text{shadow}}$, number of epochs $T$
\Ensure Dataset $\mathcal{D}_P = \{(\Delta_i, y_i)\}^{K}_{i=1}$ for a property $P$
\State $\mathcal{D}_P \gets \emptyset$ \Comment{Initialize dataset}

\For{$i = 1$ to $R$} 
\State $\theta^{\text{init}} \leftarrow \mathsf{RandomInit(Seed_i)}$
\State $D^i_{\text{shadow}} \leftarrow \mathsf{DirichletSample}(D_{\text{shadow}}, \alpha=1.0)$
\For{$t = 1$ to $T$}
    \State $\theta^{t+1} \leftarrow \mathsf{HonestUpdate}(\theta^t, D^i_{\text{shadow}})$
    \State $\bar{\theta}^{t+1} \leftarrow \mathsf{DishonestUpdate}_P(\theta^t, D^i_{\text{shadow}})$
    \State $\Delta_t = \theta^{t+1} - \theta^t$ \Comment{Honest update difference}
    \State  $\bar{\Delta}_t = \bar{\theta}^{t+1} - \theta^t$ \Comment{Byzantine update difference}
    \State $\mathcal{D}_P \leftarrow \mathcal{D}_P \cup (\Delta_t, 1)$   \Comment{Label: 1}
    \State $\mathcal{D}_P \leftarrow \mathcal{D}_P \cup (\bar{\Delta}_t, 0)$ \Comment{Label: 0}
\EndFor
\EndFor
\State \Return $\mathcal{D}_P$
\end{algorithmic}
\end{algorithm}

Our key idea is to explore diverse convergence trajectories in the parameter space of a given model architecture, induced by the different data distributions generated via $\mathsf{DirichletSample}(\cdot)$. 
These trajectories capture the statistical patterns arising from both $\mathsf{HonestUpdate}(\cdot)$ and $\mathsf{DishonestUpdate}_P(\cdot)$, the latter being a misbehavior in FL. 
For instance, $\mathsf{DishonestUpdate}_P(\theta^t, D)$ could correspond to performing a gradient-ascent step on $\theta^t$.

\subsubsection*{Filtering model type}
Given the data generated in Algorithm \ref{alg:filtering_data_generation}, SVMs are a natural choice for classifying honest/Byzantine updates for the following reasons:

\begin{itemize}[leftmargin=*]
\item \textbf{Robustness to high-dimensional data.} The update differences ($\Delta_t$) lie in a high-dimensional parameter space. 
SVMs efficiently handle high-dimensional data and can find an optimal separating hyperplane between genuine and adversarial updates~\cite{hsu2022proliferationsupportvectorshigh, NAKAYAMA201788}.
\item \textbf{Effective for small and structured datasets.} 
Given the computational cost of collecting update differences, our model must be able to learn statistical patterns from limited data. 
SVMs are known to perform well on \emph{horizontal data} \cite{qiao2013flexiblehighdimensionalclassificationmachines}.
\end{itemize}

\subsubsection*{Data for shadow update generation}
\label{sec:shadow_data_considerations}

We outline two practical approaches to obtain or construct a suitable $D_{\text{shadow}}$ for \textit{offline} filter training:
\begin{enumerate}[leftmargin=*]
    \item \textbf{Publicly available datasets.} The server can leverage public datasets analogous to the FL task's domain and model architecture. 
    The meta-classifiers are trained to identify differential properties and anomalies in ($\bar{\Delta}_i^t$) that stem from specific behaviors (\eg{}, backdoor injection, gradient ascent) rather than from features of the raw worker data itself. 
    These behavioral signatures in the update space are learnable even if $D_{\text{shadow}}$ is not identically distributed on the workers' data.

    \item \textbf{Synthetic data generation.} Generative models (GANs, VAEs) are commonly used in privacy attacks to generate shadow models' training data \cite{shokri2017membership,  guépin2023syntheticneedremovingauxiliary, li2024darkfeddatafreebackdoorattack, bendoukha2025} from smaller, statistically restricted datasets. 
    In our setting, this can improve the filtering mechanism by benefiting from more varied shadow updates, even from limited initial seed data, thus improving the robustness and coverage of the trained SVM filters.
\end{enumerate}

\noindent The core principle is that the anomalies introduced by these malicious behaviors into the update vectors ($\bar{\Delta}^t_i$) are often orders of magnitude more pronounced or qualitatively different from the variations observed between honest updates ($\Delta_i^t$) arising from differing local data distributions. 
To support these claims, our experimental setting integrates these considerations. 
Specifically, we use $\mathsf{EMNIST}$ and \acsincome{} California state datasets as $D_{\text{shadow}}$ and, respectively, the broader \femnist{} and 10 other \acsincome{} datasets from states (excluding California) for FL. 
Hence, representing the use of publicly available datasets scenario.  

\section{FHE-Efficient Filters}
\label{sec:fhe_efficient_filters}

Predictive models' inference on FHE encrypted data is computationally intensive~\cite{cryptoeprint:2023/257, stoian2023deepneuralnetworksencrypted, cryptoeprint:2023/647, 8861379}. 
Therefore, designing FHE-friendly models demands careful optimization to balance performance and accuracy. 
Due to their inherent simplicity, SVMs present a significant advantage for efficient inference on FHE-encrypted data. 
An SVM consists of support vectors $\{x_i\}_{i \in [k]}$, their coefficients and labels defining the decision boundary. 
Inference on an input $x$ is:
$$\hat{y} = \mathsf{Sign} \left( \left( \sum_{i=0}^{k-1} \alpha_i y_i \mathsf{Kern}( x_i, x) \right) + \beta \right)$$,
in which $\mathsf{Kern(\cdot, \cdot)}$ is a kernel function,  $\alpha_i$ is the Lagrange multiplier, $y_i$ is the label associated with the support vector $x_i$ and $\beta$ is a bias intercept value.
Hereafter, we investigate the use of SVM for encrypted filtering.

\subsection{{SVM inference on FHE encrypted data}}

For linearly or polynomially separable data, $\mathsf{Kern(\cdot , \cdot)}$ is a vector dot product or a low-degree exponentiation. 
For more complex decision boundaries, the kernel function is either a Radial Basis Function ($\mathsf{RBF}(x)=e^{-\gamma||x-x_i||^2}$), or a $\mathsf{Sigmoid}$. 
Consequently, SVM inference on FHE-encrypted data faces two computational bottlenecks:
\begin{enumerate}[leftmargin=*]
    \item The first is the nature of the kernel function. 
    While linear and low-degree polynomial kernels are suitable for FHE evaluation, $\mathsf{Sigmoid}$ and $\mathsf{RBF}$ kernels require high-degree polynomial approximations. 
    This induces significant computational overhead and cumulative approximation errors arising from summation over all support vectors, leading to incorrect predictions, especially for samples near the decision hyperplane. 
    \item When the kernel is linear or a low-degree polynomial, another computational bottleneck emerges due to the complexity of the SVM, which manifests in a large number of support vectors. 
    As a result, a significant number of homomorphic dot products and/or exponentiations are required, further increasing the computational burden.
\end{enumerate}

It is noteworthy that enforcing a linear kernel with non-linearly separable data leads to overfitting and poor performance~\cite{sandjib2023_linear_svms}. 
To avoid this, we explore low-degree polynomial kernels, which can reduce the support vector count at the cost of increased kernel degree. 
Balancing kernel degree, support vectors and SVM accuracy is crucial for FHE efficiency. 

Our approach identifies linearly or polynomially separable update components for each attack and performs a grid search to identify optimal SVM configurations for homomorphic inference with CKKS. 
In the following, we detail our strategy. 

 \subsection{Optimizing the SVM Filtering}

In PIAs, the input to the inference model is either the vector of flattened parameters from the target model~\cite{parisot2021propertyinferenceattacksconvolutional} or a pre-computed embedding of these parameters, obtained through techniques such as auto-encoders~\cite{Wang2020PropertyIA} or Principal Component Analysis (PCA)~\cite{parisot2021propertyinferenceattacksconvolutional}.

To address the challenge of high dimensionality, we analyze the target model's internal structure to identify components that have the strongest correlation with the property considered. 
Instead of providing the full update $\Delta^t_j$ to the SVM filtering model, we focus on specific layers in which the presence or absence of the targeted property is most reflected. 
By restricting encrypted inference to a single layer of each worker's updates, we significantly reduce the complexity of the SVM filter while maintaining and even enhancing the predictive performance. 

First, we identify the layer that captures the optimal property signal before reducing its dimensionality using Supervised Principal Component Analysis (SPCA)~\cite{BARSHAN20111357}. 
To do so, we conduct a grid search across all model layers and their potential dimensionality reductions while evaluating the resulting SVMs' predictive performance and expected FHE cost ($\mathsf{dim}$ and $n_{\mathsf{sv}}$).
Unlike standard PCA, which preserves only the directions of maximum variance, Supervised PCA (SPCA) considers label information by selecting components that are not only high-variance but also highly correlated with the target labels.
By aligning the projection directions with label-dependent structure, SPCA enhances class separability in the reduced space, leading to a more discriminative projection.

To showcase our approach, we consider four models for \femnist{}, \cifar{}, \acsincome{} and \GTSRB{} presented in Table \ref{tab:nn_models}. Figure~\ref{fig:pca_spca_projections} depicts the enhanced separability brought by SPCA compared to PCA on backdoor update differences from the first convolution layer. 
Indeed, SPCA widens the separation along the first component between honest and backdoor update clusters, 
moving from an average distance of around $2$ to more than $200$. 
This separation directly impacts the SVM's decision boundary and reduces the number of support vectors, thereby improving inference runtime.

\begin{table}[ht!]
\centering
\caption{Datasets, model architectures and attack types.}
\begin{tabular}{l|l|l}
\hline
\textbf{Dataset} & \textbf{Model Architecture}              & \textbf{Attack Type}     \\
\hline
FEMNIST          & LeNet-5 \cite{LeNet} & Label shuffling                \\
CIFAR-10         & ResNet-14 \cite{resnet}                      & Gradient Ascent          \\
GTSRB            & VGG11 \cite{vgg}                              & Backdoor                 \\
ACSIncome       & 3-layer MLP                       & Label Flipping           \\
\hline
\end{tabular}
\label{tab:nn_models}
\end{table}

\begin{figure}[ht!]
    \centering
    \captionbox*{\small{(a) Unsupervised PCA (53 support vectors)}}{\includegraphics[width=0.23\textwidth]{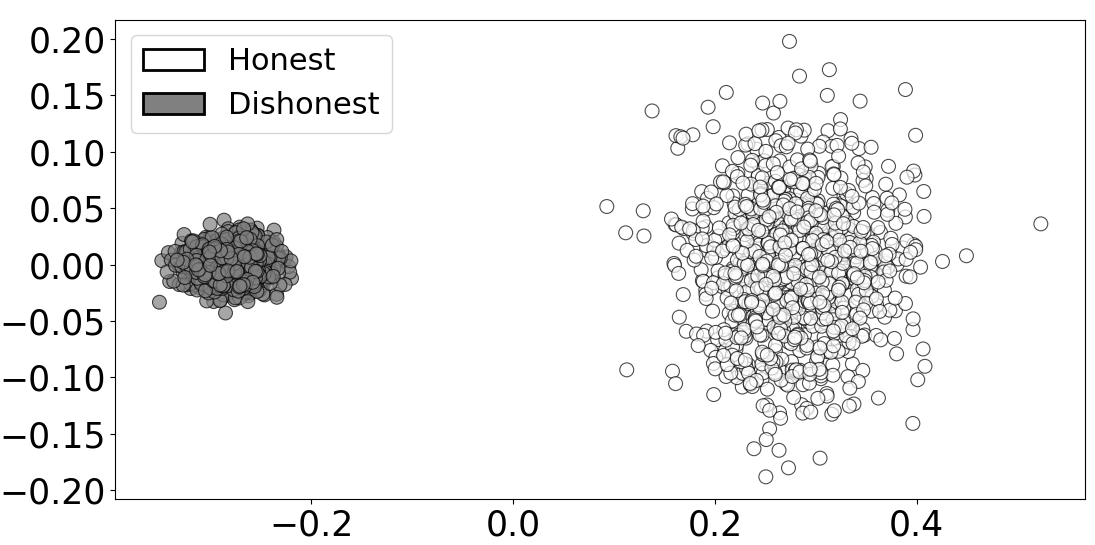}}
    \hfill 
    \captionbox*{\small{(b) Supervised PCA (38 support vectors)}}{\includegraphics[width=0.23\textwidth]{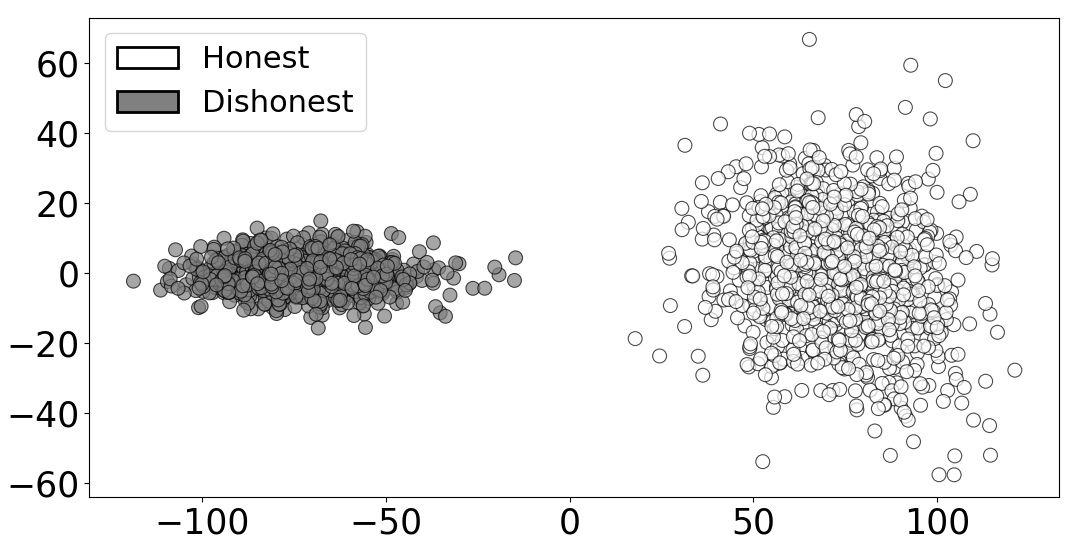}}
    \caption{PCA and SPCA projection of the input layer weights of updates $(\Delta_i)$. 
    The first component is the x-axis while the second component is the y-axis.}
\label{fig:pca_spca_projections}
\end{figure}

\subsection{Optimal filters via grid search}

We explore each attack's separability using SVMs with varying kernel degrees (linear, quadratic, cubic and quartic) applied to different SPCA-projected representations (8 and 64 dimensions) of layer weights.
To do so, we introduce $\mathcal{C}_{\mathsf{SVM}}^{\mathsf{FHE}}$ a score that balances the following criteria: 
\begin{enumerate}[leftmargin=*]
    \item The \textbf{predictive performance} of the SVM is represented by the $\mathsf{F1}$-score.
    \item The \textbf{model efficiency} is captured by the number of support vectors $(n_{\mathsf{sv}})$, which highly affects computational costs during inference.
    \item The \textbf{kernel simplicity}, denoted by $(\mathsf{deg})$, penalizes higher kernel degrees to reduce FHE inference cost.
    \item The \textbf{projection dimensionality}, referred to as $(\mathsf{dim})$, favors lower SPCA dimensions as they reduce the computational cost of the FHE projection.
     \item The \textbf{factor} ($\mathsf{P}$) denotes the extent of parallelization capabilities, which mainly applies to $(n_{\mathsf{sv}})$. 
\end{enumerate}
Thus, we define the following heuristic FHE-SVM score: 
$$
\mathcal{C}_{\mathsf{SVM}}^{\mathsf{FHE}}(\mathsf{F1}, \mathsf{dim}, \mathsf{deg}, n_{\mathsf{sv}}) = \frac{\mathsf{F1}\text{-score}}{\mathsf{dim} \cdot \mathsf{deg} \cdot (\mathsf{P} \cdot n_{\mathsf{sv}})}
$$

The choice of the $\mathsf{F1}$-score as the main evaluation metric is justified by the nature of the application. 
In particular, the $\mathsf{F1}$-score is preferred in ML-based anomaly detection applications~\cite{Pekar_2024} due to the asymmetric cost of misclassification.
Indeed, false negatives (\ie{}, honest updates classified as Byzantine) exclude valid contributions, leading to model degradation, while false positives allow malicious behavior to persist, compromising convergence. 

Since the evaluation of the sum over all support vectors $(\mathsf{SVs})$ during inference is highly parallelizable, the factor $\mathsf{P}$ adjusts the influence of $(n_{\mathsf{sv}})$ count on the overall score. 
This ensures that configurations with a higher number of support vectors are not unfairly penalized when parallel processing is achievable. 
The weight $\mathsf{P}$ ranges from $1$, indicating no parallelization capabilities, to $\frac{1}{n_{\mathsf{sv}}}$, representing maximum parallelization, in which each support vector is evaluated simultaneously, with intermediate values capturing varying parallelization capabilities.

Figure \ref{fig:bubble_plot_grid_search} shows the exploration of optimal SVM configurations with respect to $\mathcal{C}^{\mathsf{FHE}}_{\mathsf{SVM}}$ over the optimal layer for all models and datasets considered.
Table \ref{tab:best_settings} provides optimal settings for each property and the associated predictive performances.

\begin{figure*}[ht!]
    \centering
    \captionbox*{\scriptsize{Backdoor (\GTSRB{} and VGG11)}}{\includegraphics[width=0.245\textwidth]{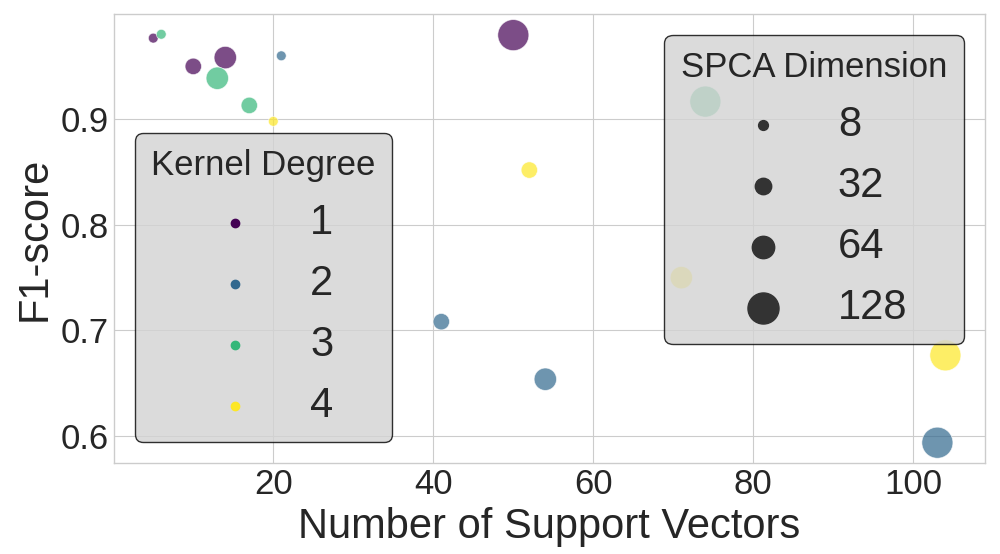}}
    \hfill
    \captionbox*{ \scriptsize{Gradient-ascent (\cifar{} and ResNet)}   }{\includegraphics[width=0.245\textwidth]{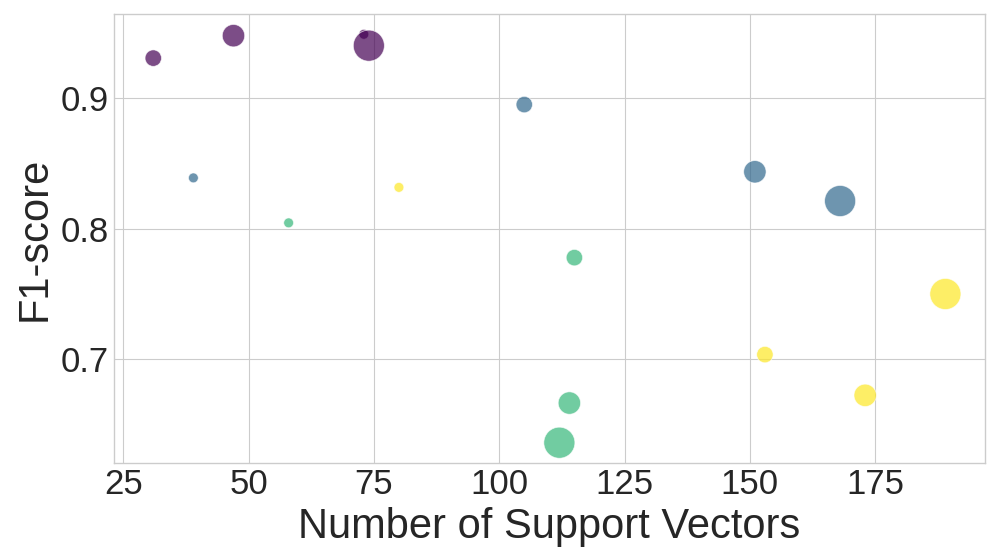}}
    \hfill
    \captionbox*{ \scriptsize{L.F (\acsincome{} and 3-MLP)}   }
    {\includegraphics[width=0.245\textwidth]{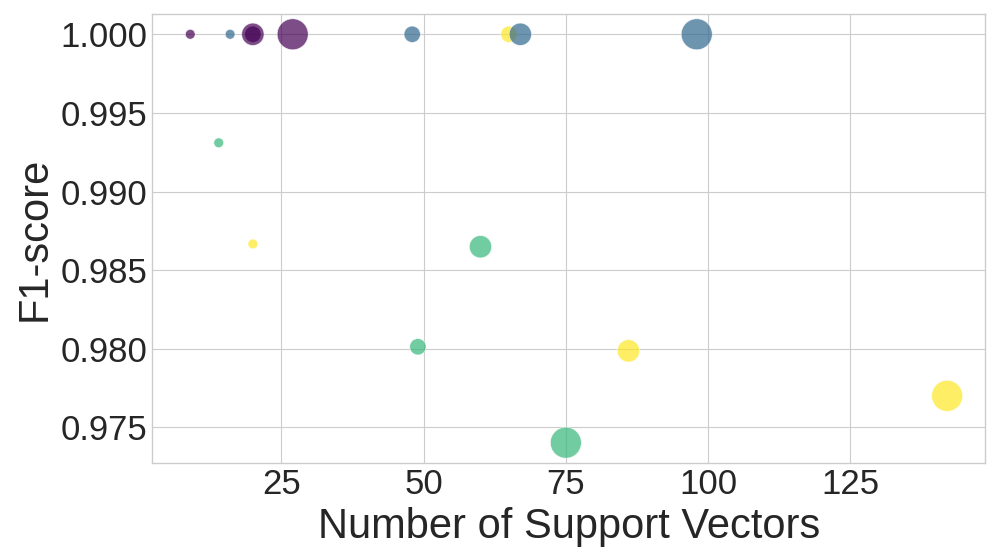}}
    \hfill
    \captionbox*{ \scriptsize{L.S (\femnist{} and LeNet5)}   }
    {\includegraphics[width=0.245\textwidth]{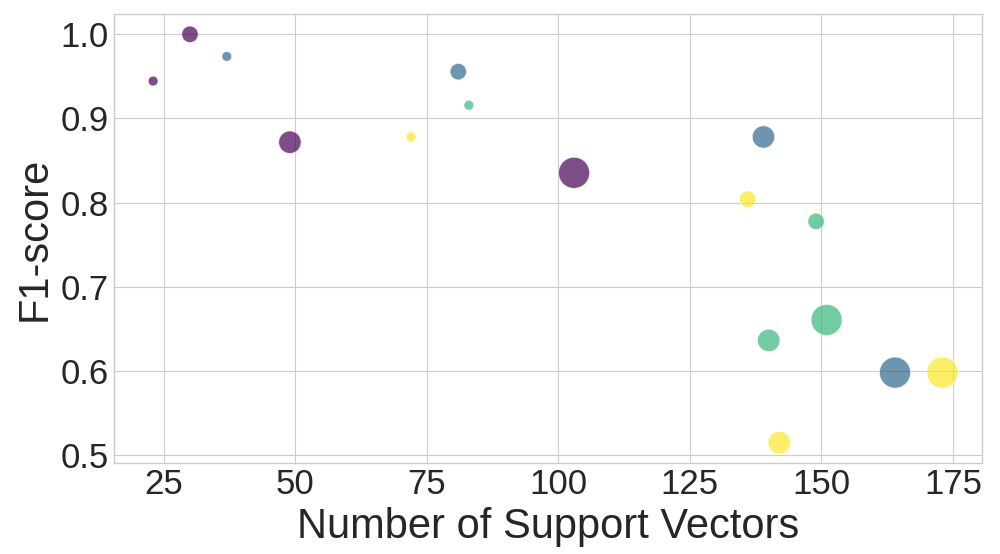}}
    \caption{Grid search over optimal layers for each architecture. Bubbles' color and size represent SVM's kernel degree, and SPCA projection dimension, respectively. Optimal configurations w.r.t $\mathcal{C}^{\mathsf{FHE}}_{\mathsf{SVM}}$ are the small purple bubbles in the upper-left corner of each plot.}
\label{fig:bubble_plot_grid_search}
\end{figure*}

\begin{table}[ht!]
\caption{Best filters per attack based on grid search using 
$\mathcal{C}^{\mathsf{FHE}}_{\mathsf{SVM}}$. 
Layer notation follows: I = input layer, Ck = conv layer k, Hk = hidden FC layer k. 
\textbf{L.F} and \textbf{L.S} refer to label flipping and label shuffling.}
\label{tab:best_settings}
\centering
\footnotesize 
\begin{tabular}{lcccc}
\toprule
\textbf{Parameter} & \textbf{Backdoor} & \textbf{L.F} & \textbf{Grad. Ascent} & \textbf{L.S} \\
\midrule
Layer & C1 & H3 & Last FC & C2 \\
$\mathsf{dim}$ & 8 & 64 & 8 & 8 \\
$n_{\mathsf{SV}}$ & 48 & 76 & 27 & 55 \\
Degree & 1 & 2 & 1 & 1 \\
\midrule
F1-score & 90.3\% & 91.5\% & 94.1\% & 89.2\% \\
\bottomrule
\end{tabular}
\end{table}


\section{Byzantine Filtering with CKKS}
\label{sec:ckks_building_blocks_for_filtering}
In this section, we detail the building blocks necessary to efficiently implement the FHE-filtering process, which includes SPCA projection and SVM inference.

\subsection{Matrix-Vector product and slot summation}
\label{sec:ckks_spca_projection}

Our filtering method uses batched matrix-vector multiplications under the CKKS scheme in two key phases. 
First, we project the encrypted layer weights $\llbracket w \rrbracket$ using SPCA by computing $ \llbracket v \rrbracket = A \cdot \llbracket w \rrbracket$, in which $A \in \mathbb{R}^{\mathsf{dim} \times \mathsf{d}}$ is the SPCA transformation matrix of $\mathsf{dim}$ rows and $\mathsf{d}$ columns ($\mathsf{d}$ is the number of layer weights). 
Then, we carry out the SVM hyperplane projection by evaluating component-wise $(M \cdot \llbracket v \rrbracket + \beta)^{\mathsf{deg}}$, in which $M \in \mathbb{R}^{n_{\mathsf{sv}} \times \mathsf{dim}}$. 

A key challenge in these matrix-vector operations is minimizing the number of rotations, a computationally expensive operation in CKKS.
Due to its importance in applications like privacy-preserving ML inference and bootstrapping, optimizing this operation is a central focus of the CKKS and related RLWE-based schemes \cite{ckks_boot_2020_bossuat, PPML_with_CKKS, cryptoeprint:2023/1649, ebel2025orionfullyhomomorphicencryption}.
In this work, we use the method introduced by Halevi and Shoup \cite{cryptoeprint:2014/106} that relies on multiplying the rotated ciphertext by \emph{the general diagonals} of the matrix and accumulating the results. 

After the SVM hyperplane projection, we complete inference by adding all the slots of the ciphertext encrypting $(M \cdot \llbracket v \rrbracket + \beta)^{\mathsf{deg}}$, by using the $\mathsf{TotalSum}$ algorithm introduced by Halevi and Shoup~\cite{cryptoeprint:2014/106}. 
The latter uses rotations to shift the vector elements by powers of two, allowing partial sums to be accumulated across slots in logarithmic steps. 
By iteratively adding these rotated ciphertexts, all values are homomorphically summed and the result is replicated in every ciphertext slot.

\subsection{Polynomial approximation of Heavyside}
The binary prediction of an SVM is expressed as the sign of the projected sample on the decision hyperplane. 
Our approach involves \emph{nullifying} Byzantine updates and excluding them from the aggregation. To do so, we replace the standard $\mathsf{Sign}$ function with a $\mathsf{Heavyside}$ activation, defined as:  
\[
\mathsf{Heavyside}(x) =
\begin{cases} 
1 & \text{if } x \geq 0, \\
0 & \text{if } x < 0.
\end{cases}
\]

Approximating step functions such as $\mathsf{Heavyside}$ with polynomials results in higher approximation errors~\cite{cryptoeprint:2020/834,cryptoeprint:2021/315, cryptoeprint:2021/1215}. 
To address this, we replace the $\mathsf{Heavyside}$ function with a smoother alternative: a $\mathsf{Sigmoid}$ function with a scaled input by a (plaintext) factor $k$. 
Figure~\ref{fig:Sigmoid(kx)} compares various $\mathsf{Sigmoid}$ functions with respect to different scaling factors $k$.
\begin{figure}[ht!]
    \centering
    \includegraphics[width=0.8\linewidth]{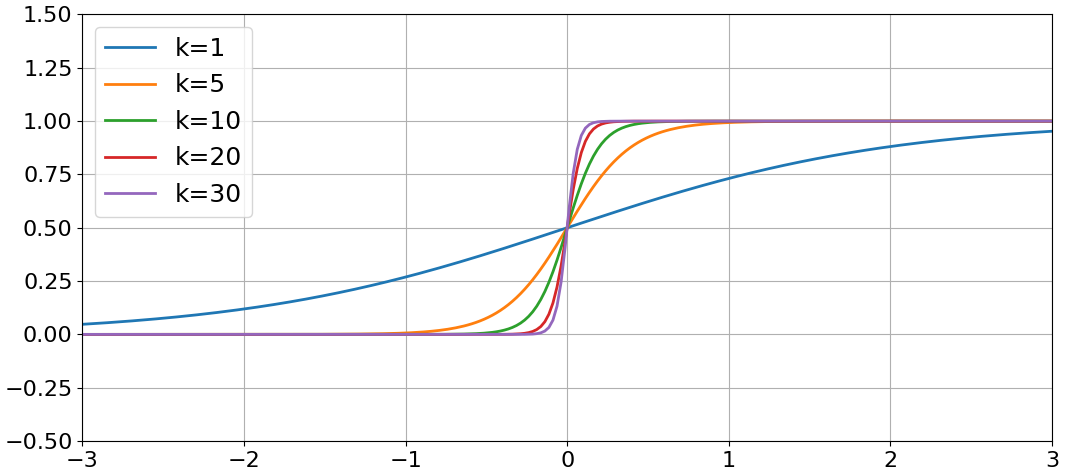}
    \caption{$\mathsf{Sigmoid}(kx)$ with $k \in \{1, 5, 10, 20, 30\}$.}
    \label{fig:Sigmoid(kx)}
\end{figure}
The $\mathsf{Sigmoid}$ function is approximated using a Chebyshev approximation on an interval around 0.  
\begin{figure}[ht!]
    \centering
    \includegraphics[width=0.8\linewidth]{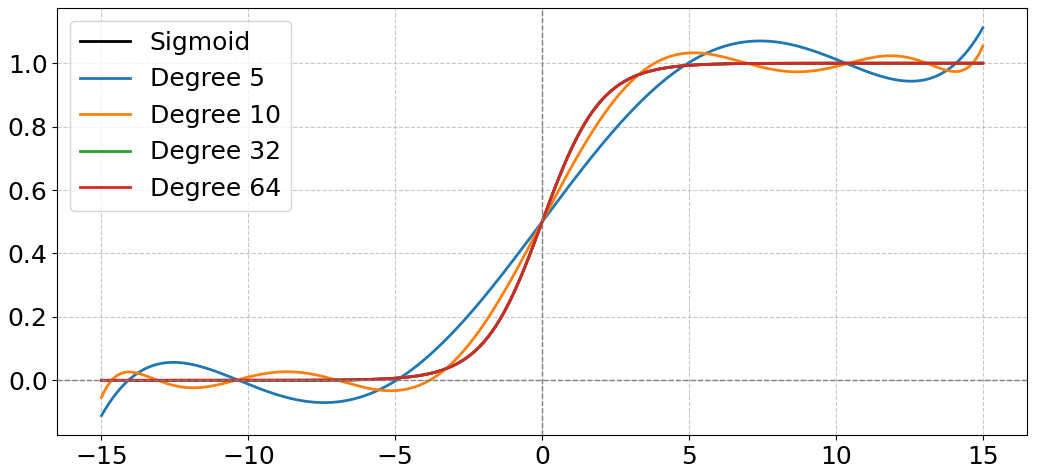}
    \caption{Chebyshev approximation with various degrees on $[-15, 15]$.}
    \label{fig:Sigmoid_approx(kx)}
\end{figure}

\paragraph{\textbf{Paterson-Stockmayer polynomial evaluation}}
To efficiently evaluate the Chebyshev approximation of $\mathsf{Sigmoid}$, we rely on Paterson-Stockmayer's baby-step giant-step approach. 
This algorithm optimizes the evaluation of a polynomial of degree $d$ by dividing the process into two stages:
\begin{enumerate}[leftmargin=*]
    \item \textbf{Baby steps.} Compute and store the powers of $x$: $\{x^1, x^2, \cdots, x^{m-1}\}$ with $m = \lceil \sqrt{d} \rceil $ 
    \item \textbf{Giant steps.} Compute and store the higher powers of $x^m$: $\{x^m, x^{2m}, \cdots, x^{m^2}\}$
    \item Every monomial of our polynomial is evaluated as a product of two elements from both the baby and giant steps sets. 
\end{enumerate}
This process reduces the multiplicative complexity of the polynomial evaluation from $O(d)$ to $O(\sqrt{d})$, which leads to two key benefits. 
First, since multiplications are the most computationally expensive operation in CKKS (and FHE in general), reducing the number of multiplications directly enhances efficiency. 
Second, by lowering the overall multiplicative depth of our FHE circuit, we can use more efficient CKKS parameters, reducing the cost of all unitary FHE operations.
For our Chebyshev approximation of $\mathsf{Sigmoid}$, we use $d=32$, which requires a homomorphic multiplicative depth of $8$.

\paragraph{\textbf{Smooth penalty instead of inclusion-exclusion}}
Using a smooth penalty as a filtering strategy, as opposed to a binary decision (through $\mathsf{Heavyside}$), offers several advantages, particularly in the presence of uncertainty in the filtering process. 
A binary approach, which strictly includes or excludes updates based on a hard threshold, can be overly rigid and fails to account for this uncertainty. 
In contrast, a $\mathsf{Sigmoid}$-based penalty provides a gradual transition between low and high weights, allowing for more nuanced adjustments. 
Also, the boundary is homomorphically adjustable to balance false positive and true negative rates. 
This is achieved via: 
$$
  \hat{y} = \mathsf{PolyHeavyside} \left(\left( \sum_{i=0}^{k-1} \alpha_i y_i \mathsf{Kern}( x_i,  \llbracket x \rrbracket) + \beta \right) + \mathsf{B}\right),  
$$
in which $\mathsf{B}\neq0$ denotes a new boundary value. 
Negative values induce less tolerance, in which higher confidence is needed to achieve $P^t_j \geq 0.5$, while positive values reduce the tolerance of the filtering process.
From an FHE perspective, this is a ciphertext-plaintext addition\footnote{The $\mathsf{B}$ value is encoded as a degree $N$ polynomial and summed directly to the second polynomial term of the RLWE ciphertext format (\emph{c.f.} Appendix \ref{sec:appendix_ckks}).}, which is computationally negligible and does not increase the ciphertext's noise, nor consumes FHE-computational levels.  
This flexibility allows the approach to be tailored to the specific property. 
For instance, when targeting gradient-ascent attackers, incorporating even a single Byzantine update into the aggregation can severely disrupt the achieved learning progress, making it a greater concern than temporarily down-weighting an honest worker for one round.

\subsection{Model Normalization with Newton-Raphson}

In Algorithm~\ref{alg:approach1}, each worker's update $\theta^t_i$ is scaled by $\frac{n_i}{\sum_{k\in [n]} n_k} \llbracket P^t_i \rrbracket$. 
This approach requires a normalization step after computing the sum of these scaled updates. 
Indeed, the scaling factors do not necessarily sum to $1$ (unless all the property filters are equal to $1$). 
The encrypted sum of the factors $\llbracket S^t \rrbracket = \sum_{i \in [n]} \left( \frac{n_i}{\sum_{k\in [n]} n_k} \right) \llbracket P^t_i \rrbracket$ is in $[0, 1]$. 
Hence, to readjust the aggregated model's parameters to the correct scale, a division by $S^t$ is required at each iteration. 
Since the division is not directly supported in the CKKS encryption, we approximate the inverse of $S^t$ using the Newton-Raphson method.

\paragraph{\textbf{Newton-Raphson for FHE division}}
The Newton-Raphson method aims to approach the root of a function $f$ via a converging sequence. 
It starts from an initial guess $x_0$ and updates it via the formula:
$$
x_{k+1} = x_k - \frac{f(x_k)}{f'(x_k)}.
$$
Given an encrypted scalar $\llbracket S^t \rrbracket$, approximating its inverse involves approximating the root of the function $f(x) = \frac{1}{x} - S^t$. 
The associated Newton-Raphson approximation sequence is: 
\begin{equation*}
    x_{k+1} = x_k (2 - \llbracket S^t \rrbracket x_k)
\end{equation*}
Selecting an appropriate initial guess \( x_0 \) is crucial for fast convergence. 
Since $S^t$ is in \([0,1]\) at each iteration \( t \), we employ a degree-1 Taylor approximation of $ \frac{1}{x} $ at the midpoint $(x = \nicefrac{1}{2})$. 
Hence, we initialize the Newton-Raphson sequence with $4 - 4 \llbracket S^t \rrbracket$, which requires only one plaintext-ciphertext multiplication and one subtraction. 
This approach minimizes the computational cost and avoids any multiplicative level consumption. 
Subsequently, each iteration requires two homomorphic multiplications, with a total number of FHE multiplications being $2k$.
Algorithm~\ref{alg:newton_raphson_fhe} outlines this method, while Figure~\ref{fig:newton_raphson_error_distrib} illustrates the distribution of approximation errors for the Newton-Raphson inverse computation (executed in cleartext) as a function of the number of iterations.

\begin{algorithm}[ht!]
\caption{ $\mathsf{FHEInverse}$ : Newton-Raphson FHE Inversion}
\label{alg:newton_raphson_fhe}
\begin{algorithmic}[1]
\Require Ciphertext $ \mathsf{ct}_{S^t} $ encrypting scalar $S^t \in [0, 1]$, evaluation key $\evk$, number of iterations $k$
\Ensure Ciphertext $\mathsf{ct}_{S^{-1}}$ encrypting an approximation of $(S^t)^{-1}$
\State \textbf{Initialize:} $\mathsf{ct}_x \gets \mathsf{PolyEval}(4  - 4x, \mathsf{ct}_{S^t})$  
\For{$ j \gets 1$ to $ k $}
    \State $\mathsf{ct}_{\mathsf{tmp}} \gets \mathsf{Mult}_{\evk}(\mathsf{ct}_S, \mathsf{ct}_x)$ \Comment{Compute $S^t x_k$}
    \State $\mathsf{ct}_{\mathsf{tmp}} \gets \mathsf{Sub}_{\evk}(2, \mathsf{ct}_{\mathsf{tmp}})$ \Comment{Compute $2 - S^t x_k$}
    \State $\mathsf{ct}_x \gets \mathsf{Mult}_{\evk}(\mathsf{ct}_x, \mathsf{ct}_{\mathsf{tmp}})$ \Comment{Update $x_{k+1}$}
\EndFor
\State \Return $\mathsf{ct}_x$
\end{algorithmic}
\end{algorithm}
Figure~\ref{fig:newton_raphson_error_distrib} illustrates that $k=2$ yields a sufficiently accurate inverse approximation, with an expected approximation error of around $10^{-2}$ and a computational cost of $4$ FHE products. 
Thus, we consistently use $k=2$ for $\mathsf{EncryptedInverse}(\cdot)$ in our aggregation circuit. 
\begin{figure}[ht!]
    \centering
    \includegraphics[width=0.9\linewidth]{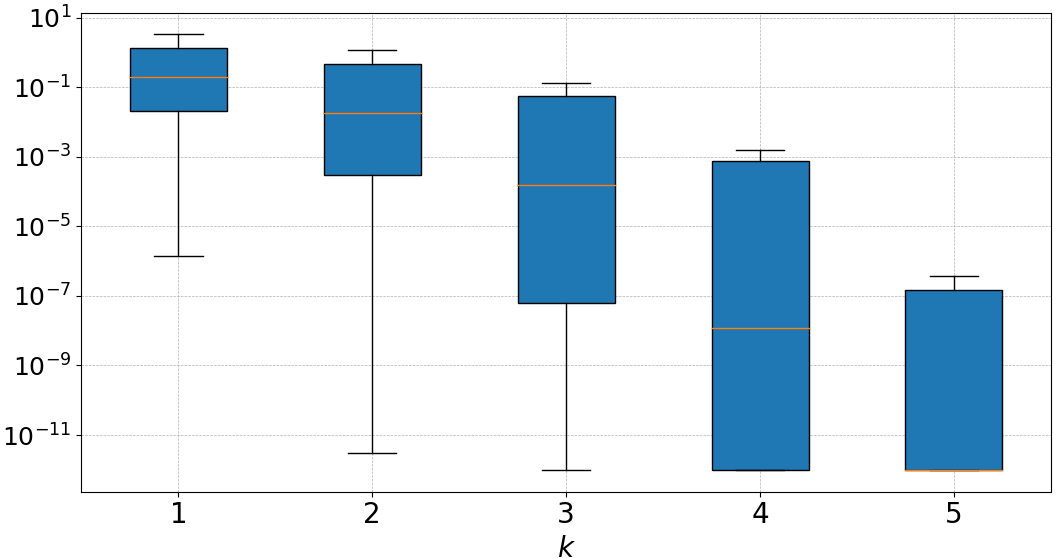}
    \caption{Distributions of $|\nicefrac{1}{x} - \mathsf{EncryptedInverse}(x)|$ over 100 seeds for $k \in \{1, \dots, 5\}$.}
    \label{fig:newton_raphson_error_distrib}
\end{figure}

Finally, Figure~\ref{fig:filtering_process} provides an illustration of the filtering process.
\begin{figure}[ht!]
    \centering
    \includegraphics[width=1.05\linewidth]{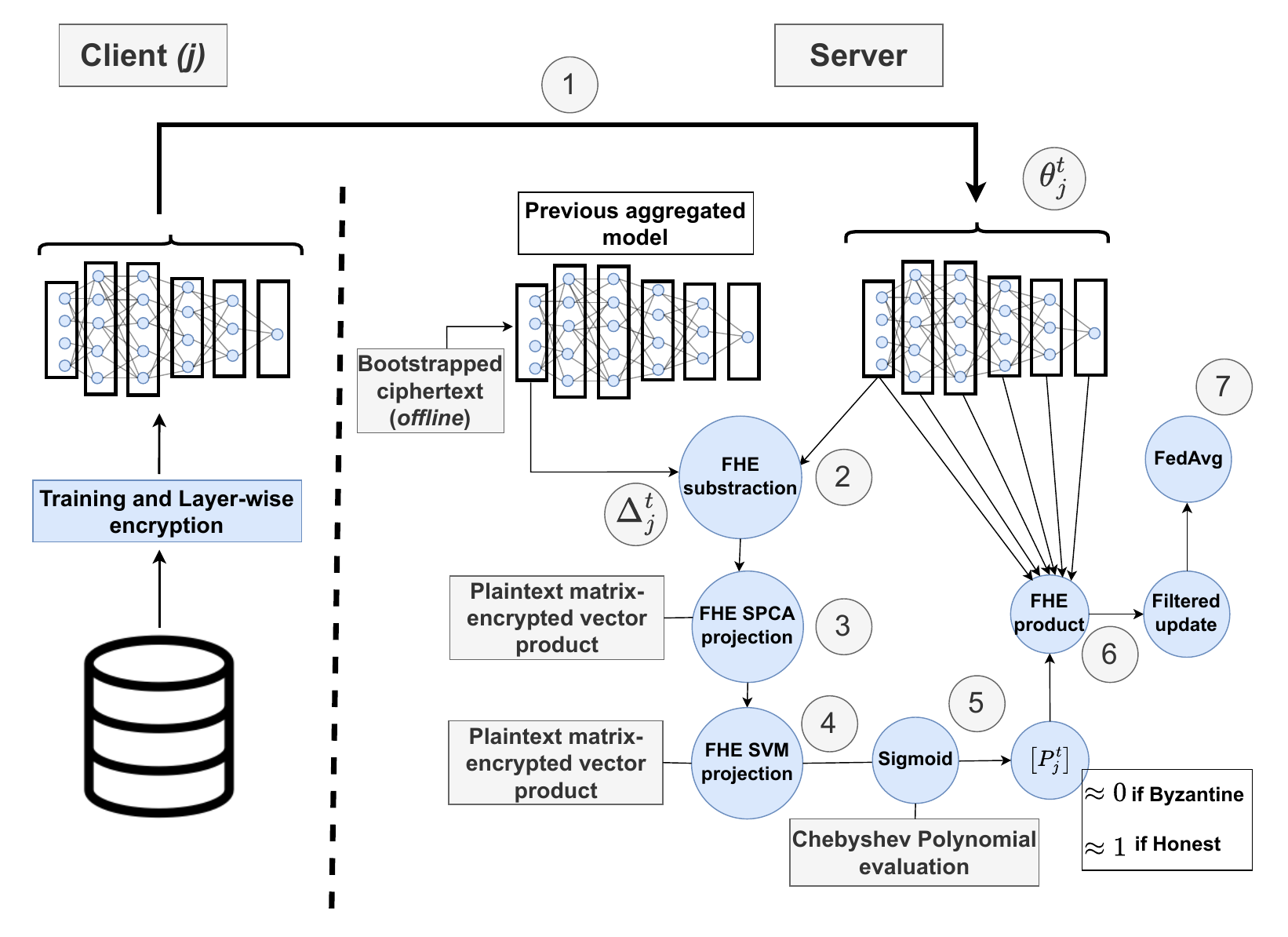}
    \caption{Filtering process in six steps. Steps 2 and 3 involve the plaintext-matrix encrypted-vector products. Step 5 is the Paterson-Stockmayer evaluation of the Chebyshev approximation.}
    \label{fig:filtering_process}
\end{figure}

\section{Experimental Results}
\label{sec:experimental}

We evaluate our approach experimentally along two primary aspects: (1) the computational overhead introduced by the FHE evaluation of the filtering process, as detailed in Section \ref{sec:fhe_inference_exp}, and (2) the impact of the update-filtering mechanism on the FL convergence under highly adversarial conditions, discussed in Section \ref{sec:fl_with_svm_filters}. 
The FHE performance experiments were conducted on a 12th Gen Intel(R) Core(TM) i7-12700H CPU and 22 GiB of RAM, using the $\mathsf{Tenseal}$ and $\mathsf{Lattigo}$ homomorphic libraries \cite{benaissa2021tenseallibraryencryptedtensor, lattigo}.

\subsection{Encrypted property-filtering}
\label{sec:fhe_inference_exp}
We use two sets of CKKS parameters for performance analysis: $\mathsf{P1}$ for linear SVMs and $\mathsf{P2}$, which provides an additional multiplicative level ($L$) required for degree-2 polynomial SVMs. 
\begin{table}[ht!]
\scriptsize
\caption{Two sets of CKKS parameters and their associated security levels ($\lambda$) estimated via LWE-estimator \cite{cryptoeprint:2015/046}.}
\centering
\begin{tabular}{c|c|c}
\hline
\textbf{Set} & \textbf{CKKS Parameters} & \textbf{Security  $(\lambda)$} \\ \hline
$\mathsf{P1}$ & \(\log_2(N) = 14, \, \log_2(Q) = 340, \, L=11, \, \sigma = 3.19\) & $\approx$ 135 bits \\ \hline
$\mathsf{P2}$ & \(\log_2(N) = 15, \, \log_2(Q) = 500, \, L=12\ , \sigma = 3.19\) & $\approx$ 206 bits \\ \hline
\end{tabular}
\label{tab:ckks_parameters}
\end{table}

\begin{table}[ht!]
    \footnotesize
    \centering
    \caption{Runtime (s) of FHE building blocks with associated CKKS Parameters.}
    \begin{tabular}{l|cccc}
        \toprule
        \textbf{Operation} & \textbf{Backdoor} & \textbf{Grad. Ascent} & \textbf{L.F} & \textbf{L.S} \\
        \textbf{Parameters} & $\mathsf{P1}$ & $\mathsf{P1}$ & $\mathsf{P2}$ & $\mathsf{P1}$ \\
        \midrule
        SPCA Proj. & 4.10 & 4.18 & 14.84 & 3.98 \\
        SVM Proj. & 1.45 & 0.93 & 3.07 & 1.78\\
        $\mathsf{Sigmoid}$ & 1.21 & 1.23 & 2.85 & 1.35 \\
        Enc. Inv. & 1.37 & 1.41 & 3.62 & 1.27 \\
        $\mathsf{Bootstrapping}$ & 22.87 & 24.12 & 32.67 & 25.03 \\
        \bottomrule
        Total $\mathsf{Filter}_P$ & 6.77 & 6.35 & 24.38 & 7.11 \\
        \bottomrule
        Total Agg. & 8.34 & 8.36 & 26.07 & 8.95 \\
        \bottomrule
    \end{tabular}
    \label{tab:fhe_svm_runtimes}
\end{table}

\paragraph{\textbf{Findings}}
Table~\ref{tab:fhe_svm_runtimes} illustrates the practical applicability of our filtering method, showing that most FHE operations (excluding bootstrapping) are completed in $0.9$ to $\approx 15$ seconds. 
The SPCA projection, which processes raw, high-dimensional encrypted update vectors, is the most computationally expensive step. 
In contrast, the SVM hyperplane projection is much faster thanks to its lower input dimension. 
The Paterson-Stockmeyer optimization significantly speeds up the evaluation of the $32$-degree Chebyshev approximation of the $\mathsf{Sigmoid}$ function. 
As a result, the overall filter computation time remains under $9$ seconds for Backdoor, gradient-ascent, and label-shuffling SVMs. 
Label-flipping filtering requires a quadratic kernel SVM with a larger number of support vectors and parameters ($\mathsf{P2}$), leading to an increased computation time. 
Although bootstrapping is costly, it is performed on the server side and can overlap with worker-side operations, thereby reducing its impact on the FL workflow. Further details are given in Appendix~\ref{sec:appendix_bootstrapping_offline_impact}.

\subsection{Robust FL via SVM-filtering}
\label{sec:fl_with_svm_filters}
\subsubsection*{\textbf{Shadow updates generation}}
We use the approach outlined in Algorithm~\ref{alg:filtering_data_generation} to generate the shadow updates used to train the SPCA+SVM filter. 
The data used to train shadow updates $D_{\text{shadow}}$ is entirely distinct from the data used in the FL experiments outlined below. Specifically, for \femnist{}, we use the conventional \textsf{MNIST} dataset. 
For \cifar{} and \GTSRB{}, we uniformly draw 10K samples from each dataset. For \acsincome{}, we use data from the state of California. 
We run 25 complete trainings of 50 epochs each, with a Byzantine fork at each epoch. For the backdoor shadow updates, each run uses a new backdoor configuration. 
We insert a random-size trigger (between 4 and 8 pixels) at a random image location and assign a randomly chosen target label. 
This variability simulates the server’s limited prior knowledge of the attackers’ backdoor strategy. 
Each training uses a different split of $D_{\text{shadow}}$ obtained by uniform Dirichlet sampling ($\alpha = 1$) over the label distribution, and a different initial model ($\theta^{\text{init}}$) via Xavier Glorot initialization \cite{pmlr-v9-glorot10a}. 
\subsubsection*{\textbf{Experimental setup}}
For each Byzantine behavior, we set 10 workers with a 50\% attacker proportion. 
We run 50 \fedavg{} rounds on the \femnist{}, \GTSRB{} \cifar{}, and \acsincome{} benchmarks. 
We compare performance with and without our SVM filtering.
The \femnist{} and \acsincome{} datasets inherently provide favorable FL setups: \footnote{\femnist{} (Federated MNIST) via its different handwriting styles and \acsincome{} through its 50 American states demographic distributions.}. 
For the \cifar{} and \GTSRB{} benchmarks, we use a Dirichlet sampling approach \cite{hsu2019measuring, bendoukha2025} with $\alpha = 0.5$ for a moderate label-heterogeneity level. 
A baseline without attackers is also included. 
Each experimental configuration is repeated $5$ times, with max, min and average values being reported at each round for a robust evaluation.

\begin{enumerate}[leftmargin=*]
    \item \textbf{Backdoor attacks.} Byzantine workers update the global model on datasets with 90\% backdoor-embedded samples (Figure \ref{fig:backdoor_illustration}). 
    We measure (1) the genuine \GTSRB{} test accuracy and (2) backdoor test accuracy, which assesses the attackers' success in inducing the misclassification of manipulated data. 
    \item \textbf{Gradient ascent, label flipping and shuffling.} Gradient-ascent attackers invert gradients' direction, while label flipping and shuffling attackers manipulate class labels. 
    In these attacks, we assess global model test accuracy to measure the attacker's success and SVM defense.
\end{enumerate} 

Finally, we investigate the dynamic behavior of the SVM filter throughout an FL run by comparing the filter score distributions for honest and malicious updates.
\begin{figure}[ht!]
    \centering
    \includegraphics[scale=0.37]{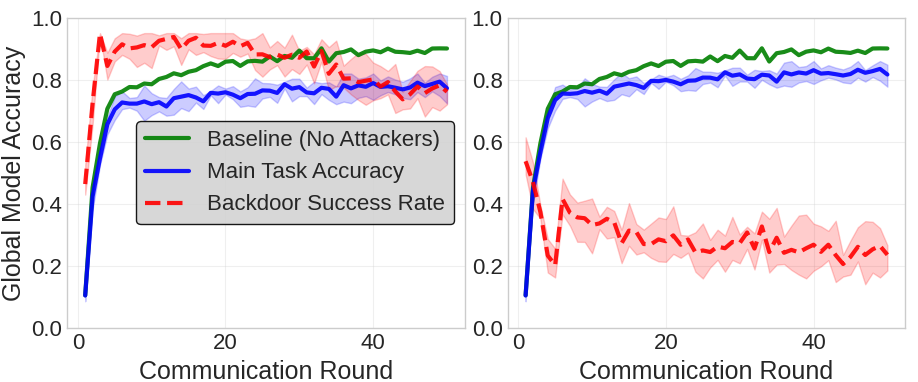}
    \caption{Backdoor: With (right) and without (left) SVM filter, genuine and backdoor test accuracy.}
    \label{fig:backdoor_filtering}
\end{figure}

\begin{figure}[H]
    \centering
    \includegraphics[width=0.9\linewidth]{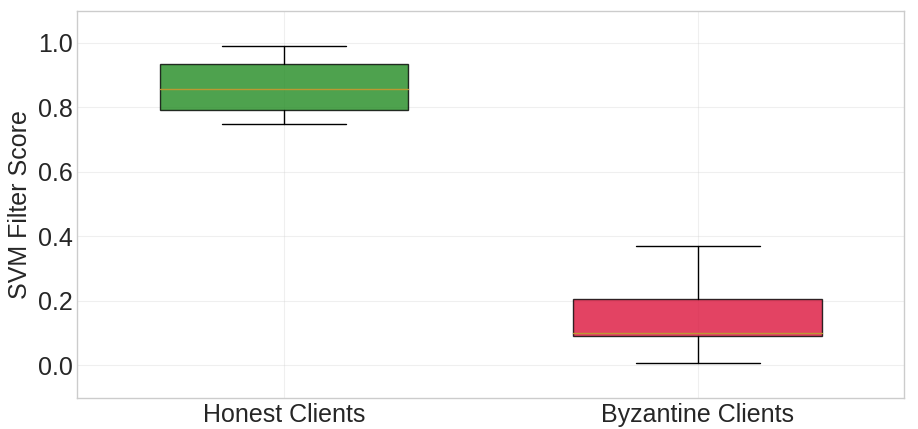}
    \caption{Filter performance across all 50 rounds, and 5 seeds corresponding to the experiments in Figure~\ref{fig:backdoor_filtering}} 
    \label{fig:backdoor_filtering_perf}
\end{figure}

\subsubsection*{\textbf{Findings}} Below, we summarize our main findings.
\begin{itemize}[leftmargin=*]
    \item{\textbf{Backdoor}.} Figure \ref{fig:backdoor_filtering} (left) shows that, without defense, a 50\% presence of backdoor-embedded updates rapidly compromises the main learning task, achieving near-perfect test accuracy for the backdoor objective (classifying signs as \emph{``End of Speed Limit''} in presence of the backdoor pattern), and reducing the accuracy on the main task by about 15\% due to the contradictory nature of the two tasks. 
    In contrast, Figure~\ref{fig:backdoor_filtering} (right) demonstrates the effectiveness of our SVM-filtering approach, limiting the backdoor task's accuracy to below 25\%. 
    This is achieved via the systematic down-weighting of updates from Byzantine workers, effectively minimizing their impact on the global model at each iteration.
    \item{\textbf{Gradient-ascent, label-flipping and label-shuffling}.} 
    When 50\% of workers perform gradient-ascent and label flipping or label-shuffling, convergence is severely compromised as observed in Figure~\ref{fig:three_attacks_conv}. 
    Test accuracy plateaus respectively at 10\% and 45\%, indicating that Byzantine workers effectively undermine the learning progress. 
    However, the high severity of these attacks leads to accurate detection by the SVM, resulting in a significant down-weighting of these updates, with $P_i^t$ values approaching zero (\cf{}, Figure \ref{fig:three_attacks_filter_perf}). 
    Consequently, eliminating these harmful updates restores convergence to near-baseline levels (the no-attacker scenario).  

\end{itemize}

\begin{figure*}[ht!]
    \centering
    \captionbox*{\small{(a) \acsincome{} 3-MLP}}{\includegraphics[width=0.325\textwidth]{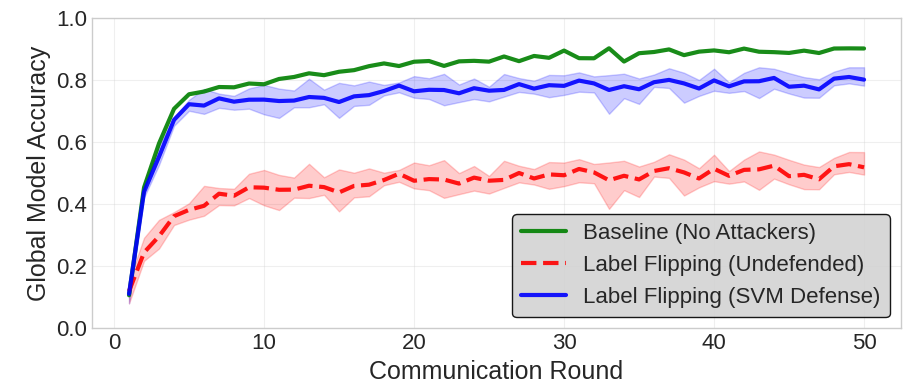}}
    \hfill 
    \captionbox*{\small{(b) \cifar{}{} ResNet-14}}{\includegraphics[width=0.325\textwidth]{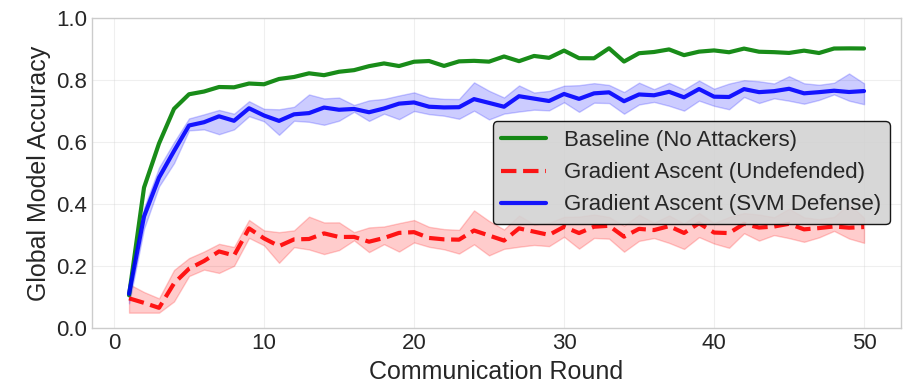}}
    \hfill 
    \captionbox*{\small{(c) \femnist{} LeNet-5}}{\includegraphics[width=0.30\textwidth]{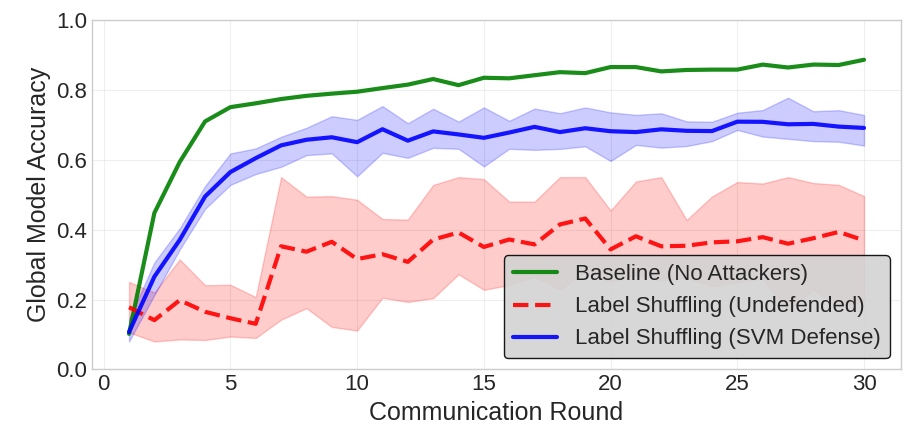}}

    \caption{Convergence of our Byzantine filtering approach over the \acsincome{}, \cifar{}, and \femnist{} benchmarks with a 3-MLP, ResNet-14, and LeNet-5 models.}
    \label{fig:three_attacks_conv}
\end{figure*}

\begin{figure*}[ht!]
    \centering
    \captionbox*{\small{(a) \acsincome{} 3-MLP}}{\includegraphics[width=0.325\textwidth]{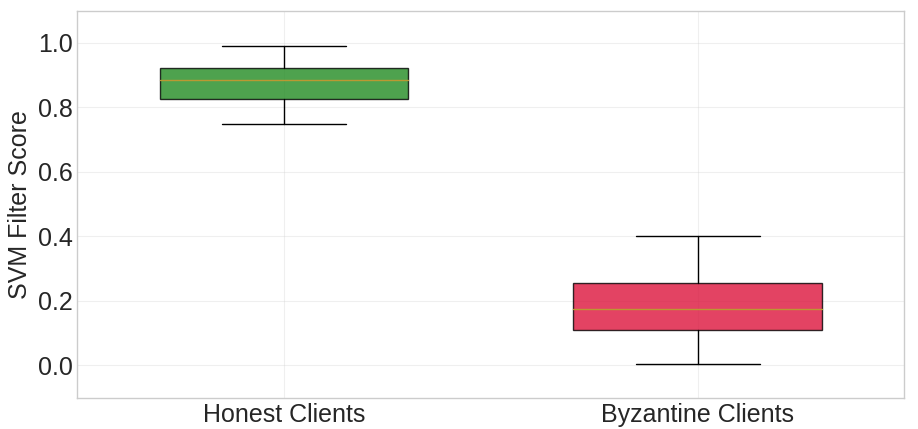}}
    \hfill 
    \captionbox*{\small{(b) \cifar{}{} ResNet-14}}{\includegraphics[width=0.325\textwidth]{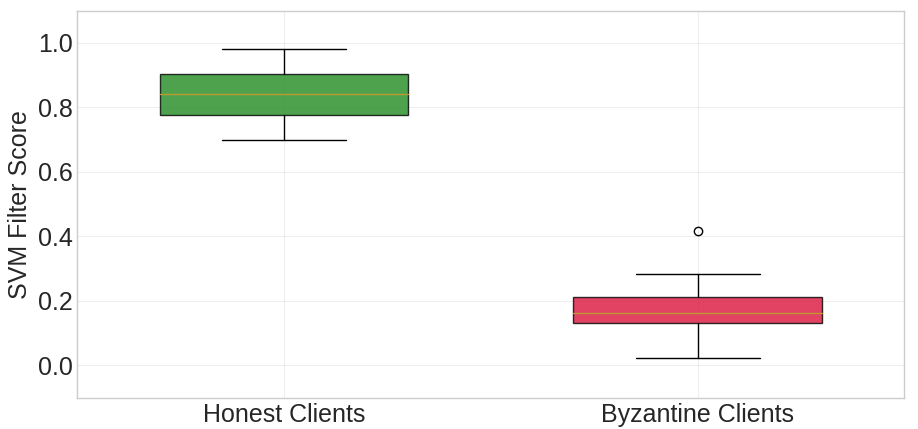}}
    \hfill 
    \captionbox*{\small{(c) \femnist{} LeNet-5}}{\includegraphics[width=0.30\textwidth]{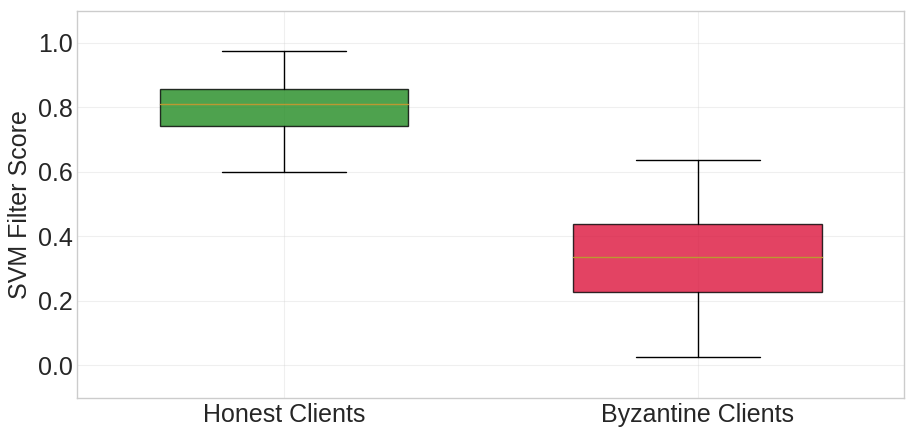}}

    \caption{Filter performance across all 50 rounds, and 5 seeds corresponding to the experiments in Figure \ref{fig:three_attacks_conv}}
    \label{fig:three_attacks_filter_perf}
\end{figure*}

\textbf{Filter values dynamic.}
Figures \ref{fig:backdoor_filtering_perf} and \ref{fig:three_attacks_filter_perf} display the distribution of filter values during the experiments of Figures \ref{fig:backdoor_filtering} and \ref{fig:three_attacks_conv}.
The key observation is that, consistent with our theoretical analysis, the filter quality, captured by $(\mu_h, \mu_b, \sigma_h, \sigma_b)$, is the dominant factor governing convergence toward the no-attacker baseline.
For example, the label-shuffling filter exhibits the weakest separability, with high $\mu_b$ and $\sigma_b$.
Consequently, it yields the largest accuracy gap ($\approx 10\%$) relative to the no-attacker baseline, which matches the tolerance bounds $\epsilon$ and $\epsilon_{\text{base}}$ stated in Corollary \ref{cor:tolerable}. 
In contrast, the backdoor filter achieves substantially better down-weighting of Byzantine updates (low $\mu_b$ and $\sigma_b$), resulting in a much smaller accuracy gap of roughly $5\%$.

\section{Conclusion}
\label{sec:conclusion}

This work presents a novel approach that adapts methods for property inference attacks to detect workers' misbehaviors in privacy-preserving FL with fully homomorphic encryption. 
By relying on FHE-friendly filtering models and capitalizing on the flexibility of the CKKS encryption scheme, our method achieves practical compatibility with CKKS-based secure aggregation. 
Empirical results demonstrate the efficiency of our approach in detecting diverse Byzantine activities, which inevitably generate anomalies reflected in the successive model updates that can be inferred, thereby revealing their malicious nature. 
This perspective shifts the conventional understanding of privacy leakage in FL from a vulnerability to a valuable signal for resilience. 
As future works, we will investigate the extension of this methodology to other sets of homomorphic encryption schemes and explore the detection of increasingly complex Byzantine attack vectors via FHE-friendly machine-learning models and techniques. 
Our approach will also benefit from the ongoing improvement of the CKKS bootstrapping and the hardware acceleration of FHE schemes.

\bibliographystyle{unsrt}
\bibliography{biblio.bib}

\appendices

\section{Cryptographic background}
\label{sec:appendix_ckks}

\subsection{The CKKS scheme}

The CKKS scheme \cite{CKKS_2017} is a \emph{leveled} homomorphic encryption technique designed for approximate arithmetic on vectors of complex or real numbers, natively supporting element-wise additions, multiplications and slot rotations. 
As a leveled scheme, CKKS ciphertexts are limited to a fixed number of multiplications $L$. 
Exceeding this depth prevents correct decryption. However, this limitation is overcome by bootstrapping. 
Recent works \cite{ckks_boot_2020_bossuat, ckks_boot_2022_bossuat, ckks_boot_2024_bae, ckks_boot_2025_min} have significantly advanced the efficiency of CKKS bootstrapping procedures, effectively removing the constraint on multiplicative depth. 
The main operations of the CKKS scheme are described below.
\begin{itemize}[leftmargin=*]
    \item{$\mathsf{ParamsGen}(\lambda, L, \text{precision})$:} Generates the public parameters for the CKKS scheme based on the security parameter $\lambda$, the desired multiplicative depth $L$, and the target precision for approximate arithmetic. Outputs typically include: the polynomial ring degree $N$ (a power of 2, determining the vector length), the chain of ciphertext coefficient moduli $Q = q_0 \cdot q_1 \cdots q_L$ (where $Q_l = q_0 \cdots q_l$ is the modulus at level $l$), an auxiliary modulus $P$ used for key switching (relinearization, rotation), the scaling factor $\Delta$ (often a power of 2, related to precision), and specifications for the secret key distribution $S$ and error distribution $\chi$.
    These parameters must ensure $\lambda$-bit security while supporting $L$ levels of multiplication and the desired precision.
    \item{$\mathsf{KeyGen}(\text{params})$:} Samples the secret key polynomial $s$ from the distribution $S$. 
    The secret key is $\sk = (1, s)$. 
    The public key $\pk = (b, a)$ is generated by sampling $a$ uniformly from $R_Q = \mathbb{Z}_Q[X]/(X^N+1)$, sampling an error $e$ from $\chi$, and setting $b = [-as + e]_Q$.
    \item{$\mathsf{RelinKeyGen}(\sk)$:} Generates the relinearization key $\mathsf{rlk}$, which is necessary to reduce the size of ciphertexts after multiplication. 
    It is effectively an encryption of $s^2$ under the secret key $\sk$, typically constructed using the extended modulus $PQ$. 
    A common form involves sampling $a' \leftarrow R_{PQ}$, $e' \leftarrow \chi$, yielding $\mathsf{rlk} = (b', a') = ([-a's + e' + Ps^2]_{PQ}, a')$.
    \item{$\mathsf{RotKeyGen}(\sk, k)$:} Generates a rotation key $\mathsf{rotk}_k$ required for homomorphically rotating the encrypted vector by $k$ positions (where $k$ corresponds to a specific Galois element). 
    This key effectively encrypts the modified secret key $s(X^k)$ under the original secret key $\sk$, typically using the modulus $PQ$. 
    Keys must be generated for all desired rotation indices $k$.

    \item{$\mathsf{Encryption}(\pk, m)$:} To encrypt a vector of complex/real numbers $m$, it's first encoded into a plaintext polynomial $\tilde{m} \in R$. 
    Then, sample $v \leftarrow S$ and errors $e_0, e_1 \leftarrow \chi$. Given $\pk=(b, a)$, the ciphertext $c=(c_0, c_1)$ is computed as $([ \Delta \cdot \tilde{m} + b v + e_0 ]_{Q_L}, [a v + e_1]_{Q_L})$, in which $Q_L$ is the initial ciphertext modulus and $\Delta$ is the scaling factor.

    \item{$\mathsf{Decryption}(c, \sk)$:} Given a ciphertext $c = (c_0, c_1)$ at level $l$ (modulus $Q_l$) and $\sk = (1, s)$, compute the scaled plaintext polynomial $m' = [c_0 + c_1 s]_{Q_l}$. 
    The approximate plaintext vector is obtained by decoding $m' / \Delta$.

    \item{$\mathsf{Addition}(c_a, c_b)$:} Given two ciphertexts $c_a = (c_{a,0}, c_{a,1})$ and $c_b = (c_{b,0}, c_{b,1})$ at the same level $l$, their homomorphic sum is computed component-wise: $c_{add} = ([c_{a,0} + c_{b,0}]_{Q_l}, [c_{a,1} + c_{b,1}]_{Q_l})$.

    \item{$\mathsf{Multiplication}(c_a, c_b)$:} Homomorphic multiplication involves several steps:
        \begin{enumerate}[leftmargin=*]
            \item $\mathsf{Tensor Product}$: Calculate the polynomial products $d_0 = [c_{a,0} c_{b,0}]_{Q_l}$, $d_1 = [c_{a,0} c_{b,1} + c_{a,1} c_{b,0}]_{Q_l}$, and $d_2 = [c_{a,1} c_{b,1}]_{Q_l}$. The resulting ciphertext $(d_0, d_1, d_2)$ encrypts the product but depends on $s^2$.
            \item $\mathsf{Relinearization}$: Apply the relinearization key $\mathsf{rlk}$ to the $d_2$ term using a key-switching procedure, effectively replacing the $s^2$ dependency. 
            Add the result to $(d_0, d_1)$ to obtain a standard-form 2-component ciphertext $c_{relin}$ encrypting the product under $\sk$.
            \item $\mathsf{Rescaling}$: Divide the coefficients of $c_{relin}$ by the scaling factor $\Delta$ (usually via modular arithmetic and rounding), and reduce the ciphertext modulus from $Q_l$ to $Q_{l-1}$ (dropping $q_l$).
            This controls the scale of the plaintext and manages noise growth. Let the result be $c_{mult}$.
        \end{enumerate}
        The complete multiplication operation outputs $c_{mult}$.

    \item{$\mathsf{Rotate}(c, k)$:} Performs a homomorphic rotation (cyclic shift) of the slots in the encrypted plaintext vector by $k$ positions. 
    This utilizes the Galois automorphism $\sigma_k: p(X) \mapsto p(X^k)$. 
    The operation takes a ciphertext $c=(c_0, c_1)$ and applies $\sigma_k$ to its components. 
    Then, using the corresponding rotation key $\mathsf{rotk}_k$, it performs a key-switching operation to transform the result $(c_0(X^k), c_1(X^k))$ (decipherable with $s(X^k)$) back into a ciphertext decipherable with the original secret key $s(X)$. 
    This enables powerful SIMD (Single Instruction, Multiple Data) operations on encrypted data.

    \item{$\mathsf{Boot}(c)$:} Refreshes a ciphertext $c=(c_0, c_1)$ at a low level $l$, resetting its noise and restoring it to the maximum level $L$. This enables computations beyond the initial depth limit. 
    It homomorphically approximates decryption and re-encryption. We adopt the efficient method for non-sparse keys from Bossuat \etal. \cite{ckks_boot_2022_bossuat}, which has the high-level following steps:
    \begin{enumerate}[leftmargin=*]
        \item $\mathsf{ModUp}$: Increase the ciphertext modulus from $Q_l$ to a larger modulus $Q_{L+k}$ (where $k$ depends on bootstrapping depth) suitable for the homomorphic operations, typically by scaling the ciphertext components.
        \item $\mathsf{EvalDec}$: Approximate the decryption $m' = [c_0 + c_1 s]_{q_0}$ homomorphically, where $q_0$ is the first prime in $Q$ (related to initial scaling $\Delta$). This is the core part and usually includes:
            \begin{itemize}[leftmargin=*]
                \item $\mathsf{CoeffToSlot}$: Apply homomorphic linear transforms (analogous to inverse NTT/DFT) to make plaintext slots accessible.
                \item $\mathsf{EvalMod}$: Homomorphically evaluate a function that approximates the modular reduction $[c_0 + c_1 s]_{q_0}$. This often uses polynomial approximations (\emph{e.g.}, of a sine wave or another periodic function) to remove the scaled noise term involving $s$. 
                This step requires a bootstrapping key $\mathsf{bsk}$, which encrypts $s$ (or functions of $s$) under the bootstrapping modulus $Q_{L+k}$.
                \item $\mathsf{SlotToCoeff}$: Apply forward linear transforms (analogous to NTT/DFT) to return to coefficient representation.
            \end{itemize}
        \item $\mathsf{ModDown:}$ Process the result (which approximates the original scaled message $\Delta \tilde{m}$ modulo $q_0$) as a new plaintext, scale it appropriately and adjust the modulus down to $Q_L$, producing the final bootstrapped ciphertext $c_{boot}$ with low noise.
    \end{enumerate}

 \end{itemize}

\subsection{Threshold multi-key RLWE-based schemes}
\label{sec:appendix_thCKKS}

Threshold cryptography refers to encryption schemes with collaborative decryption mechanisms. That is, each participant holds a share $\sk_i$ of the (global) secret key $\sk$, which grants him the ability to perform a partial decryption of a ciphertext $\hat{m}_i =  \mathsf{Partial\_Dec}(\sk_i, c)$. An access structure $S \subset \mathsf{PowerSet}(P)$ is the collection of all subsets of parties with the ability to recover the plaintext message using their secret key shares. A $t$-out-of-$n$ scheme refers to the access structure containing subsets of size $t$. 

$\forall P \subset S$ we have $m = \dec(\sk, c) = \mathsf{Full\_Dec}(P_{Dec})$ 

with $P_{Dec} = \{\hat{m}_i \text{\,\,}\forall i \text{\, with \,} p_i \in P\}$, the set of contributed partial decryptions from parties in $P$.

The fundamental security assumption of threshold schemes is that any strict subset of $P \in S$ provides no additional knowledge regarding $m$ beyond what is already provided by the ciphertext $c$.

Mouchet \etal \cite{cryptoeprint:2022/780} introduced an additive \footnote{$\sk = \sum_{i \in [n]}\sk_i$, refers to encryption schemes with collaborative decryption a mechanism. The decryption consists of a product of the secret with the ciphertext is compatible with addition $<c, \sum_{i \in [n]}\sk_i> =  \sum_{i \in [n]} <\sk_i, c>$} $n$-out-of-$n$ multi-party construction for ring learning with errors structure, and therefore compatible with all RLWE-based homomorphic schemes. Namely, BGV \cite{BGV2011}, BFV \cite{BFV2012} and CKKS \cite{CKKS_2017}. It was later extended to $t$-out-of-$n$ access structures for any $t \leq n$ using the \textit{share-re-sharing} technique by Asharov \etal \cite{cryptoeprint:2011/613} that enables to move from an $n$-out-of-$n$ construction to a $t$-out-of-$t$ one, by \emph{re-sharing} the additive locally generated secret key through Shamir secret sharing, and taking advantage of the commutative nature between additive decryption reconstruction, and the interpolation of the secret key. That is, 

\begin{equation}
    \label{eq:threshold_key_reconstruction}
    s = \sum_{i=1}^{n} s_i = \sum_{i=1}^{n} \sum_{j=1}^{t} S_i(\alpha_j)\lambda_j = \sum_{j=1}^{t} \lambda_j\sum_{i=1}^{n} S_i(\alpha_j) = \sum_{j=1}^{t} s'_j
\end{equation}

Where $S_i$ denotes the Shamir polynomials of each participant, and $\alpha_i$ their Shamir public points. $s'$ are the $t$-out-of-$t$ reconstructed additive keys by the $t$ participants. 

Therefore, the initialization of this threshold construction involves, for each worker $j$ : 
\begin{enumerate}[leftmargin=*]
    \item Generate an additive secret key $s_j$ via uniform sampling in $\mathbb{R}_Q[X]/(X^N + 1)$
    \item Sample a degree $t-1$ bi-variate polynomial with coefficients ($c_1, \cdots, c_{t-1})$ in $\mathbb{R}_Q[X]/(X^N + 1)$ which will enable sharing the workers' additive key $s_j$. 
    \item Transmit to all other workers $(i)$ the share $S_j(\alpha_i) = s_j + \sum_{k=1}^{t-1} c_i \cdot\alpha^k_i$. That is, the evaluation of its sampled polynomial on their respective Shamir public points $\{\alpha_i\}_{i\neq j}$.
    \item After receiving the shares from the $n-1$ other workers $\{S_i(\alpha_j)\}_{(i \neq j)}$ (Their private Shamir polynomial evaluated on worker $j$'s public point). Worker $j$ computes the sum $\sum_{i=1}^{n} = S_i(\alpha_j)$.
\end{enumerate}
Finally, When at least $t$ workers are active, their $t$ Lagrange coefficients $\{\lambda_i\}_{i\in [t]}$ can be computed. Therefore, reaching the $t$-out-of-$t$ additive shares.

\section{Offline amortized cost of Bootstrapping}
\label{sec:appendix_bootstrapping_offline_impact}
We aim to evaluate the true impact of the bootstrapping operation to refresh the aggregated model's ciphertext $s$ in our FL workflow (Algorithms \ref{alg:approach1}). Specifically, we aim to compare the runtime of the bootstrapping operation with worker-side actions —mainly the collaborative decryption and the local updating of the model. Figure \ref{fig:temporal_diagram_bootstrapping} illustrates the expected amortized cost of the bootstrapping within the FL workflow.

\begin{figure}[ht]
    \centering
    \includegraphics[width=1.0\linewidth]{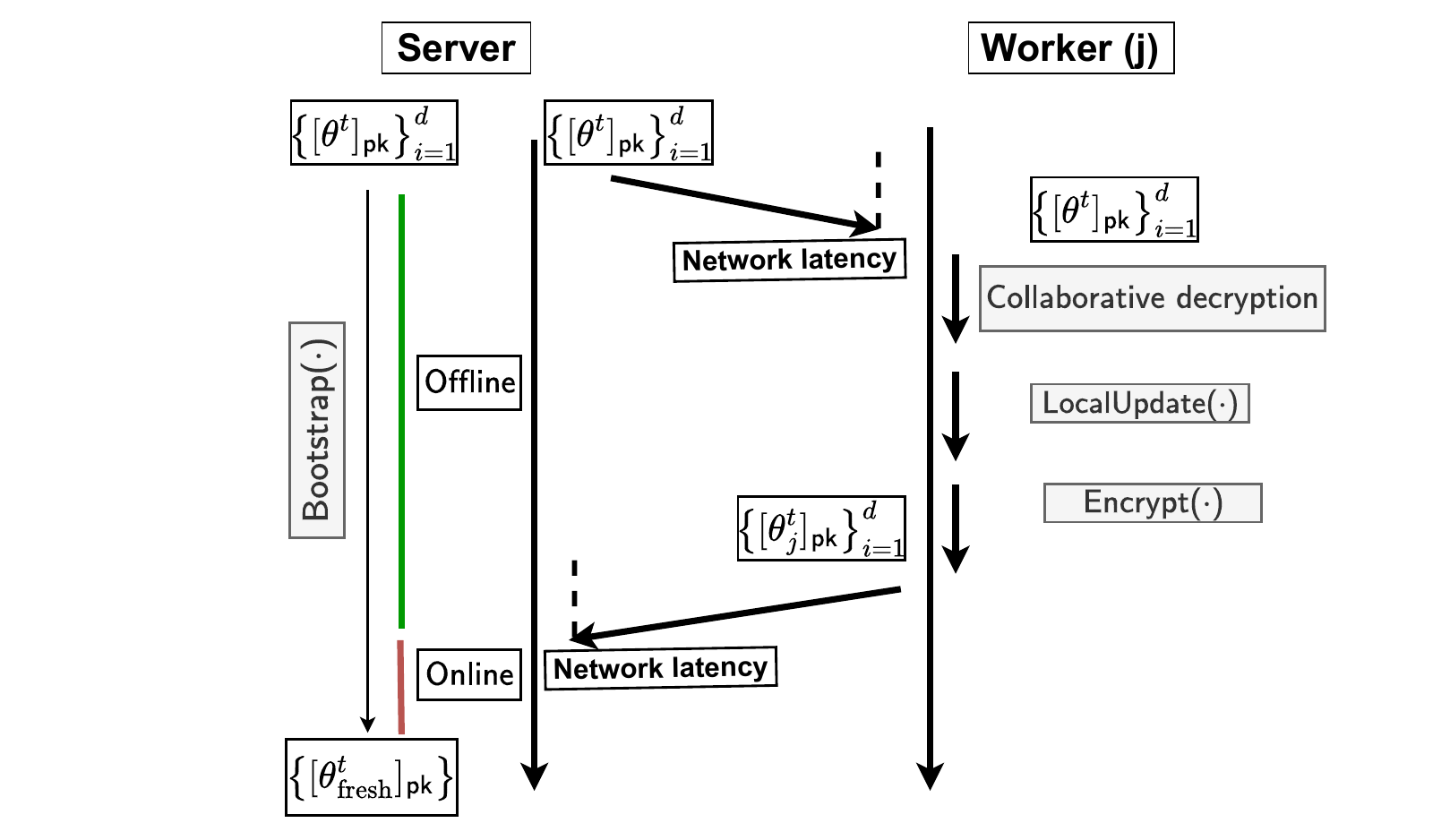}
    \caption{Server-worker communications, and actions illustrating the amortized cost of bootstrapping}
    \label{fig:temporal_diagram_bootstrapping}
\end{figure}

\paragraph{\textbf{Collaborative decryption}}
During the collaborative decryption phase, workers generate $t$-out-$t$ additive keys by (at least) $t$ active parties, using their respective $n$ shares of the initial $n$ additive secret keys (\cf{} equation \ref{eq:threshold_key_reconstruction}). 
Apart from revealing the active parties, the process of computing the $t$ partial decryptions does not require data exchanged other than what has already been exchanged during the setup. 
The last step is broadcasting the respective partial decryption by each of the $t$ active parties. 
A partially decrypted message consists of a degree $N$ polynomial with coefficients in $\mathbb{Z}_Q$. 
Thus, the size of data exchanged (for minimal $t$ active parties) is $t(t-1)N\log_2(Q)$ bits for a single ciphertext. 
Encrypting a model update requires $\left\lceil \frac{\mathsf{dim}(\theta)}{N/2} \right\rceil$ ciphertexts, which are 3 to 15 ciphertexts for our models (see table \ref{tab:nn_models}). Table \ref{tab:collab_decryption_overhead} illustrates the size of the exchanged data during the collaborative decryption and the runtime necessary to generate the shares for several values of $\log(N)$, and different numbers of active parties ($t$).

\begin{table}[htbp]
\scriptsize
  \caption{Bandwidth consumption (GB) and communication overhead (in seconds) for collaborative decryption of a single ciphertext encrypting $\frac{N}{2}$ model parameters, $\log_2(Q) = 60$.}
  \centering
  \begin{tabular}{|c|c|c|c|c|c|c|}
    \hline
    \multirow{2}{*}{$\log_2(N)$} & \multirow{2}{*}{Performances} & \multicolumn{5}{|c|}{Active parties $t$} \\ \cline{3-7}
     &  & 10 & 20 & 50 & 75 & 100 \\
    \hline
    \multirow{2}{*}{11}
      & {Bandwidth} & 1.38 & 5.83 & 36.44 & 82.00 & 145.59 \\ \cline{3-7}
      & {Overhead} & 0.14 & 0.58 & 3.64 & 8.20 & 14.56 \\
    \hline
    \multirow{2}{*}{12}
      & {Bandwidth } & 2.76 & 11.67 & 72.97 & 164.18 & 291.18 \\ \cline{3-7}
      & {Overhead} & 0.30 & 1.17 & 7.34 & 16.51 & 29.27 \\
    \hline
    \multirow{2}{*}{13}
      & {Bandwidth} & 5.53 & 23.35 & 145.94 & 328.35 & 582.35 \\ \cline{3-7}
      & {Overhead} & 0.64 & 2.58 & 16.00 & 35.93 & 63.50 \\
    \hline
    \multirow{2}{*}{14}
      & {Bandwidth} & 11.06 & 46.69 & 291.88 & 656.70 & 1164.70 \\ \cline{3-7}
      & {Overhead} & 1.27 & 5.17 & 32.01 & 71.84 & 126.99 \\
    \hline
    \multirow{2}{*}{15}
      & {Bandwidth} & 22.11 & 93.34 & 583.75 & 1313.40 & 2329.40 \\ \cline{3-7}
      & {Overhead} & 2.40 & 9.90 & 61.38 & 137.67 & 243.53 \\
    \hline
  \end{tabular}
  \label{tab:collab_decryption_overhead}
\end{table}

\paragraph{\textbf{Local model update}} Local model updates are performed via a few epochs ($E$) of Stochastic Gradient Descent (SGD) on the local dataset. The well-established cost of SGD is primarily determined by the model architecture and the size of the training data. Table \ref{tab:local_update_cost} presents the measured runtime of these local updates for the \femnist{} and \folktables{} tasks, along with their corresponding models, for a single epoch ($E=1$). As the cost scales linearly with the number of epochs, the runtime for other epoch values can be easily deduced. Furthermore, if DP-SGD \cite{Abadi_2016} is used to update the models rather than SGD for post-training privacy (\cf{}, Section \ref{sec:appendix_filter_perf_with_dp}) the update cost drastically increases due to the per-sample gradient computation required to bound the training's sensitivity \cite{Dwork2006}. 

\begin{table}[ht]
    \scriptsize
    \caption{Runtime of local model updates for \femnist{} and \folktables{} tasks in seconds with $E=1$ for SGD and DP-SGD.}
    \small
    \centering
    \begin{tabular}{c|c|c|c}
        \hline
        \textbf{Task} & \textbf{Dataset Size} & \textbf{SGD} & \textbf{DP-SGD} \\ \hline
        \femnist{} CNN &  28.000 & 2.71 & 5.87\\ \hline
        \folktables{} 3-layer MLP  & 40.000 & 3.02 & 7.32\\ \hline
    \end{tabular}
    \label{tab:local_update_cost}
\end{table}

\noindent Our analysis indicates that in cross-silo scenarios with a large number of workers (50, 75, or 100) and more than 3 local SGD epochs, worker-side computation becomes the dominant cost, surpassing the server's bootstrapping overhead. This implies that by the time workers transmit their encrypted updates, the server has either already completed bootstrapping or is nearing completion (\eg{}, 75\% done). Consequently, the cost of CKKS's bootstrapping to refresh the aggregated model's ciphertext $s$ (over which the SVM filter operates) has a practical impact on the total FL workflow under these realistic conditions. A concrete example is when $n=50$, $\log_2(N) = 14$ and $E=2$. The collaborative decryption requires around 32 seconds while the $\mathsf{LocalUpdate}$ takes around 5 seconds. Hence, the server receives the workers' updates after at least 37 seconds (without accounting for network latency). While the bootstrapping with $\log_2(N) = 14$ and $\log_2(Q) = 60$ takes around $30$ seconds \cite{ckks_boot_2022_bossuat}.

\section{Convergence Proofs}
\label{sec:appendix_conv_analysis}
In this section, we provide the proofs of the filtering error Lemmas, and our main convergence theorem. Recall that we denote the total number of workers in the system $n$, among whom $n_b$ are Byzantine, and $n_h$ are honest, and the Byzantine proportion is denoted $q= \frac{n_b}{n}$.
\begin{lemma}[Expected Filtering Error]
At round $t$, let $\theta^{t+1}_{\mathrm{ideal}}$ denote the ideal aggregated update, and $\theta^{t+1}_{\mathrm{real}}$ the real one. The filtering noise
\[
\delta_t := \theta^{t+1}_{\mathrm{real}} - \theta^{t+1}_{\mathrm{ideal}}
\]
satisfies
\[
\mathbb{E}[\|\delta_t\| \mid \theta^t] \le E_{\mathrm{filter}},
\]
where $E_{\text{filter}}$ is defined as:
\begin{align*}
E_{\text{filter}} = & n_h G \left( |1 - \mu_h| + \sigma_h \right) \\
& + n_b D_{\text{byz}} \left( \mu_b + \sigma_b \right)
\end{align*}
\end{lemma}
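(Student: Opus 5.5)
The plan is to write the filtering noise as an explicit weighted sum of per-worker terms, then bound each term by the triangle inequality and elementary moment inequalities.

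\textbf{Decomposition.} Both aggregates are built from the same update differences $\Delta_i^t$ around the common reference $\theta^t$. I will work with the unnormalized (scaled) aggregates of Algorithm~\ref{alg:approach1}, with \fedavg{} weights $w_i = n_i/\sum_k n_k \in [0,1]$. I treat the normalization by $S^{t+1}$ as absorbed, i.e.\ I compare both aggregates at the same scale. This gives
\[
\delta_t = \sum_{i\in H} w_i (P_i - 1)\,\Delta_i^t + \sum_{j\in B} w_j P_j\,\Delta_j^t ,
\]
because the ideal filter puts weight $1$ on every honest update and $0$ on every Byzantine one. By the triangle inequality, $w_i\le 1$, and assumptions (2)--(3),
\[
\|\delta_t\| \le G\sum_{i\in H}|P_i-1| + D_{\text{byz}}\sum_{j\in B}|P_j| .
\]
The bounds $\|\Delta_i^t\|\le G$ and $\|\Delta_j^t\|\le D_{\text{byz}}$ hold pointwise. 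Hence I do not need independence between the filter values $P_i$ and the updates, even though $P_i$ is itself a function of $\Delta_i^t$.

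\textbf{Moment bounds.} Taking conditional expectation given $\theta^t$ leaves only the first absolute moments of the filter values. For honest workers,
\[
\mathbb{E}|P_i-1| \le |\mu_h - 1| + \mathbb{E}|P_i-\mu_h| \le |1-\mu_h| + \sigma_h ,
\]
where the last step is Jensen: $\mathbb{E}|X-\mathbb{E}X|\le\sqrt{\mathbb{V}X}$. For Byzantine workers, $P_j\in[0,1]$ gives $|P_j| = P_j$, so $\mathbb{E}|P_j| = \mu_b \le \mu_b+\sigma_b$. Summing over $n_h$ honest and $n_b$ Byzantine workers yields exactly $E_{\mathrm{filter}}$. (The $\sigma_b$ term is slack here; it is kept for symmetry with Lemma~\ref{lemma:variance_error}.)

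\textbf{Main obstacle.} The delicate step is justifying the decomposition itself. I see two issues.
\begin{itemize}
\item \emph{Normalization.} The real aggregate is divided by $S^{t+1}$ and the ideal one by the honest weight mass. These differ, so the difference is not literally linear in $(P_i - P_i^{\text{ideal}})$. I would first try to state the lemma for the scaled aggregates, noting that the Newton--Raphson normalization only rescales by a bounded factor. Failing that, I would add a term controlled by $|S^{t+1}-S^{\text{ideal}}|$. That term is itself bounded by the same per-worker quantities, and the resulting constant can be folded into $G$ and $D_{\text{byz}}$.
\item \emph{Conditioning.} The moments $(\mu_h,\sigma_h,\mu_b,\sigma_b)$ must be interpreted as conditional on $\theta^t$, or as uniform bounds over rounds. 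I will state this interpretation explicitly.
\end{itemize}
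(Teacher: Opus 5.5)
Your proof is correct and follows the paper's argument: the same split of $\delta_t$ into $\sum_{i\in H}(P_i^t-1)\Delta_i^t+\sum_{j\in B}P_j^t\Delta_j^t$, the triangle inequality with the pointwise bounds $G$ and $D_{\text{byz}}$, and then $\mathbb{E}|P-c|\le|\mu-c|+\sigma$. Your FedAvg weights $w_i\le 1$ and your remark that $P_j\in[0,1]$ makes the $\sigma_b$ term slack are harmless refinements. The paper handles the normalization obstacle purely by definition: it sets $\theta^{t+1}_{\mathrm{ideal}}=\theta^t+\sum_{i\in H}\Delta_i^t$ and $\theta^{t+1}_{\mathrm{real}}=\theta^t+\sum_{i\in H}P_i^t\Delta_i^t+\sum_{j\in B}P_j^t\Delta_j^t$, i.e.\ unnormalized aggregation of differences around $\theta^t$, so your first option is exactly its convention and the fallback is not needed.
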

\begin{proof}
First, we recall that the ideal and real aggregated updates are:
\[
\theta^{t+1}_{\mathrm{ideal}} = \theta^t + \sum_{i \in H} \Delta_i^t
\]
\[
\theta^{t+1}_{\mathrm{real}} = \theta^t + \sum_{i \in H} P^t_i \Delta_i^t + \sum_{j \in B} P^t_j \Delta_j^t
\]
So,
\[
\delta_t = \sum_{i \in H} (P^t_i - 1)\Delta_i^t + \sum_{j \in B} P^t_j \Delta_j^t
\]
Then, we apply the triangle inequality,
\[
\|\delta_t\| \le
\left\| \sum_{i \in H} (P^t_i - 1)\Delta_i^t \right\|
+ \left\| \sum_{j \in B} P^t_j \Delta_j^t \right\|
\]
For honest clients, we obtain the following using Assumption 2; $\|\Delta_i^t\| \le G$:
\[
\left\| \sum_{i \in H} (P^t_i - 1)\Delta_i^t \right\|
\le \sum_{i \in H} |P^t_i - 1|\,\|\Delta_i^t\|
\le G \sum_{i \in H} |P^t_i - 1|
\]
Similarly, we obtain for Byzantine clients using Assumption 5; $\norm{\Delta^t_j} \leq D_{\text{byz}}$:
\[
\left\| \sum_{j \in B} P^t_j \Delta_j^t \right\|
\le \sum_{j \in B} |P^t_j|\,\|\Delta_j^t\|
\le D_{\text{byz}} \sum_{j \in B} |P^t_j|
\]
Then, we apply the Jensen's inequality to the conditional expectation as follows:
\begin{align*}
\mathbb{E}[\|\delta_t\| \mid \theta^t] \le & G \sum_{i \in H} \mathbb{E}[|P^t_i - 1|] + D_{\text{byz}} \sum_{j \in B} \mathbb{E}[|P^t_j|]
\end{align*}
Finally, using the bounds $\mathbb{E}[|P^t_i - 1|] \le \mathbb{E}[|P^t_i - \mu_h|] + |1-\mu_h| \le \sigma_h + |1-\mu_h|$ and $\mathbb{E}[|P^t_j|] \le \mathbb{E}[|P^t_j - \mu_b|] + |\mu_b| \le \sigma_b + \mu_b$ (assuming $\mu_b \ge 0$), we get:
\begin{align*}
\mathbb{E}[\|\delta_t\| \mid \theta^t] \le & G \sum_{i \in H} \left( |1 - \mu_h| + \sigma_h \right) + D_{\text{byz}} \sum_{j \in B} \left( \mu_b + \sigma_b \right) \\
= & n_h G \left( |1 - \mu_h| + \sigma_h \right) + n_b D_{\text{byz}} \left( \mu_b + \sigma_b \right)
\end{align*}
This corresponds to the stated bound.
\end{proof}

\begin{lemma}[Variance of the Filtering Error]
Under the same assumptions, the second moment of the filtering noise satisfies
\[
\mathbb{E}[\|\delta_t\|^2 \mid \theta^t] \le E_{\mathrm{filter}}^{(2)},
\]
where
\[
E_{\mathrm{filter}}^{(2)} =
2n_h^2\bigl[(1-\mu_h)^2 + \sigma_h^2\bigr]G^2
+ 2n_b^2(\mu_b^2 + \sigma_b^2)D_{\text{byz}}^2
\]
\end{lemma}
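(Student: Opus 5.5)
We reuse the decomposition from the proof of the expected filtering error lemma above:
\[
\delta_t = \underbrace{\sum_{i \in H} (P^t_i - 1)\Delta_i^t}_{=:U_t} + \underbrace{\sum_{j \in B} P^t_j \Delta_j^t}_{=:V_t}.
\]
The first step applies the elementary inequality $\|a+b\|^2 \le 2\|a\|^2 + 2\|b\|^2$, which gives $\|\delta_t\|^2 \le 2\|U_t\|^2 + 2\|V_t\|^2$. This step produces the factor $2$ in both terms of $E^{(2)}_{\mathrm{filter}}$. We then bound the honest and Byzantine contributions separately.

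For the honest part, the triangle inequality and Assumption~2 give
\[
\|U_t\| \le G \sum_{i\in H} |P^t_i - 1|.
\]
Squaring and using Cauchy--Schwarz, $\bigl(\sum_{i\in H} a_i\bigr)^2 \le n_h \sum_{i\in H} a_i^2$, we obtain
\[
\|U_t\|^2 \le n_h G^2 \sum_{i\in H} (P^t_i-1)^2 .
\]
Taking conditional expectations and using the bias--variance identity gives
\[
\mathbb{E}[(P^t_i - 1)^2] = \mathbb{V}[P^t_i] + (\mathbb{E}[P^t_i]-1)^2 = \sigma_h^2 + (1-\mu_h)^2 .
\]
Hence
\[
\mathbb{E}[\|U_t\|^2 \mid \theta^t] \le n_h^2 G^2\bigl[(1-\mu_h)^2+\sigma_h^2\bigr].
\]
The Byzantine part is handled the same way, using Assumption~3 ($\|\Delta_j^t\|\le D_{\text{byz}}$) and $\mathbb{E}[(P^t_j)^2] = \mu_b^2 + \sigma_b^2$. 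This yields
\[
\mathbb{E}[\|V_t\|^2 \mid \theta^t] \le n_b^2 D_{\text{byz}}^2(\mu_b^2+\sigma_b^2).
\]
Multiplying both bounds by $2$ and summing gives exactly $E^{(2)}_{\mathrm{filter}}$.

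The main point needing care is the interplay between the conditioning and the filter statistics. The quantities $\mu_h,\sigma_h^2,\mu_b,\sigma_b^2$ are defined as moments of $P_i$ given membership in $H$ or $B$. We must interpret them as (upper bounds on) the conditional moments given $\theta^t$, uniformly over $i$ and $t$. The proof of the expected filtering error lemma above implicitly makes the same interpretation, and we adopt it here explicitly. A second subtlety is that $\Delta_i^t$ and $P^t_i$ are random and correlated, since $P^t_i$ is computed from $\Delta_i^t$. We avoid any independence assumption by using only the deterministic norm bounds $G$ and $D_{\text{byz}}$ inside the expectation. Because the Cauchy--Schwarz step is also deterministic, it is crude (giving $n_h^2$ rather than $n_h$) but requires no independence across workers, which is consistent with the stated constant.
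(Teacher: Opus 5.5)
Your proposal is correct and follows essentially the same route as the paper: the split $\|a+b\|^2\le 2\|a\|^2+2\|b\|^2$, the bound $\|\sum_{i\in H} v_i\|^2\le n_h\sum_{i\in H}\|v_i\|^2$ (which you obtain via the triangle inequality plus scalar Cauchy--Schwarz), the deterministic norm bounds $G$ and $D_{\text{byz}}$, and the bias--variance identity for $\mathbb{E}[(P_i^t-1)^2]$ and $\mathbb{E}[(P_j^t)^2]$. Your remark that $\mu_h,\sigma_h^2,\mu_b,\sigma_b^2$ must be read as moments conditional on $\theta^t$ makes explicit an interpretation the paper leaves implicit; it does not change the method.
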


\begin{proof}
We Start from
\[
\delta_t = \sum_{i \in H} (P^t_i - 1)\Delta_i^t + \sum_{j \in B} P^t_j \Delta_j^t
\]
And use $\|a+b\|^2 \le 2(\|a\|^2 + \|b\|^2)$:
\begin{align*}
\mathbb{E}[\|\delta_t\|^2 \mid \theta^t]
&\le 2\,\mathbb{E}\!\left[\left\| \sum_{i \in H} (P^t_i - 1)\Delta_i^t \right\|^2\right] \\
&\quad + 2\,\mathbb{E}\!\left[\left\| \sum_{j \in B} P^t_j \Delta_j^t \right\|^2\right]
\end{align*}
We consider the first term. Using $(\sum_i x_i)^2 \le n_h \sum_i x_i^2$ and Assumption~2, we get:
\[
\begin{aligned}
\mathbb{E}\!\left[\left\| \sum_{i \in H} (P^t_i - 1)\Delta_i^t \right\|^2\right]
&\le n_h \sum_{i \in H} \mathbb{E}\big[\|(P^t_i - 1)\Delta_i^t\|^2\big] \\
&\le n_h G^2 \sum_{i \in H} \mathbb{E}[|P^t_i - 1|^2] \\
&= n_h^2 G^2\bigl[(1-\mu_h)^2 + \sigma_h^2\bigr]
\end{aligned}
\]
Similarly, we get for Byzantine workers using Assumption~3:
\[
\mathbb{E}\!\left[\left\| \sum_{j \in B} P^t_j \Delta_j^t \right\|^2\right]
\le n_b^2 D_b^2 (\mu_b^2 + \sigma_b^2)
\]
Finally, we combine both parts and obtain:
\[
\mathbb{E}[\|\delta_t\|^2 \mid \theta^t]
\le 2n_h^2\bigl[(1-\mu_h)^2 + \sigma_h^2\bigr]G^2
+ 2n_b^2(\mu_b^2 + \sigma_b^2)D_b^2
\]
\end{proof}

\begin{theorem}[Convergence with Filtering and Base Error]
Let the above assumptions hold, and let $\eta \le \frac{1}{2L}$. The iterations $\theta^t$ of the filtered FedAvg satisfy:
\begin{align*}
\E[F(\theta^{T})] - F(\theta^{*}) &\le (1-\eta\mu)^{T}(F(\theta^{\text{init}})-F(\theta^{*})) \\
& \quad + \frac{1}{\eta\mu}\Big( C_{g}(E_{\mathrm{base}} + E_{filter}) \\
& \qquad \qquad + 2L(E_{\mathrm{base}}^{(2)} + E_{filter}^{(2)}) \Big)
\end{align*}
\end{theorem}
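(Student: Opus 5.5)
We follow the standard one-step descent argument for smooth, strongly convex objectives and treat the filtering noise as an additive perturbation of the ideal FedAvg step. We write the real iterate as
\[
\theta^{t+1} = \theta^t - \eta\nabla F(\theta^t) + \mathbf{B}_t + \delta_t ,
\]
which follows from the definition of the base error $\mathbf{B}_t$ (Assumption~5) together with the definition of $\delta_t$ in Lemma~\ref{lemma:expected_error}. This implicitly identifies the real and ideal starting points at round $t$, as the Lemmas do. We set $e_t := \mathbf{B}_t + \delta_t$, so that the displacement is $\theta^{t+1}-\theta^t = -\eta\nabla F(\theta^t) + e_t$.

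\textbf{One-step bound.} By $L$-smoothness,
\[
F(\theta^{t+1}) \le F(\theta^t) + \langle \nabla F(\theta^t), \theta^{t+1}-\theta^t\rangle + \tfrac{L}{2}\norm{\theta^{t+1}-\theta^t}^2 .
\]
We expand the inner product as $-\eta\norm{\nabla F(\theta^t)}^2 + \langle \nabla F(\theta^t), e_t\rangle$. We bound the cross term by Cauchy--Schwarz and Assumption~4, giving $\langle \nabla F(\theta^t), e_t\rangle \le C_g(\norm{\mathbf{B}_t}+\norm{\delta_t})$. For the quadratic term we use $\norm{a+b}^2\le 2\norm{a}^2+2\norm{b}^2$ twice:
\[
\norm{-\eta\nabla F + e_t}^2 \le 2\eta^2\norm{\nabla F}^2 + 4(\norm{\mathbf{B}_t}^2+\norm{\delta_t}^2).
\]
With $\eta\le \frac{1}{2L}$, the coefficient of $\norm{\nabla F(\theta^t)}^2$ becomes $-\eta + L\eta^2 \le -\eta/2$.

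\textbf{Taking expectations.} We take conditional expectations given $\theta^t$ and invoke Assumption~6 together with Lemmas~\ref{lemma:expected_error} and~\ref{lemma:variance_error}. This gives
\[
\E[F(\theta^{t+1})\mid\theta^t] \le F(\theta^t) - \tfrac{\eta}{2}\norm{\nabla F(\theta^t)}^2 + C_g(E_{\mathrm{base}}+E_{filter}) + 2L(E^{(2)}_{\mathrm{base}}+E^{(2)}_{filter}).
\]

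\textbf{Main obstacle: the contraction factor.} The PL inequality implied by $\mu$-strong convexity, $\norm{\nabla F}^2 \ge 2\mu(F-F^*)$, turns $-\tfrac{\eta}{2}\norm{\nabla F}^2$ into $-\eta\mu(F-F^*)$, which is exactly the $(1-\eta\mu)$ factor in the statement. The difficulty is the constant bookkeeping in front of the variance terms. A careless split produces $2L$ with an extra factor, and the cleanest split yields the stated $2L$ only if one does not lose the factor $\tfrac12$ from $\tfrac{L}{2}$. If the constants do not match exactly, I would first adjust the Young split, using $\norm{a+b}^2\le(1+\alpha)\norm{a}^2+(1+1/\alpha)\norm{b}^2$ with $\alpha=1$. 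Failing that, I would accept a slightly larger numerical constant and note it.

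\textbf{Unrolling.} Subtracting $F(\theta^*)$ and taking total expectation gives the recursion
\[
a_{t+1} \le (1-\eta\mu)a_t + K ,
\]
where $K$ denotes the bracketed error constant. Unrolling over $T$ rounds and summing the geometric series $\sum_{s<T}(1-\eta\mu)^s \le \frac{1}{\eta\mu}$ yields the claimed bound, with $\theta^0=\theta^{\text{init}}$.
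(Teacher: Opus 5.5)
Your proposal is correct and follows the paper's proof essentially step for step: the decomposition $\theta^{t+1}-\theta^t=-\eta\nabla F(\theta^t)+\mathbf{B}_t+\delta_t$, Cauchy--Schwarz with $C_g$ on the cross term, the double split $\norm{a+b}^2\le 2\norm{a}^2+2\norm{b}^2$ on the quadratic term, $\eta\le\frac{1}{2L}$ together with the PL inequality from strong convexity, and geometric unrolling. Your worry about the constants is unfounded: the split you already wrote gives $\frac{L}{2}\cdot 4=2L$, exactly the stated coefficient, so no change to the Young parameter is needed.
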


\begin{proof}
We follow the standard descent lemma approach.

\textbf{Step 1: Apply Descent Lemma.}
Since $F$ is $L$-smooth, the Descent Lemma gives:
\begin{equation} \label{eq:smoothness}
F(\theta^{t+1}_{\mathrm{real}}) \le F(\theta^{t}) + \langle\nabla F(\theta^{t}), \theta^{t+1}_{\mathrm{real}}-\theta^{t}\rangle + \frac{L}{2}\norm{\theta^{t+1}_{\mathrm{real}}-\theta^{t}}^2
\end{equation}
Note that this inequality holds regardless of the nature of $\theta^t$ (real or ideal).

\textbf{Step 2: Decompose the update.}
The real update step $\theta^{t+1}_{\mathrm{real}} - \theta^{t}$ can be decomposed into the ideal gradient step and the two error terms (Base Error $\mathbf{B}_t$ and Filtering Error $\delta_t$).
By definition, $\theta^{t+1}_{\mathrm{real}} - \theta^t = (\theta^{t+1}_{\mathrm{ideal}} - \theta^t) + \delta_t$.
From the definition of $\mathbf{B}_t$ (Assumption 3), we have $(\theta^{t+1}_{\mathrm{ideal}} - \theta^t) = \mathbf{B}_t - \eta\nabla F(\theta^t)$.
Substituting this in gives the full decomposition:
\begin{equation} \label{eq:progress}
\theta^{t+1}_{\mathrm{real}} - \theta^{t} = -\eta\nabla F(\theta^{t}) + \mathbf{B}_t + \delta_t
\end{equation}

\textbf{Step 3: Bound the inner product term.}
Substitute \eqref{eq:progress} into the inner product term of \eqref{eq:smoothness} and take the conditional expectation given $\theta^t$:
\begin{align*}
\E[\langle\nabla F(\theta^{t}), &\theta^{t+1}_{\mathrm{real}}-\theta^{t}\rangle | \theta^t] \\
&= \E[\langle\nabla F(\theta^{t}), -\eta\nabla F(\theta^{t}) + \mathbf{B}_t + \delta_t \rangle | \theta^t] \\
&= -\eta\norm{\nabla F(\theta^{t})}^2 + \langle\nabla F(\theta^{t}), \E[\mathbf{B}_t | \theta^t]\rangle \\
&\quad + \langle\nabla F(\theta^{t}), \E[\delta_t | \theta^t]\rangle
\end{align*}
Apply Cauchy-Schwarz ($\langle a, b \rangle \le \norm{a}\norm{b}$) to both error terms:
\begin{align*}
&\le -\eta\norm{\nabla F(\theta^{t})}^2 + \norm{\nabla F(\theta^{t})} \norm{\E[\mathbf{B}_t | \theta^t]} \\
&\quad + \norm{\nabla F(\theta^{t})} \norm{\E[\delta_t | \theta^t]}
\end{align*}
Apply Jensen's inequality ($\norm{\E[X]} \le \E[\norm{X}]$) to the filtering term:
\begin{align*}
&\le -\eta\norm{\nabla F(\theta^{t})}^2 + \norm{\nabla F(\theta^{t})} \norm{\E[\mathbf{B}_t | \theta^t]} \\
&\quad + \norm{\nabla F(\theta^{t})} \E[\norm{\delta_t} | \theta^t]
\end{align*}
Now apply Assumptions 2, 4, and 5:
\begin{equation} \label{eq:inner_prod_bound}
\le -\eta\norm{\nabla F(\theta^{t})}^2 + C_g E_{\mathrm{base}} + C_g E_{filter}
\end{equation}

\textbf{Step 4: Bound the quadratic term.}
We bound the expected squared norm of the progress step \eqref{eq:progress}. We use the inequality $\norm{a+b}^2 \le 2\norm{a}^2 + 2\norm{b}^2$ repeatedly:
\begin{align}
\E[\norm{\theta^{t+1}_{\mathrm{real}}}&{-\theta^{t}}^2 | \theta^t] \notag \\
&= \E[\norm{(-\eta\nabla F(\theta^{t})) + (\mathbf{B}_t + \delta_t)}^2 | \theta^t] \notag \\
&\le 2\E[\norm{-\eta\nabla F(\theta^{t})}^2] + 2\E[\norm{\mathbf{B}_t + \delta_t}^2 | \theta^t] \notag \\
&\le 2\eta^2\norm{\nabla F(\theta^{t})}^2 + 2(2\E[\norm{\mathbf{B}_t}^2] + 2\E[\norm{\delta_t}^2]) \notag \\
&\le 2\eta^2\norm{\nabla F(\theta^{t})}^2 + 4E_{\mathrm{base}}^{(2)} + 4E_{\mathrm{filter}}^{(2)} \label{eq:quadratic_bound}
\end{align}

\textbf{Step 5: Combine bounds.}
Substitute \eqref{eq:inner_prod_bound} and \eqref{eq:quadratic_bound} into the conditional expectation of \eqref{eq:smoothness}:
\begin{align*}
\E[F(\theta^{t+1}_{\mathrm{real}}) &| \theta^t] \le F(\theta^{t}) - \eta\norm{\nabla F(\theta^{t})}^2 \\
&\quad + C_g E_{\mathrm{base}} + C_g E_{filter} \\
&\quad + \frac{L}{2}\left(2\eta^2\norm{\nabla F(\theta^{t})}^2 + 4E_{\mathrm{base}}^{(2)} + 4E_{\mathrm{filter}}^{(2)}\right) \\
&= F(\theta^{t}) - \eta(1 - L\eta)\norm{\nabla F(\theta^{t})}^2 \\
&\quad + C_g(E_{\mathrm{base}} + E_{filter}) \\
&\quad + 2L(E_{\mathrm{base}}^{(2)} + E_{\mathrm{filter}}^{(2)})
\end{align*}
Subtract $F(\theta^*)$ from both sides:
\begin{align} \label{eq:combined_ineq}
\E[F(\theta^{t+1}_{\mathrm{real}}) - F^* | \theta^t] &\le (F(\theta^{t}) - F^*) \\
&\quad - \eta(1 - L\eta)\norm{\nabla F(\theta^{t})}^2 \notag \\
&\quad + C_g(E_{\mathrm{base}} + E_{filter}) \notag \\
&\quad + 2L(E_{\mathrm{base}}^{(2)} + E_{\mathrm{filter}}^{(2)}) \notag
\end{align}

\textbf{Step 6: Use strong convexity and assumptions.}
Since $\eta \le 1/(2L)$, we have $(1-L\eta) \ge 1/2$.
By $\mu$-strong convexity (Assumption 1), $\norm{\nabla F(\theta^{t})}^2 \ge 2\mu(F(\theta^{t})-F^*)$.
Combining these:
\[
-\eta(1-L\eta)\norm{\nabla F(\theta^{t})}^2 \le -\frac{\eta}{2}\norm{\nabla F(\theta^{t})}^2
\]
\[
\le -\frac{\eta}{2}(2\mu(F(\theta^{t})-F^*)) = -\eta\mu(F(\theta^{t})-F^*)
\]
Substitute this into \eqref{eq:combined_ineq}:
\begin{align*}
\E[F(\theta^{t+1}_{\mathrm{real}}) &- F^* | \theta^t] \le (1-\eta\mu)(F(\theta^{t}) - F^*) \\
&\quad + C_g(E_{\mathrm{base}} + E_{filter}) \\
&\quad + 2L(E_{\mathrm{base}}^{(2)} + E_{\mathrm{filter}}^{(2)})
\end{align*}

\textbf{Step 7: Unroll the recursion.}
Let $R$ be the residual error constant:
\[
R = C_g(E_{\mathrm{base}} + E_{filter}) + 2L(E_{\mathrm{base}}^{(2)} + E_{\mathrm{filter}}^{(2)})
\]
Take the total expectation (which is valid by the tower property and because $R$ is a constant upper bound):
\[
\E[F(\theta^{t+1}_{\mathrm{real}}) - F^*] \le (1-\eta\mu)\E[F(\theta^{t}) - F^*] + R
\]
Let $\mathcal{E}_t = \E[F(\theta^{t}) - F^*]$. The recursion is $\mathcal{E}_{t+1} \le (1-\eta\mu)\mathcal{E}_t + R$.
Unrolling this recursion from $t=0$ (init) to $t=T-1$:
\begin{align*}
\mathcal{E}_T &\le (1-\eta\mu)^T \mathcal{E}_0 + R \sum_{k=0}^{T-1} (1-\eta\mu)^k \\
&\le (1-\eta\mu)^T \mathcal{E}_0 + R \sum_{k=0}^{\infty} (1-\eta\mu)^k \\
&= (1-\eta\mu)^T \mathcal{E}_0 +  \frac{R}{1 - (1-\eta\mu)} \\
&= (1-\eta\mu)^T \left(F(\theta^{\text{init}})-F(\theta^*)\right) + \frac{R}{\eta\mu}
\end{align*}
Replacing $R$ with its definition gives the final result.
\end{proof}

\section{Single Filter Multiple Attacks}
\label{sec:appendix_single_filter_multiple_attacks}
We explore the feasibility of integrating multiple filtering capabilities into a unified filtering model, addressing the research question: \emph{Can a single SVM filtering model accurately defend against more than one Byzantine behavior?}. Our empirical results show that this is indeed achievable: a single SVM filter can accurately filter multiple Byzantine activities, preserving convergence without requiring attack-specific models.

\subsection{\textbf{Methodology}}
We aim to explore the defense against a combination of Byzantine activities, defined by properties $\{P_1, \dots, P_r\}$, using a single filtering model. To achieve this, we construct a combined dataset, $\mathcal{D}_{P_{\vee}}$, by merging the individual shadow datasets, $\{\mathcal{D}_{P_i}\}^r_{i=1}$, each representing a specific Byzantine activity $P_i$. Honest update differences, $\{\Delta_i\}$, are generated using genuine local SGD on unaltered data. Byzantine updates, $\{\bar{\Delta}\}$, are generated by applying the corresponding Byzantine behavior function, $\mathsf{DisHonestUpdate}_{P_j}(\cdot)$, for each $j \in \{1, \cdots, r\}$, and are labeled negatively (0). The resulting dataset,
$$
\mathcal{D}_{P_{\vee}} = \bigcup_{i=1}^{r} \mathcal{D}_{P_i}
$$
is used to train a classifier. This classifier identifies updates that exhibit \textbf{at least one} of the Byzantine behaviors in $\{P_1, \cdots, P_r\}$, corresponding to the logical OR. That is, $P_{\vee} = (P_1 \vee \cdots \vee P_r)$.

\subsection{Experimental Setup}
We construct a combined dataset by merging shadow updates from the gradient-ascent and label-shuffling attacks, and train a single SVM on this joint dataset. We evaluate its predictive performance in two ways:
(i) its ability to detect whether either Byzantine behavior occurs ($P_1 \cup P_2$), and
(ii) its ability to detect each behavior individually (predicting $P_1$ alone and $P_2$ alone).
We then compare these results against SVMs trained exclusively on a single attack type. Table \ref{tab:dual_property_performances} reports all performance metrics.

Subsequently, we conduct \fedavg{}-based federated learning experiments with our dual property filter on the server side in the following environment and metrics:
\begin{itemize}[leftmargin=*]
    \item{\textbf{Environment.}} This environment aims to simultaneously assess the detection capabilities of two types of attackers. For this purpose, we set 6/10 Byzantine workers, among whom 3 perform gradient-ascent attacks and the remaining 3 perform label-shuffling attacks.
    This setup induces three distinct worker groups, each with a separate objective. The goal of our filtering method is to ensure that the honest worker group successfully achieves its objective. That is genuine convergence.
    \item{\textbf{Metrics.}} We track the test accuracy on genuine data to assess the defense against both attacker groups,
\end{itemize}

\paragraph{\textbf{Findings}} 
Table \ref{tab:dual_property_performances} shows that a single SVM filter, trained on both label-shuffling and gradient-ascent samples, effectively detects both Byzantine behaviors with high $\mathsf{F1}$-scores (93.7\% for label-shuffling, 92.4\% for gradient-ascent), comparable to specialized SVMs (90.3\% and 94.1\%, respectively). Although the combined model uses 59 support vectors compared to 48 and 27 for the individual models, this is fewer than deploying two separate models (75 SVs total) and suggests the complexity of the separating hyperplane required for dual detection is not substantially increased.

\begin{table}[ht!]
\centering
\caption{Single-attack vs.\ dual-attack SVM filtering performance.}
\label{tab:dual_property_performances}
\small
\begin{tabular}{lc}
\toprule
\textbf{Model} & \textbf{Value} \\
\midrule
\multicolumn{2}{l}{\textbf{SVM 1: Label Shuffling}} \\
\quad Kernel degree & 1 \\
\quad Support vectors & 55 \\
\quad F1-score & 89.2\% \\
\quad Layer  & C2 \\

\midrule
\multicolumn{2}{l}{\textbf{SVM 2: Gradient Ascent}} \\
\quad Kernel degree & 1 \\
\quad Support vectors & 27 \\
\quad F1-score & 94.1\% \\
\quad Layer & Last FC \\

\midrule
\multicolumn{2}{l}{\textbf{Dual-attack SVM}} \\
\quad Kernel degree & 1 \\
\quad Support vectors & 83 \\
\quad F1 (Both) & 91.5\% \\
\quad F1 Gradient Ascent & 88.4\% \\
\quad F1 Label-shuffling & 86.1\% \\
\quad Layer & \textsf{Input} \\
\bottomrule
\end{tabular}
\end{table}

\begin{figure}
    \centering
    \includegraphics[width=0.9\linewidth]{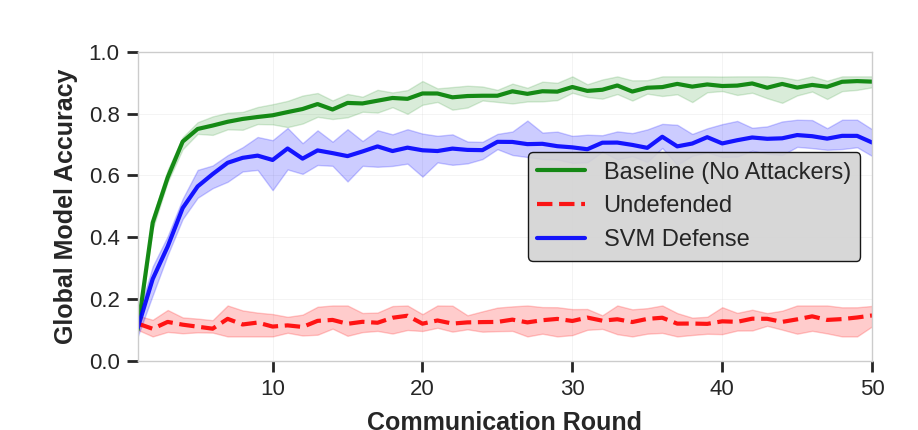}
    \caption{Convergence of our Byzantine filtering approach under two Byzantine groups with a single SVM filter.}
    \label{fig:combined_attack}
\end{figure}

\begin{figure}
    \centering
    \includegraphics[width=0.9\linewidth]{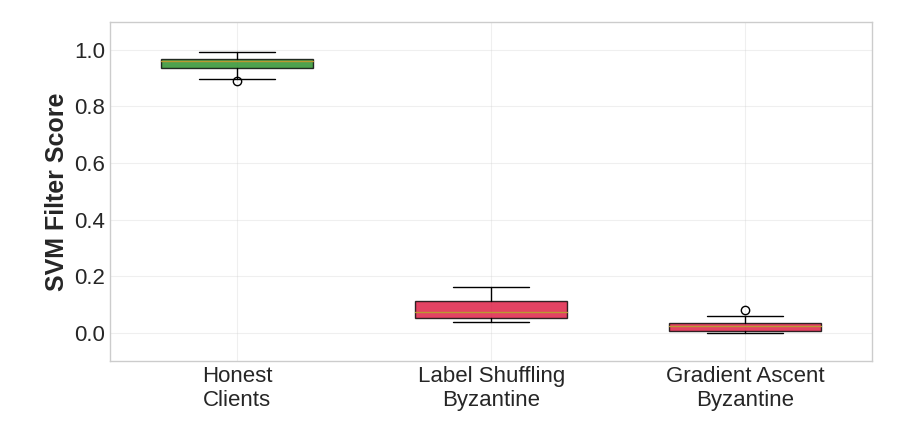}
    \caption{Filter values distribution on honest, label shuffling attackers, and gradient-ascent attackers.}
    \label{fig:combined_attack_filters}
\end{figure}

As shown in Figure \ref{fig:combined_attack}, the combined presence of gradient-ascent and label-shuffling attackers severely disrupts the convergence dynamics. Their consistent negation of the honest workers’ learning signal overwhelms the aggregation process, preventing the global model from approaching the desired optimum. When SVM-based filtering is applied, both Byzantine groups are effectively down-weighted, as illustrated in Figure \ref{fig:combined_attack_filters}. Thus, the global model’s test accuracy converges back toward the baseline, without adversarial participants.

\section{Filtering Differentially-Private Updates}
\label{sec:appendix_filter_perf_with_dp}

While secure aggregation techniques safeguard privacy during the training phase, they do not mitigate post-deployment black-box privacy attacks ~\cite{lyu2022privacy, shokri2017membership, rigaki2020survey}. To ensure strong privacy guarantees against inference attacks occurring after model deployment, differential privacy is the standard approach. As such, differentially-private training is often co-implemented with secure aggregation in federated learning to maintain privacy throughout the model’s entire lifecycle.  

Differential privacy \cite{10.1007/11787006_1} is a mathematical framework that defines, quantifies, and provides privacy at the level of individual data samples. Informally, it ensures that the output of a randomized analysis over a dataset is statistically similar regardless of whether or not any particular individual's data is included in the dataset. Formally, a randomized mechanism $\mathcal{M}$ satisfies $(\epsilon, \delta)$-differential privacy if, for any two neighboring datasets $D$ and $D'$ differing in exactly one sample, and for any of output distribution $S$:

$$
    \Pr[\mathcal{M}(D) \in S] \leq e^{\epsilon} \Pr[\mathcal{M}(D') \in S] + \delta
$$

Here, $\epsilon > 0$ controls the privacy loss, with smaller values ensuring stronger privacy, while $\delta$ represents the probability of violating the $\epsilon$ privacy guarantee.

The standard approach to achieve DP in SGD-based machine learning is via the DP-SGD algorithm by Abbadi \etal \cite{Abadi_2016}. DP-SGD operates by computing per-sample gradients, clipping them to limit their influence, averaging the clipped gradients, and finally adding Gaussian noise calibrated to the desired privacy level before updating the model with the resulting gradient vector. The clipping step constrains the \emph{sensitivity} of the training process, thereby limiting the effect of any single data point on the model’s optimization trajectory. Meanwhile, the addition of Gaussian noise ensures that the overall procedure satisfies $(\epsilon, \delta)$-DP by leveraging the Gaussian mechanism \cite{Dwork2006} once sensitivity is bounded.

While differential privacy offers strong privacy guarantees against attacks targeting individual samples, such as membership or attribute inference, it does not defend against property inference attacks \cite{rigaki2020survey, melis2018exploiting}. This is because PIAs infer global patterns (properties) in the dataset rather than specific individual-related information, hence falling outside the scope of DP. Indeed, the idea of \emph{property} is not captured by the notions of \emph{neighboring datasets} nor \emph{sensitivity}. Therefore, since our filtering method follows the white-box PIA blueprint \cite{parisot2021propertyinferenceattacksconvolutional}, it is expected to remain effective when applied to differentially-private and encrypted model updates.

\subsection{Experimental Setup}
We assess the filtering capabilities of our pre-trained SVM filters—trained on non-DP shadow updates—when applied to shadow updates produced via DP-SGD. Specifically, we generate 300 shadow updates following the procedure in Algorithm \ref{alg:filtering_data_generation}, replacing the standard $\mathsf{LocalUpdate}(\cdot)$, for both honest and Byzantine, with a differentially private variant that uses DP-SGD. Among these 300 updates, 100 are produced with weak privacy guarantees ($\epsilon = 5.0$), another 100 with moderate privacy guarantees ($\epsilon = 1.0$), and the remaining 100 with strong privacy guarantees ($\epsilon = 0.1$). SPCA+SVM inference is performed on each group of these shadow-updates, and the results are reported in Figure \ref{fig:filters_under_dp}.

\subsection{Findings}
While the practical DP levels commonly used in FL—combined with secure aggregation—such as $\epsilon = 1.0$ incur little to no degradation in filtering performance compared to the weak-privacy setting ($\epsilon = 5.0$) or the non-private baselines discussed in Section \ref{sec:experimental}, the strong-privacy regime introduces substantial distortions for both honest and Byzantine shadow updates. This behavior is expected. Indeed, enforcing strong privacy requires aggressive gradient clipping, which significantly alters the geometry of the updates. As a consequence, the discriminative signal captured by the SPCA components is weakened, reducing the separability between honest and adversarial directions. In other words, when gradients are heavily clipped and noised, the features that our SVM filters rely on are suppressed, ultimately limiting the effectiveness of the filtering mechanism under stringent privacy constraints.

\begin{figure}[ht!]
    \centering
    \captionbox*{\small{(a) Backdoor: \GTSRB{} VGG11}}{\includegraphics[width=0.35\textwidth]{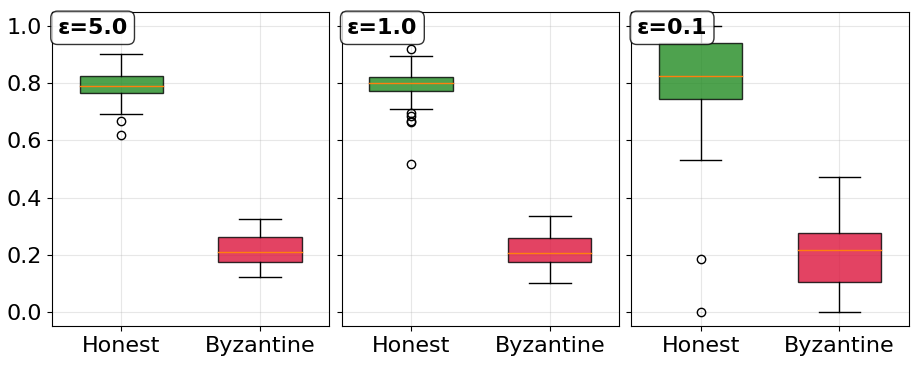}}
    \hfill 
    \captionbox*{\small{(b) Grad-ascent: \cifar{}{} ResNet-14}}{\includegraphics[width=0.35\textwidth]{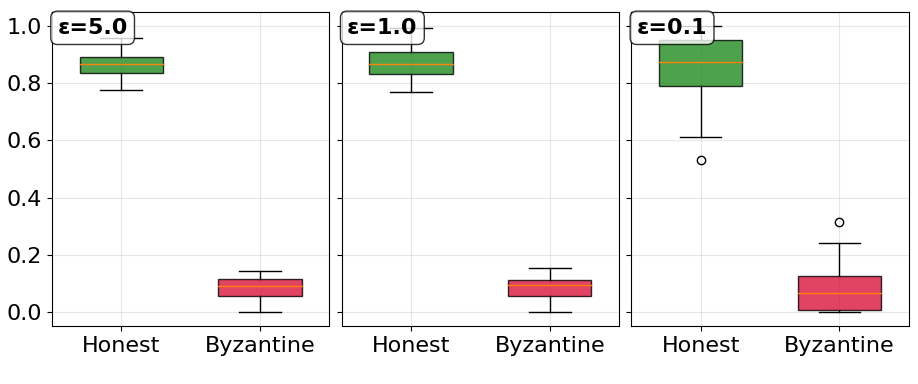}}
    \hfill 
    \captionbox*{\small{(c) Label-flipping: \acsincome{} 3-MLP}}{\includegraphics[width=0.35\textwidth]{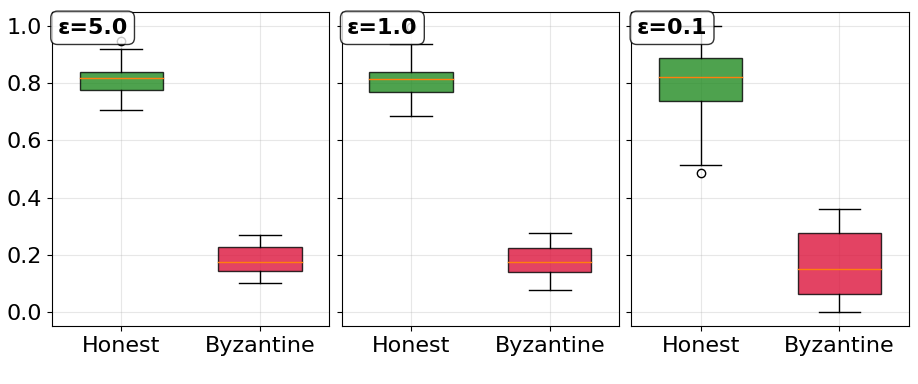}}
    \captionbox*{\small{(c) Label-shuffling: \femnist{} LeNet-5}}{\includegraphics[width=0.35\textwidth]{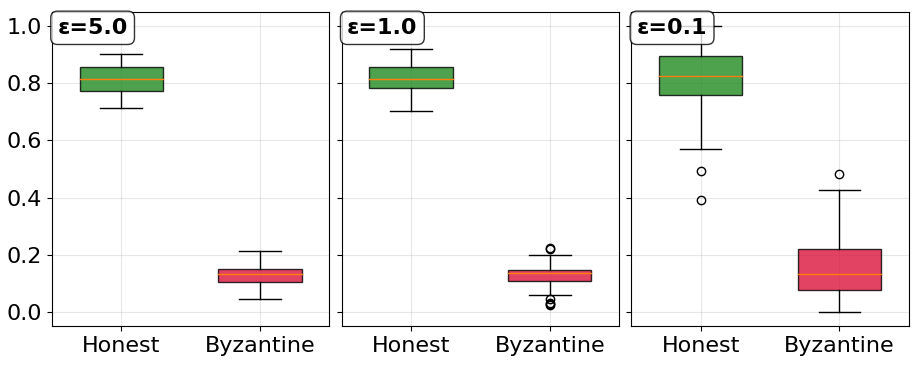}}
    
    \caption{SVM filters performance on differentially-private shadow updates with three levels of DP.}
    \label{fig:filters_under_dp}
\end{figure}

\end{document}